\newtheorem{theorem}{Theorem}[section]
\newtheorem{lemma}[theorem]{Lemma}
\newtheorem{example}[theorem]{Example}
\newtheorem{proposition}[theorem]{Proposition}
\theoremstyle{definition}
\newtheorem{definition}[theorem]{Definition}
\numberwithin{equation}{section}
\def\cX{\mathcal X}
\def\cY{\mathcal Y}
\def\cZ{\mathcal Z}
\def\RR{\mathbb R}
\def\NN{\mathbb N}
\def\ZZ{\mathbb Z}
\def\EE{\mathbb E}
\def\bE{\mathbf E}
\def\bP{\mathbf  P}
\def\bz{\mathbf z}
\def\px{\rho_{\!_X}}
\def\L2p{{L^2_{\rho_{\!_X}}}}
\def\H{\mathcal H}
\def\calE{\mathcal E}
\def\calH{\mathcal H}
\def\calR{\mathscr R}
\def\calS{\mathcal S}
\def\calA{\mathcal A}
\def\calN{\mathcal N}
\def\calF{\mathcal F}
\def\calP{\mathcal P}
\def\ri{{\rm i}}
\def\e{{\mathbf e}}
\def\id{\mathbf 1}
\def\dint{\displaystyle\int}
\def\dsum{\displaystyle\sum}
\title{Consistency Analysis of an Empirical Minimum Error Entropy Algorithm\thanks{
~The work described in this paper is supported by National Natural Science
Foundation of China under Grants (No.  11201079, 11201348, and 11101403) and by a
grant from the Research Grants Council of Hong Kong [Project No.
CityU 104012]. Jun Fan (junfan@stat.wisc.edu) is with the
Department of Statistics,
University of Wisconsin-Madison, 1300 University Avenue
Madison, WI 53706 USA. 
Ting Hu
(tinghu@whu.edu.cn) is with School of Mathematics and Statistics,
Wuhan University, Wuhan 430072, China. Qiang Wu
(qwu@mtsu.edu) is with Department of Mathematical Sciences,
Middle Tennessee State University, Murfreesboro, TN
37132, USA.
 Ding-Xuan
Zhou (mazhou@cityu.edu.hk) are with Department of Mathematics, City
University of Hong Kong, Kowloon, Hong Kong, China. }}
\author{Jun Fan, Ting Hu, Qiang Wu\thanks{Corresponding author. 
Tel: +1 615 898 2053; Fax: +1 615 898 5422; Email: qwu@mtsu.edu} 
and Ding-Xuan Zhou}
\date{}
\begin{document}
\maketitle

\begin{abstract} 
In this paper we study the consistency of an empirical minimum error entropy (MEE) 
algorithm in a regression setting.
We introduce two types of consistency. The error entropy consistency,
which requires the error entropy of the learned function
to approximate the minimum error entropy, is shown to be always true
if the bandwidth parameter tends to 0 at an appropriate rate. The
regression consistency, which requires the learned function
to approximate the regression function, however, is a complicated
issue. We prove that the error entropy consistency implies the
regression consistency for homoskedastic models where the noise is
independent of the input variable. But for heteroskedastic models, a
counterexample is used to show that the two types of consistency do not
coincide. A surprising result is that the regression consistency is
always true, provided that the bandwidth parameter tends to infinity
at an appropriate rate. 
%This result, however, contradicts the motivation
%of the MEE principle because the minimum error entropy is believed to lead to bad approximations with this choice of bandwidth parameter.
Regression consistency of two classes of special models 
is shown to hold with fixed bandwidth parameter, which further illustrates the complexity
of regression consistency of MEE. Fourier transform plays crucial roles in our analysis.

\end{abstract}
{\bf Keywords}: minimum error entropy, learning theory, R\'enyi's entropy, 
error entropy consistency, regression consistency

\section{Introduction}

Information theoretical learning (ITL) is an important research area
in signal processing and machine learning. It uses  concepts of
entropies and divergences from information theory to substitute the conventional
statistical descriptors of variances and covariances. The idea dates
back at least to \cite{Link88} while its blossom was inspired by a
series of works of Principe and his collaborators. In \cite{EP00} the minimum
error entropy (MEE) principle was introduced to regression problems.
Later on its computational properties were studied and its
applications in feature extraction, clustering, and blind source
separation were developed \cite{EP2002, GP2002, EHP2002, EP2003}.
More recently the MEE principle was applied to classification
problems \cite{SMA2005, SMA2010}. For a comprehensive survey and
more recent advances on ITL and the MEE principle, see \cite{Prin:2010}
and references therein.

The main purpose of this paper is rigorous consistency analysis of
an empirical MEE algorithm for regression. Note that the ultimate goal 
of regression problems is the prediction on unobserved data or forecasting the future.
Consistency analysis in terms of predictive powers  is deemed to be important 
to interpret  the effectiveness of a regression algorithm. 
The empirical MEE has been developed and successfully applied in various fields for more than a
decade and there are some theoretical studies in the literature which
provide good understanding of computational complexity of the empirical
MEE and its parameter choice strategy. However, 
the consistency of the MEE algorithm, especially from a prediction perspective, is lacking. 
In our earlier work \cite{HFWZ2012}, we proved the consistency of the MEE algorithm
in a special situation, where we require the algorithm utilizes a large bandwidth parameter.
The motivation of the MEE algorithm (to be describe below) is to minimize the error entropy which requires a 
small bandwidth parameter. The result in \cite{HFWZ2012} is somewhat contradictory to this motivation.
An interesting question is whether the MEE algorithm is consistent in terms of predictive powers 
if a small bandwidth parameter is chosen as implied by its motivation. Unfortunately,
this is not a simple `yes' or `no' question. Instead, the consistency of the MEE algorithm is a very complicated issue.
In this paper we will try to depict 
a full picture on it -- establishing the relationship between the
error entropy and a $L_2$ metric measuring the predictive powers, and providing conditions such that 
the MEE algorithm is predictively consistent.

%But they are not from asymptotic
%perspectives and cannot explain the effectiveness of the empirical
%MEE algorithm as the sample size gets large. In this paper we will
%analyze the algorithm from a statistical learning theory
%perspective. Our asymptotic analysis will help establish the
%consistency of the empirical MEE in several different situations. It
%turns out the consistency of the empirical MEE is a very complicated
%issue which explains its difficulty. Some analysis has been conducted in our
%earlier work \cite{HFWZ2012} for a special case (of large bandwidth parameter,
%mentioned later in more details).

%We will focus on a regression setting in learning theory. 
In statistics a regression problem is usually modeled as the
estimation of a target function $f^*$ from a metric space
$\cX$ to the another metric space $\cY\subset \RR$ for which a set of
observations $(x_i, y_i),\ i=1,\ldots, n$, are obtained from a model
\begin{equation} Y = f^*(X) + \epsilon, \qquad \bE (\epsilon|X)=0. \label{model} \end{equation}
In the statistical learning context \cite{Vapn:1998}, the regression
setting is usually described as the learning of the regression function
which is defined as conditional mean $\bE(Y|X)$ of the output
variable $Y$ for given input variable $X$ under the assumption that
there is an unknown joint probability measure $\rho$ on the product
space $\cX\times \cY.$  These two settings are equivalent by
noticing that
$$ f^*(x) = \bE(Y|X=x).$$
A learning algorithm for regression produces a function $f_\bz$ from
the observations $\bz=\{(x_i,y_i)\}_{i=1}^{n}$ as an approximation of $f^*$.
The goodness of this approximation can be measured by certain
distance between $f_\bz$ and $f^*$, for instance,
$\|f_\bz-f^*\|_{\L2p}$, the $L^2$ distance with respect to the
marginal distribution $\px$ of $\rho$ on $\cX$.

MEE algorithms for regression are motivated by minimizing some
entropies of the error random variable $E=E(f)=Y-f(X)$,
where $f:X\to \RR$ is a hypothesis function. In this paper we
focus on the R\'enyi's entropy of order $2$ defined as
\begin{eqnarray}\label{Renyi}
\calR(f) = -\log\left(\bE [p_{_E}]\right)= -\log\left(\int_{\RR} (p_{_E}(e))^2\,de\right).
\end{eqnarray}
Here and in the sequel, $p_{_E}$ is the probability density function
of $E.$ since $\rho$ is unknown, we need an empirical estimate of $p_E$. 
Denote $e_i=y_i-f(x_i)$. Then $p_{_E}$ can be estimated from
the sample $\bz$ by a kernel density estimator by using a Gaussian kernel
$G_h(t)=\frac 1{\sqrt{2\pi} h } \exp\left(-\frac{t^2}{2h^2}\right)$ with
bandwidth parameter $h$:
$$ p_{\!_{E,\bz}}(e) = \frac 1{n}\sum_{j=1}^n G_h(e-e_j) =
\frac 1{n} \sum_{j=1}^n  \frac 1{\sqrt{2\pi} h }
\exp\left(-\frac{(e-e_j)^2}{2h^2}\right).$$
The MEE algorithm produces an appropriate $f_\bz$ from a set $\H$ of continuous functions on $\cX$ called the hypothesis space by minimizing the
empirical version of the R\'enyi's entropy
$$ \calR_\bz(f) = -\log \left(\frac 1n \sum_{i=1}^n p_{\!_{E,\bz}} (e_i) \right)
= -\log\left(\frac 1{n^2}\sum_{i=1}^n \sum_{j=1}^n G_h(e_i-e_j) \right).$$ That
is, $f_\bz=\arg\min\limits_{f\in\H}\calR_\bz(f)$. It is obvious that
minimizers of $\calR$ and $\calR_\bz$ are not unique because
$\calR(f)=\calR(f+b)$ and $\calR_\bz(f)=\calR_\bz(f+b)$ for any
constant $b$. Taking this into account, $f_\bz$ should be adjusted
by a constant when it is used as an approximation of the
regression function $f^*.$

The empirical entropy $\calR_\bz(f)$ involves an empirical mean 
$\frac{1}{n}\sum_{i=1}^n p_{\!_{E,\bz}} (e_i) $ which makes it look like an M-estimator.
However, the density estimator $p_{E, \bz}$ itself is data dependent, 
making the MEE algorithm different from standard M-estimations, 
with two summation indices involved.
This can be seen from our earlier work \cite{HFWZ2012} 
where we used U-statistics for the error analysis in the case of large parameter $h.$

To study the asymptotic behavior of the MEE algorithm we define two types of consistency as follows:
\begin{definition} The MEE algorithm is {\bf{consistent with respect to the R\'enyi's error entropy}}
if $\calR(f_\bz)$ converges to $\calR^* =
\inf\limits_{f:\cX\rightarrow\RR} \calR(f)$ in probability as
$n\to\infty$, i.e.,
$$\lim_{n\to\infty}\bP\Big(\calR (f_\bz)-\calR^*>\varepsilon\Big)=0,
\qquad \forall\varepsilon > 0.$$
The MEE algorithm is {\bf{consistent with respect to the regression
function}} if $f_\bz$ plus a suitable constant adjustment  converges
to $f^*$ in probability with the convergence measured in the $\L2p$
sense, i.e., there is a constant $b_\bz$ such that $f_\bz+b_\bz$
converges to $f^*$ in probability, i.e.,
$$\lim_{n\to\infty}\bP\Big(\|f_\bz+b_\bz-f^*\|_\L2p^2>\varepsilon\Big)=0,\qquad \forall\varepsilon > 0.$$
\end{definition}

Note that the error entropy consistency ensures the learnability of the
minimum error entropy, as is expected from the motivation of
empirical MEE algorithms. However, the error entropy itself is 
not a metric that directly measures the predictive powers of the algorithm. 
(We assume that  a metric $d$ measuring the predictive powers should be monotone
in the sense that $|E(f_1)|\le |E(f_2)|$ implies $d(f_1)\le d(f_2).$ Error entropy 
is clearly not such a metric.) 

To measure the predictive consistency, one may choose different metrics. In the definition of 
regression function consistency we have adopted the $L_2$ distance to the 
true regression function $f^*$, the target function of the regression problem. 
The regression consistency guarantees good approximations of the regression target function $f^*$
and thus serves as a good measure for predictive powers.
%These two types of consistency, however, are not necessarily coincident. % Instead, they may contradict each other.

Our main results,  stated in several theorems in Section 2 below, involve two main contributions. 
(i) We characterize the relationship between the error entropy consistency and regression consistency. 
We prove that the error entropy consistency implies the regression function consistency only
for very special cases, for instance, the homoskedastic models,
while in general this is not true. For heteroskedastic models, 
 a counterexample is used to show that the error entropy consistency
and regression consistency is not necessary to coincide.
(ii) We prove a variety of consistency results for the MEE algorithm. 
 Firstly we prove that the error entropy
consistency is always true by choosing the bandwidth parameter $h$
to tend to $0$ slowly enough. As a result, 
the regression function consistency holds for the homoskedastic models.
Secondly, for heteroskedastic models, regression consistency is shown to be incorrect if the 
bandwidth parameter is chosen to be small. 
But we restate the result from \cite{HFWZ2012}
which shows that the empirical MEE
is always consistent with respect to the regression function
if the bandwidth parameter is allowed to be chosen large enough. 
%This
%was observed in some earlier empirical work but clearly contradicts
%the motivation of MEE algorithms because Parzen windowing for the
%minimum error entropy does not lead to convergence without $h\to 0.$ 
Lastly, we consider two classes of special regression models for which the regression consistency can be true with fixed choices of the bandwidth parameter $h$. These results indicate that the consistency of the empirical MEE is a
very complicated issue. % and can only be well understood after extensive investigations.

\section{Main results}

We state our main results in this section while giving their proofs
later. We need to make some assumptions for analysis
purposes. Two main assumptions, on the regression model and the
hypothesis class respectively, will be used throughout the
paper.  

For the regression model, we assume some natural regularity conditions. 
\begin{definition}
The regression model \eqref{model} is {\bf{MEE admissible}}  if
\begin{enumerate}
\item the density function $p_{\epsilon|X}$ of the noise variable $\epsilon$ for given $X=x\in\cX$ exists and is uniformly bounded by a constant $M_p$;
\item the regression function $f^*$ is bounded by a constant $M>0;$
\item  the minimum of $\calR(f)$ is achieved by a measurable function $f^*_\calR$.
\end{enumerate}
\end{definition}

Note that we do not require the boundedness or exponential decay of the noise term as in the usual setting of learning theory. It is in fact an advantage of MEE to allow heavy tailed noises. 
Also, it is easy to see that if $f_\calR^*$ is a minimizer, then for any constant $b$,
$f_\calR^*+b$ is also a minimizer.  So we cannot assume the
uniqueness of $f^*_\calR$. Also, no obvious relationship
between $f^*$ and $f^*_\calR$ exists. Figuring out such a non-trivial relationship is
one of our tasks below. We also remark that some results below may hold under relaxed conditions, 
but for simplifying our statements, we will not discuss them in detail.

Our second assumption is on the hypothesis space which is
required to be a learnable class and have good approximation ability with respect to the target function.
\begin{definition}\label{hypospace} We say the hypothesis space $\calH$ is {\bf{MEE admissible}} if
\begin{enumerate}
\item $\calH$ is uniformly bounded, i.e., there is a constant $M$ such that $|f(x)|\le M$ for all $f\in\calH$ and
all $x\in\cX$;
\item the $\ell_{2}$-norm empirical cover number $\calN_2(\calH, \varepsilon)$ (see Appendix or \cite{Bart:Mend:2002, wu2009} for its definition) satisfies $\log\calN_2(\calH, \varepsilon)\le c\varepsilon^{-s}$
for some constant $c>0$ and some index $0<s<2$;
\item a minimizer $f_\calR^*$ of $\calR(f)$ and the regression function $f^*$ are in
$\calH$.
\end{enumerate}
\end{definition}
The first condition in Definition \ref{hypospace} is common in the literature and is natural since we do not expect to
learn unbounded functions. The second condition ensures $\calH$
is a learnable class so that overfitting will not happen. This is often imposed in learning theory. It is also easily
fulfilled by many commonly used function classes. The third condition
guarantees the target function can be well approximated by $\calH$
for otherwise no algorithm is able to learn the target function well
from $\calH$. Although this condition can be relaxed to that the
target function can be approximated by function sequences in
$\calH$, we will not adopt this relaxed situation for simplicity.

Throughout the paper, we assume that both the regression model \eqref{model} and 
the hypothesis space $\calH$ are MEE admissible. Our first main result is to verify the error entropy consistency.
\begin{theorem}\label{thm:entropyconsistency}
If the bandwidth parameter $h=h(n)$ is chosen to satisfy
\begin{equation}\label{echcond}
\lim_{n\to\infty} h(n)= 0, \qquad \lim_{n\to\infty} h^2\sqrt{n} =
+\infty,
\end{equation}
 then
$\calR (f_\bz)$ converges to $\calR^*$ in probability.

If, in addition, the derivative of $p_{\epsilon|X}$ exists and is uniformly bounded by a constant $M'$
independent of $X$, then by choosing $h(n)\sim n^{-\frac 16}$, for any $0<\delta<1$, with probability at least $1-\delta$, we have
\begin{eqnarray*}
\calR(f_\bz)-\calR^*=O(\sqrt{\log(2/\delta)}n^{-\frac16}).
\end{eqnarray*}
\end{theorem}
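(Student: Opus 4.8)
The plan is to run the classical ERM template with the smooth functional $W(f):=\int_{\RR}p_{E(f)}(e)^2\,de=\EE[p_{E(f)}]$ in place of the ``risk'', so that $\calR=-\log W$, and with empirical surrogate $W_\bz(f):=\frac1{n^2}\sum_{i,j}G_h(e_i-e_j)$, so that $\calR_\bz=-\log W_\bz$. Since $f_\bz$ minimizes $\calR_\bz$ over $\calH$ and $f^*_\calR\in\calH$ with $\calR(f^*_\calR)=\calR^*$, one has $0\le\calR(f_\bz)-\calR^*\le 2\sup_{f\in\calH}|\calR_\bz(f)-\calR(f)|$. To remove the logarithm I would prove a uniform two-sided bound $0<a\le W(f)\le M_p$ over $\calH$: the upper bound is immediate from $p_{E(f)}(e)=\int_{\cX}p_{\epsilon|X=x}\big(e-(f^*(x)-f(x))\big)\,d\px(x)\le M_p$, and for the lower bound I would pick $T$ with $\bP(|\epsilon|\le T-2M)\ge 3/4$ (possible because $\bP(|\epsilon|\le s)\to1$), so that $\bP(|E(f)|\le T)\ge 3/4$ for all $f\in\calH$ since $|f^*-f|\le 2M$, and then Cauchy--Schwarz on $\int_{-T}^{T}p_{E(f)}$ gives $W(f)\ge (3/4)^2/(2T)=:a$. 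On the event $\{\sup_{f\in\calH}|W_\bz(f)-W(f)|<a/2\}$ we have $W_\bz(f)\ge a/2$, hence $|\calR_\bz(f)-\calR(f)|\le \tfrac2a|W_\bz(f)-W(f)|$, and the theorem reduces to showing $\sup_{f\in\calH}|W_\bz(f)-W(f)|\to0$ in probability, with rate $O(n^{-1/6})$ under the extra smoothness.

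I would then split this supremum into a bias part $\sup_{f\in\calH}|\EE W_\bz(f)-W(f)|$ and a sample part $\sup_{f\in\calH}|W_\bz(f)-\EE W_\bz(f)|$. For the bias, $\EE W_\bz(f)=\tfrac{n-1}{n}\int_{\RR}G_h(u)\phi_f(u)\,du+\tfrac{G_h(0)}{n}$ where $\phi_f(u):=\int p_{E(f)}(e)p_{E(f)}(e+u)\,de$, and from $\phi_f(0)-\phi_f(u)=\tfrac12\|p_{E(f)}(\cdot+u)-p_{E(f)}(\cdot)\|_{L^2}^2$ together with Minkowski's integral inequality one gets $\|p_{E(f)}(\cdot+u)-p_{E(f)}(\cdot)\|_{L^2}\le\int_{\cX}\|p_{\epsilon|X=x}(\cdot+u)-p_{\epsilon|X=x}(\cdot)\|_{L^2}\,d\px(x)=:\beta(u)$, a bound independent of $f$ with $\beta(u)\to0$ as $u\to0$. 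Thus $\sup_{f\in\calH}|\EE W_\bz(f)-W(f)|\le\tfrac12\int G_h(u)\beta(u)^2\,du+\tfrac{M_p+G_h(0)}{n}$, which vanishes because $h\to0$ makes $G_h$ an approximate identity and $h^2\sqrt n\to\infty$ forces $nh\to\infty$. When $p_{\epsilon|X}'$ is bounded by $M'$, combining $|p_{E(f)}(e+u)-p_{E(f)}(e)|\le M'|u|$ with $|p_{E(f)}(e+u)-p_{E(f)}(e)|\le p_{E(f)}(e+u)+p_{E(f)}(e)$ gives $\phi_f(0)-\phi_f(u)\le M'|u|$, hence the explicit uniform bias bound $O(h)+O((nh)^{-1})$.

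For the sample part, the diagonal term $G_h(0)/n$ of $W_\bz$ is deterministic, so $W_\bz(f)-\EE W_\bz(f)=\tfrac{n-1}{n}(U_\bz(f)-\EE U_\bz(f))$ with $U_\bz(f):=\tfrac{1}{n(n-1)}\sum_{i\ne j}G_h(e_i-e_j)$ a $U$-statistic whose kernel has sup-norm $\|G_h\|_\infty=(\sqrt{2\pi}h)^{-1}$ and whose H\'ajek projection $e\mapsto(G_h*p_{E(f)})(e)$ is bounded by $M_p$, uniformly in $h$. I would cover $\calH$ by an $\eta$-net of cardinality $\calN_2(\calH,\eta)$, $\log\calN_2(\calH,\eta)\le c\eta^{-s}$; over the net a Bernstein-type inequality for $U$-statistics (variance proxy $O(1)$, range $O(1/h)$) plus a union bound controls $\max_i|U_\bz(f_i)-\EE U_\bz(f_i)|$ by $O\big(\sqrt{(\eta^{-s}+\log(1/\delta))/n}+(\eta^{-s}+\log(1/\delta))/(nh)\big)$, while within each $\eta$-ball the increments are controlled by the two Lipschitz bounds $|W_\bz(f)-W_\bz(f')|\le Ch^{-2}n^{-1/2}\|f-f'\|_{2,\bx}$ (differentiating $G_h$, whose sup-norm is $O(h^{-2})$, but gaining a factor $n^{-1/2}$ from the empirical average) and $|\EE W_\bz(f)-\EE W_\bz(f')|\le Ch^{-1}\|f-f'\|_{\L2p}$ (after taking expectations the $f$-dependence is smoothed against a bounded noise density, so only $\|G_h'\|_{L^1}=O(h^{-1})$ enters); one may also bring in McDiarmid's inequality, one sample point changing $W_\bz$ by at most $2\|G_h\|_\infty/n$, to replace the supremum by its mean up to $O(\sqrt{\log(1/\delta)}/(h\sqrt n))$. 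Optimizing $\eta$ to balance $\sqrt{\eta^{-s}/n}$ against $\eta/h$ and then taking $h\sim n^{-1/6}$, every sample-error contribution becomes $o(n^{-1/6})$ precisely because $0<s<2$, so the $O(h)=O(n^{-1/6})$ bias dominates and produces the claimed $O(\sqrt{\log(2/\delta)}\,n^{-1/6})$.

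The main obstacle is exactly this uniform control of the $U$-process with explicit dependence on the vanishing bandwidth: both $\|G_h\|_\infty$ and the Lipschitz constant of $f\mapsto G_h(e_i-e_j)$ blow up (like $h^{-1}$ and $h^{-2}$), and the delicate point is to arrange the concentration and chaining so that the worst $h^{-2}$ factor is only ever paired with an empirical-norm increment carrying a compensating $n^{-1/2}$, while expectations are treated with the milder $h^{-1}$ constant, and then to check that after the $\eta$-optimization all of this stays below the $O(h)$ bias for the whole range $0<s<2$. A secondary nuisance, resolved in the usual way by refining the net scale (or by passing between empirical and population covering numbers for the bounded class $\calH$), is reconciling the empirical $\ell_2$ metric of $\calN_2(\calH,\cdot)$ with the $L^2_{\px}$ norm appearing in the expectation estimates. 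The qualitative claim and the rate follow from the same chain of estimates, differing only in that the rate version upgrades ``$\beta(u)\to0$'' to the quantitative $O(h)$ bias bound, which is why the extra hypothesis on $p_{\epsilon|X}'$ is imposed.
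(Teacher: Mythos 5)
Your overall skeleton matches the paper's: drop the logarithm via two-sided bounds on $\int_\RR (p_{\!_E}(e))^2\,de$ over $\calH$ (your constants $a$ and $M_p$ play the role of the paper's $B_L$ and $B_U$), split the excess into a bias/approximation part that is $o(1)$ in general and $O(h)+O((nh)^{-1})$ under the bounded-derivative assumption, plus a uniform sample part, then take $h\sim n^{-1/6}$; your bias analysis via the autocorrelation $\phi_f$ and $L^2$-translation continuity is a sound variant of the paper's dominated-convergence argument. The genuine gap is in the sample part $\sup_{f\in\calH}|W_\bz(f)-\EE W_\bz(f)|$. The step your $\eta$-net optimization hinges on --- the increment bound $|W_\bz(f)-W_\bz(f')|\le C h^{-2}n^{-1/2}\,d_2\bigl((f(x_i))_{i},(f'(x_i))_{i}\bigr)$, with the factor $n^{-1/2}$ ``gained from the empirical average'' --- is false as stated: since $\|G_h'\|_\infty\sim h^{-2}$, the correct deterministic bound is only $C h^{-2}d_2$, and averaging non-centered nonnegative terms gains nothing. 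Without that spurious factor your balancing fails on part of the admissible capacity range: at $h\sim n^{-1/6}$ you need simultaneously $h^{-2}\eta\lesssim n^{-1/6}$ (so $\eta\lesssim n^{-1/2}$) and $\sqrt{c\eta^{-s}/n}\lesssim n^{-1/6}$ (so $\eta\gtrsim n^{-2/(3s)}$), compatible only when $s\le 4/3$; and even for the purely qualitative statement the constraints $\eta=o(h^2)$ and $\eta^{-s}=o(nh)$ cannot be guaranteed by \eqref{echcond} alone once $s\ge 3/2$. Rescuing your route would require genuine chaining, i.e.\ applying U-statistic concentration to the increments themselves (exploiting that the increment kernel has sup-norm $O(h^{-2}\eta)$ but much smaller variance), and also justifying the ``variance proxy $O(1)$'' at the net level, which needs the Hoeffding decomposition (the raw kernel has variance of order $h^{-1}$; only its projection $G_h*p_{\!_E}$ is bounded by $M_p$) plus a separate treatment of the degenerate part. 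None of this is supplied, so the central uniform estimate is unproved for the full range $0<s<2$.

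The paper sidesteps the difficulty: it bounds $\calS_\bz=\sup_{f\in\calH}|\calE_{h,\bz}(f)-\calE_h(f)|$ by McDiarmid's inequality (bounded differences of order $1/(nh)$, which supplies the deviation term $\sqrt{\log(1/\delta)}/(h\sqrt n)$) together with a bound on $\bE\calS_\bz$ via symmetrization and the Lipschitz contraction lemma, which strips the Gaussian kernel at the cost of a single factor of order $h^{-1}$ and leaves the Rademacher average of $\calH$ itself; the entropy integral with $\log\calN_2(\calH,\varepsilon)\le c\varepsilon^{-s}$, $0<s<2$, then gives a Rademacher average of order $n^{-1/2}$, hence $\bE\calS_\bz=O(h^{-2}n^{-1/2})$ uniformly in $s$ --- exactly matching the hypothesis $h^2\sqrt n\to\infty$ and yielding the $O(\sqrt{\log(2/\delta)}\,n^{-1/6})$ rate at $h\sim n^{-1/6}$ once combined with $\calA_h\le M'h$. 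If you wish to keep your decomposition, replacing the net-plus-Bernstein step by this symmetrization--contraction argument is the most direct repair.
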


In the literature of practical implementations of MEE, the optimal choice of $h$ is
suggested to be $h(n)\sim n^{-\frac 15}$ (see e.g.
\cite{Prin:2010}). We see this choice satisfies our condition for
the error entropy consistency. But the optimal rate analysis is out
of the scope of this paper.

The error entropy consistency in Theorem
\ref{thm:entropyconsistency} states the minimum error entropy can be
approximated with a suitable choice of the bandwidth parameter. This
is a somewhat expected result because empirical MEE algorithms are
motivated by minimizing the sample version of the error entropy risk
functional. However, later we will show that this does not
necessarily imply the consistency with respect to the regression
function. Instead, the regression consistency is a complicated
problem. Let us discuss it in two different situations.

%Firstly, we show that for a very special class of noise models the error entropy consistency does imply the regression consistency.
%\begin{definition} The regression model has symmetric unimodal noise if  $p_{\epsilon|x}$ is even and decreasing when $\epsilon>0.$
%\end{definition}

\begin{definition}
The regression mode \eqref{model} is {\bf{homoskedastic}} if the noise
$\epsilon$ is independent of $X$. Otherwise it is said to be
{\bf{heteroskedastic}}.
\end{definition}
Our second main result states the regression consistency for homoskedastic models.
\begin{theorem}\label{thm:homo}
If the regression model is homoskedastic, then the following holds.
\begin{enumerate}
\item  $\calR^*=\calR(f^*)$. As a result, for any constant $b$, $f_\calR^*=f^*+b$ is a minimizer of $\calR(f)$;
\item there is a constant $C$ depending on $\rho, \calH$ and M such that, for any measurable function $f$,
$$\|f+\bE(f^*-f)-f^*\|_\L2p^2 \le C\left(\calR(f)-\calR^*\right);$$
\item if \eqref{echcond} is true, then
$f_\bz + \bE_x(f^*-f_\bz)$ converges to $f^*$ in probability;
\item if, in addition, the derivative of $p_{\epsilon|X}$ exists and is uniformly bounded by a constant $M'$ independent of $X$, then the convergence rate of order $O(\sqrt{\log(2/\delta)}n^{-\frac 16})$
can be obtained with confidence $1-\delta$ for $\|f_\bz+\bE_x(f^*-f_\bz)-f^*\|_\L2p^2$ by choosing $h\sim n^{-\frac 16}$.
\end{enumerate}
\end{theorem}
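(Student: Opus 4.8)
The plan is to exploit the homoskedastic structure through the Fourier transform, as flagged in the abstract. Writing $g=f^*-f$, the error variable is $E(f)=\epsilon+g(X)$ with $\epsilon$ independent of $X$; hence the conditional density of $E(f)$ given $X=x$ is the shift $p_\epsilon(\cdot-g(x))$, the density of $E(f)$ is $p_E(e)=\bE_X[p_\epsilon(e-g(X))]$, and its Fourier transform factorizes as $\widehat{p_E}(\xi)=\widehat{p_\epsilon}(\xi)\,\phi_g(\xi)$, where $\phi_g(\xi)=\bE_X[e^{-\ri\xi g(X)}]$ is the characteristic function of $g(X)$. Plancherel's identity then gives $\int_\RR(p_E(e))^2\,de=\frac1{2\pi}\int_\RR|\widehat{p_\epsilon}(\xi)|^2\,|\phi_g(\xi)|^2\,d\xi$. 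Since $|\phi_g|\le1$ pointwise, with equality for every $\xi$ exactly when $g$ is almost surely constant, this integral is maximized precisely when $f=f^*+b$; as $\calR(f)=-\log\bigl(\int_\RR(p_E(e))^2\,de\bigr)$, this proves part (i).

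For part (ii) I first pass from the logarithm to a difference: using $\log x\ge 1-x^{-1}$ together with $\|p_\epsilon\|_2^2\le\|p_\epsilon\|_\infty\le M_p$,
\[
\calR(f)-\calR^*=\log\frac{\|p_\epsilon\|_2^2}{\|p_E\|_2^2}\ge\frac{\|p_\epsilon\|_2^2-\|p_E\|_2^2}{\|p_\epsilon\|_2^2}\ge\frac1{2\pi M_p}\int_\RR|\widehat{p_\epsilon}(\xi)|^2\bigl(1-|\phi_g(\xi)|^2\bigr)\,d\xi .
\]
The crux is a lower bound on $1-|\phi_g(\xi)|^2$ comparable to $\Var(g(X))$ near $\xi=0$. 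Using i.i.d.\ copies $X,X'$ one writes $1-|\phi_g(\xi)|^2=\bE_{X,X'}[1-\cos(\xi(g(X)-g(X')))]$; the elementary inequality $1-\cos t\ge\frac2{\pi^2}t^2$ for $|t|\le\pi$ applies once $|\xi|\le\pi/(4M)$ (as $|g|\le 2M$ forces $|\xi(g(X)-g(X'))|\le\pi$), and $\bE_{X,X'}[(g(X)-g(X'))^2]=2\Var(g(X))$, so $1-|\phi_g(\xi)|^2\ge\frac4{\pi^2}\xi^2\Var(g(X))$ on $[-\pi/(4M),\pi/(4M)]$. Since $\widehat{p_\epsilon}$ is continuous with $\widehat{p_\epsilon}(0)=1$, there is $\xi_\rho>0$ (depending on $\rho$) with $|\widehat{p_\epsilon}(\xi)|\ge\frac12$ on $[-\xi_\rho,\xi_\rho]$; integrating the previous bound over $[-\min(\xi_\rho,\pi/(4M)),\min(\xi_\rho,\pi/(4M))]$ gives $\|p_\epsilon\|_2^2-\|p_E\|_2^2\ge c_\rho\Var(g(X))$ with $c_\rho$ depending only on $\rho$ and $M$. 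Because $\Var(g(X))=\|g-\bE g\|_{\L2p}^2=\|f+\bE(f^*-f)-f^*\|_{\L2p}^2$, part (ii) follows with $C=2\pi M_p/c_\rho$; the argument uses $|f|\le M$, which is all that is needed below.

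Parts (iii) and (iv) then follow at once. Applying part (ii) to $f=f_\bz\in\calH$ (which is bounded by $M$) gives $\|f_\bz+\bE_x(f^*-f_\bz)-f^*\|_{\L2p}^2\le C(\calR(f_\bz)-\calR^*)$ with the same $C$. Under \eqref{echcond}, Theorem~\ref{thm:entropyconsistency} gives $\calR(f_\bz)\to\calR^*$ in probability, so the right-hand side tends to $0$ in probability, which is (iii); and under the additional assumption that the derivative of $p_{\epsilon|X}$ is uniformly bounded, the same theorem with $h\sim n^{-1/6}$ gives $\calR(f_\bz)-\calR^*=O(\sqrt{\log(2/\delta)}\,n^{-1/6})$ with confidence $1-\delta$, which transfers through the displayed inequality to (iv).

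I expect the main obstacle to be the quantitative estimate in part (ii): one must lower-bound $1-|\phi_g(\xi)|^2$ by a multiple of $\Var(g(X))$ \emph{uniformly in $f$}, and this is exactly where the uniform bound $|g|\le 2M$ (to keep $\xi(g(X)-g(X'))$ in the regime where $1-\cos t\ge\frac2{\pi^2}t^2$) and the non-vanishing of $\widehat{p_\epsilon}$ at the origin (so that the weight $|\widehat{p_\epsilon}(\xi)|^2$ does not annihilate the low frequencies carrying the variance) both enter in an essential way. Everything else --- the Plancherel factorization, the $\log x\ge 1-x^{-1}$ reduction, and the passage from $f$ to $f_\bz$ --- is routine.
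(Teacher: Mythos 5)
Your proposal is correct and follows essentially the same route as the paper: Plancherel/Fourier factorization of $\int (p_{\!_E})^2$, the half-angle bound $1-\cos t\ge \frac{2}{\pi^2}t^2$ on $|\xi|\le\frac{\pi}{4M}$, the non-vanishing of $\widehat{p_\epsilon}$ near the origin, the identity $\bE_{X,X'}(g(X)-g(X'))^2=2\,\mathbf{var}(g)$, and then parts (iii)--(iv) as corollaries of Theorem \ref{thm:entropyconsistency}. Your only departures are cosmetic: you phrase the double integral as $|\phi_g(\xi)|^2$ with $\phi_g$ the characteristic function of $g(X)$, and you convert $V$-differences to $\calR$-differences via $\log x\ge 1-x^{-1}$ with $\|p_\epsilon\|_2^2\le M_p$ instead of the paper's mean value theorem argument (Lemma \ref{RVcompare}).
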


Theorem \ref{thm:homo} (iii) shows the regression consistency for
homoskedastic models. It is a corollary of
error entropy consistency stated in Theorem \ref{thm:entropyconsistency}
and the relationship between the $\L2p$ distance and the excess
error entropy stated in Theorem \ref{thm:homo} (ii). Thus the homoskedastic
model is a special case for which the error entropy consistency and
regression consistency coincide with each other.

Things are much more complicated for heteroskedastic models. Our
third main result illustrates the incoincidence of the minimizer
$f_\calR^*$  and the regression function $f^*$ by Example \ref{exam:incoin} in section 5.
\begin{proposition}\label{prop:incoincidence}
There exists a heteroskedastic model such that the regression
function $f^*$ is not a minimizer of $\calR(f)$ and the regression
consistency fails even if the error entropy consistency is true.
\end{proposition}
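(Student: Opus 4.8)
The plan is to produce an explicit two‑point heteroskedastic model on which the R\'enyi entropy $\calR$ of \eqref{Renyi} collapses to a one‑dimensional function, and to read off from that reduction both the incoincidence of a minimizer $f_\calR^*$ with the regression function $f^*$ and the failure of regression consistency. Take $\cX=\{x_1,x_2\}$ with $\px(\{x_1\})=\px(\{x_2\})=\frac12$ and $f^*\equiv 0$. Let $q_1$ be a smooth, bounded, mean‑zero density that is \emph{asymmetric} — for instance a narrow Gaussian smoothing of the two‑atom law $\frac34\delta_{1}+\frac14\delta_{-3}$ (mean $0$) — and set the conditional noise densities to be $p_{\epsilon|X=x_1}=q_1$ and $p_{\epsilon|X=x_2}=q_2$ with $q_2(e)=q_1(-e)$ (again mean $0$, but with opposite skewness, so the model is genuinely heteroskedastic). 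Then \eqref{model} holds with $\bE(\epsilon|X)=0$, the densities $q_1,q_2$ are bounded, and $f^*$ is bounded, so conditions (i)--(ii) of MEE admissibility of the model are satisfied; condition (iii) will follow from the next step.

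For $f$ with $f(x_i)=a_i$ the error $E=Y-f(X)$ has density $p_E(e)=\frac12 q_1(e+a_1)+\frac12 q_2(e+a_2)$, whence
\[
\int_{\RR}p_E(e)^2\,de=\tfrac14\|q_1\|_{L^2}^2+\tfrac14\|q_2\|_{L^2}^2+\tfrac12\,C(t),
\qquad C(t):=\int_{\RR}q_1(u)\,q_2(u-t)\,du,\quad t:=a_1-a_2,
\]
so $\calR(f)=-\log\int p_E^2$ depends on $f$ only through $t$; write $\calR(f)=\phi(t)$. By Plancherel's formula $C(t)=\frac1{2\pi}\int e^{-\ri t\xi}\,\widehat{q_1}(\xi)\,\overline{\widehat{q_2}(\xi)}\,d\xi$, and since $q_2(e)=q_1(-e)$ this reduces to the autoconvolution $C=q_1*q_1$, namely a Gaussian smoothing of $\frac9{16}\delta_{2}+\frac6{16}\delta_{-2}+\frac1{16}\delta_{-6}$. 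A short estimate shows that, once the smoothing width is taken small enough, $C$ — and therefore the information potential $\int p_E^2$ — has a \emph{strict global maximum} at a single point $t^*$ near $2$, while $C(t)\to 0$ and hence $\int p_E^2$ tends to a strictly smaller value as $|t|\to\infty$. Thus $\phi$ attains its minimum, and only at $t=t^*$; this gives condition (iii) for the model, shows $\calR^*=\phi(t^*)<\phi(0)=\calR(f^*)$, and — because $\calR(f^*+b)=\phi(0)$ for every constant $b$ — shows that no translate of $f^*$, in particular $f^*$ itself, minimizes $\calR$.

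Functions on $\cX$ are pairs $(a_1,a_2)\in\RR^2$, so take $\calH=\{(a_1,a_2):|a_i|\le M_0\}$ with $M_0$ large enough that $\calH$ contains both $f^*$ and a minimizer $f_\calR^*$ of $\calR$; being a bounded subset of $\RR^2$, $\calH$ is uniformly bounded and satisfies $\log\calN_2(\calH,\varepsilon)\le 2\log(C_0/\varepsilon)\le c\,\varepsilon^{-s}$ for every $0<s<2$, so $\calH$ is MEE admissible. Choosing $h=h(n)$ as in \eqref{echcond}, Theorem \ref{thm:entropyconsistency} gives the error entropy consistency $\calR(f_\bz)\to\calR^*$ in probability. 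Put $t_{f}=f(x_1)-f(x_2)$. Since $f_\bz\in\calH$ we have $t_{f_\bz}\in[-2M_0,2M_0]$, a compact interval on which $\phi$ is continuous with unique minimizer $t^*$, so $\inf\{\phi(t):t\in[-2M_0,2M_0],\ |t-t^*|\ge\eta\}>\phi(t^*)$ for every $\eta>0$; together with $\phi(t_{f_\bz})=\calR(f_\bz)\to\calR^*=\phi(t^*)$ in probability this forces $t_{f_\bz}\to t^*$ in probability. On the other hand, for every constant $b$ a direct two‑point computation gives, with $(a_1,a_2)=f_\bz$,
\[
\|f_\bz+b-f^*\|_{\L2p}^2=\tfrac12(a_1+b)^2+\tfrac12(a_2+b)^2\ \ge\ \tfrac14\,t_{f_\bz}^2,
\]
so, since $t^*\neq 0$, this is bounded below by a fixed positive constant with probability tending to $1$, and no choice of $b_\bz$ can make $f_\bz+b_\bz$ converge to $f^*$ in $\L2p$ in probability: regression consistency fails although error entropy consistency holds.

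The one genuinely delicate point is the middle step: one must arrange the two conditional noise densities so that $C(t)$ — equivalently the information potential — is strictly maximized at some $t^*\neq 0$ while \emph{both} $q_1$ and $q_2$ keep mean zero, since a mere translation of one density relative to the other moves the peak off the origin but violates the $\bE(\epsilon|X)=0$ constraint of \eqref{model}. The resolution is to take asymmetric mean‑zero densities and exploit the Plancherel/autoconvolution form of $C$, with the Gaussian smoothing width small enough that the dominant peak near $t=2$ remains the unique global maximum; verifying this quantitative fact is essentially the only computation needed, everything else being MEE‑admissibility bookkeeping or a direct appeal to Theorem \ref{thm:entropyconsistency}.
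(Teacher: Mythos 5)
Your construction is correct, and it takes a genuinely different route from the paper's. The paper's counterexample keeps $\cX$ continuous (two intervals, uniform marginal) and uses \emph{symmetric} uniform noise that is unimodal on one region and a bimodal mixture of shifted uniforms on the other; it then computes $V(f)=V_{11}+V_{22}+V_{12}$ exactly via Plancherel and the orthogonality of integer translates of $\id_{[-1/2,1/2]}$, characterizes the full minimizer set ($f$ constant on each piece with $|f_1-f_2|=1$), proves a quantitative comparison inequality $\min_{f_\calR^*}\|f-f_\calR^*\|_\L2p^2\le C'(\calR(f)-\calR^*)$, and from it deduces the precise limit $\min_b\|f_\bz+b-f^*\|_\L2p\to\frac12$. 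You instead collapse $\cX$ to two points so that $\calR$ becomes a one-variable function $\phi(t)$ of the offset $t=f(x_1)-f(x_2)$, and you create the off-origin minimum by taking \emph{asymmetric} mean-zero conditional densities that are mirror images of each other, so the cross term is the autoconvolution $q_1*q_1$, peaked near $t=2$; failure of regression consistency then follows from a soft compactness/continuity argument (near-minimizers of $\phi$ on $[-2M_0,2M_0]$ stay away from $t=0$) plus the elementary bound $\frac12(a_1+b)^2+\frac12(a_2+b)^2\ge\frac14 t^2$. Your route is more elementary and makes the mechanism transparent, at the cost of (a) relying on skewed noise, whereas the paper's example shows the incoincidence already occurs with symmetric noise, and (b) giving only a qualitative failure rather than the paper's explicit limiting value and comparison inequality. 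The one step you leave as a "short estimate" --- that for small smoothing width the Gaussian mixture $\frac9{16}g(t-2)+\frac6{16}g(t+2)+\frac1{16}g(t+6)$ has its global maximum near $t=2$, with $C(0)$ much smaller --- is indeed routine (peak separation $4$ and distinct weights give it immediately), and uniqueness of the maximizer is not even needed: it suffices that the sublevel set $\{\phi\le\calR^*+\eta\}$ avoids a neighborhood of $t=0$. Two cosmetic points: your Plancherel formula should carry $\e^{+\ri t\xi}$ with your convention (harmless, since you only use the convolution identity), and the covering-number bound $2\log(C_0/\varepsilon)$ should be read as $\max\{0,2\log(C_0/\varepsilon)\}$; neither affects the argument.
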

This result shows that, in general,  the error entropy consistency
does not  imply the regression consistency. Therefore, these two
types of consistency do not coincide for heteroskedastic models.

However, this observation does not mean the empirical MEE algorithm cannot be consistent with respect to
the regression function. In fact, in \cite{HFWZ2012} we proved the regression consistency for large bandwidth parameter $h$ and derived learning rate when $h$ is of the form $h=n^\theta$ for some $\theta>0$.

Our fourth main result in this paper is to verify the regression consistency for a more general choice of large bandwidth parameter $h$.
\begin{theorem}\label{thm:regressionconsistency}
Choosing the bandwidth parameter $h=h(n)$ such that
\begin{equation}\label{rchcond}
\lim_{n\to\infty} h(n)= +\infty, \qquad \lim_{n\to\infty} \frac {h^2}{\sqrt{n}} = 0,
\end{equation}
we have
$f_\bz + \bE_x(f^*(x)-f_\bz(x))$  converges to $f^*$ in probability.
A convergence rate of order $O(\sqrt{\log(2/\delta)}n^{-\frac 14})$ can be obtained with confidence $1-\delta$ for $\|f_\bz+\bE_x(f^*-f_\bz)-f^*\|_\L2p^2$ by taking $h\sim n^{\frac 18}.$
\end{theorem}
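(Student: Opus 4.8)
The plan is to analyze the empirical risk $\calR_\bz$ in the large-bandwidth regime by expanding the Gaussian kernel $G_h(e_i - e_j)$ in powers of $1/h$. Since $h \to \infty$, we have $G_h(e_i - e_j) = \frac{1}{\sqrt{2\pi}h}\bigl(1 - \frac{(e_i-e_j)^2}{2h^2} + O(h^{-4})\bigr)$, so that, writing $\calW_\bz(f) = \frac{1}{n^2}\sum_{i,j}(e_i-e_j)^2 = \frac{2}{n}\sum_i e_i^2 - \frac{2}{n^2}\bigl(\sum_i e_i\bigr)^2$, one gets $\calR_\bz(f) = \log(\sqrt{2\pi}h) - \log\bigl(1 - \frac{1}{2h^2}\calW_\bz(f) + \text{(higher order)}\bigr)$. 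Minimizing $\calR_\bz$ over $f \in \calH$ is therefore, to leading order, the same as minimizing the empirical variance-type quantity $\calW_\bz(f)$, which is a standard least-squares-after-centering functional. The first step is to make this expansion rigorous: control the remainder terms uniformly over $\calH$ and over the sample, using the uniform bound $|f| \le M$ together with the MEE-admissibility bound $M_p$ on the noise density (so that moments of $e_i = y_i - f(x_i)$ are controlled in probability even though $\epsilon$ need not be bounded — one truncates the noise at a slowly growing level and handles the tail separately).

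The second step is the population-level identity. Let $\calW(f) = \bE_{X,X'}\bigl[(E(f,X,Y) - E(f,X',Y'))^2\bigr] = 2\,\bE[(Y - f(X))^2] - 2\,(\bE[Y - f(X)])^2 = 2\,\Var(Y - f(X))$. Decomposing $Y - f(X) = (f^*(X) - f(X)) + \epsilon$ and using $\bE(\epsilon|X) = 0$, one obtains $\tfrac12\calW(f) = \Var\bigl(f^*(X) - f(X)\bigr) + \bE[\Var(\epsilon|X)]$. Hence the minimizer over all measurable $f$ of $\calW(f)$ is $f^* + b$ for any constant $b$, and moreover
\[
\tfrac12\calW(f) - \tfrac12\calW(f^*) = \Var\bigl(f^*(X)-f(X)\bigr) = \bigl\|f + \bE_x(f^*-f) - f^*\bigr\|_{\L2p}^2 .
\]
This is the exact substitute, in the large-$h$ regime, for the relationship (Theorem~\ref{thm:homo}(ii)) used in the homoskedastic case: it converts an excess-$\calW$ bound directly into the $\L2p$ error we want, and it holds for \emph{all} models, heteroskedastic included, since the conditional-variance term $\bE[\Var(\epsilon|X)]$ is a constant independent of $f$.

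The third step is the sample-error estimate: bound $\sup_{f\in\calH}|\calW_\bz(f) - \calW(f)|$ in probability. Here $\calW_\bz$ is (up to the centering correction, which is itself a simple empirical-mean deviation) an average of $(e_i - e_j)^2$ over pairs, i.e. essentially a U-statistic / V-statistic of degree two with bounded-in-probability kernel; combining a truncation of the noise with the $\ell_2$ empirical covering-number condition $\log\calN_2(\calH,\varepsilon)\le c\varepsilon^{-s}$, $0<s<2$, yields a uniform deviation bound of order $n^{-1/2}$ up to logarithmic factors and the truncation overhead. Putting the three steps together: from $\calR_\bz(f_\bz)\le\calR_\bz(f^*)$ one recovers, after removing the common $\log(\sqrt{2\pi}h)$ term and the expansion remainders (which are $O(h^{-2}\cdot(\text{bounded}))$ relative to the leading term, hence the need for $h^2/\sqrt n \to 0$ to keep them small compared to the sample error), an inequality of the form $\calW(f_\bz) - \calW(f^*) \le 2\sup_{f\in\calH}|\calW_\bz(f)-\calW(f)| + (\text{expansion error}) = O(n^{-1/2}\text{polylog}) + O(h^{-2})$; with $h\sim n^{1/8}$ both terms are $O(n^{-1/4})$, and Step~2 turns this into the claimed rate for $\|f_\bz + \bE_x(f^*-f_\bz) - f^*\|_{\L2p}^2$. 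The convergence in probability under the weaker condition \eqref{rchcond} follows the same way without optimizing exponents.

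The main obstacle I expect is controlling the expansion of $\log G_h$ uniformly in the sample \emph{without} a boundedness assumption on $\epsilon$: the quantities $\frac{(e_i-e_j)^2}{h^2}$ are not bounded, only bounded in probability, so the $\log(1 - \cdot)$ expansion is only valid on a high-probability event, and one must verify that the residual probability (where some $|e_i|$ is atypically large) is negligible while simultaneously keeping the uniform-over-$\calH$ covering-number argument intact on the good event. This truncation-plus-union-bound bookkeeping, together with ensuring the higher-order $O(h^{-4})$ terms of the Gaussian expansion really are dominated once multiplied by the fourth-moment-type sums, is the technical heart of the argument; the rest is routine once the reduction $\calR_\bz \approx \log(\sqrt{2\pi}h) - \frac{1}{2h^2}\calW_\bz$ is secured.
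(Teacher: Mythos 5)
Your overall intuition---that for large $h$ the MEE functional behaves like (centered) least squares, i.e.\ variance minimization, so that the excess ``variance'' controls $\|f+\bE(f^*-f)-f^*\|_\L2p^2$ via the identity \eqref{varf}---is exactly the mechanism behind the theorem. But the route you take, expanding the Gaussian kernel inside the \emph{empirical} functional and reducing the problem to uniform control of the empirical variance statistic $\frac{1}{n^2}\sum_{i,j}(e_i-e_j)^2$, has a genuine gap under the paper's standing assumptions: MEE admissibility only requires $\bE(\epsilon|X)=0$ and a uniformly bounded conditional density $p_{\epsilon|X}$; no second (let alone fourth) moment of the noise is assumed, and the paper explicitly advertises that heavy-tailed noise is allowed. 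For such noise your Step 2 identity $\tfrac12\mathcal{W}(f)-\tfrac12\mathcal{W}(f^*)=\|f+\bE(f^*-f)-f^*\|_\L2p^2$ is vacuous ($\Var(Y-f(X))$ can be $+\infty$), your Step 3 uniform deviation bound of order $n^{-1/2}$ for the variance statistic is unattainable, and the $O(h^{-4})$ remainder in the kernel expansion is multiplied by empirical fourth-moment sums that need not be $O_P(1)$. The truncation-plus-union-bound repair you sketch cannot recover the stated $O(\sqrt{\log(2/\delta)}\,n^{-1/4})$ rate (or even well-defined population limits) without importing moment or tail assumptions that are not part of the theorem.

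The paper sidesteps all of this by never expanding at the empirical level. Concentration is performed directly on $\calE_{h,\bz}$ versus $\calE_h$ (Proposition \ref{prop:szbound}), where the kernel $G_h$ is bounded by $1/(\sqrt{2\pi}h)$ uniformly in the data, so McDiarmid plus a Rademacher/covering-number argument gives $\calS_\bz\lesssim \frac{B}{h^2\sqrt n}+\frac{\sqrt{\log(1/\delta)}}{h\sqrt n}$ with no tail conditions whatsoever. The ``large-$h$ looks like variance'' step is then carried out only at the population level and is packaged in Proposition \ref{prop:Ehlower} (imported from \cite{HFWZ2012}): $\|f+\bE(f^*-f)-f^*\|_\L2p^2\le C''\bigl(h^3(\calE_h(f)-\calE_h^*)+h^{-2}\bigr)$, where the additive $h^{-2}$ absorbs exactly the expansion error you are trying to control by hand, but applied to quantities that are finite by construction. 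Combining $\calE_h(f_\bz)-\calE_h^*\le 2\calS_\bz$ with these two propositions gives a bound of order $\frac{h}{\sqrt n}+\frac{h^2\sqrt{\log(1/\delta)}}{\sqrt n}+\frac1{h^2}$, whence condition \eqref{rchcond} yields consistency and $h\sim n^{1/8}$ yields the $n^{-1/4}$ rate. If you want to pursue your empirical-expansion route, you would have to add explicit moment hypotheses on $\epsilon$, which would prove a strictly weaker statement than the theorem as given.
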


Such a result looks surprising. Note that the empirical MEE algorithm
is motivated by minimizing an empirical version of the error
entropy. This empirical error entropy approximates the true one when
$h$ tends to zero. But the regression consistency is in general true
as $h$ tends to infinity, a condition under which the error entropy
consistency may not be true. From this point of view, the regression
consistency of the empirical MEE algorithm does not justify its
motivation.

Observe that the regression consistency in Theorem \ref{thm:homo}
and Theorem \ref{thm:regressionconsistency} suggests the constant adjustment to be
$b=\bE_x[f^*(x)-f_\bz(x)]$. In practice
the constant adjustment is usually taken as
$\dfrac{1}{n} \sum\limits_{i=1}^n (y_i-f_\bz(x_i))$
which is exactly the sample mean of $b.$ 

The last two main results of this paper are about the regression consistency of two special classes of regression models. 
We show that the bandwidth parameter $h$ can be chosen as a fixed positive constant to make MEE consistent in these situations. 
Moreover the convergence rate is of order $O(n^{-1/2})$, much higher than previous general cases. Throughout this paper, 
we use $\ri$ to denote the imaginary unit and
$\overline {a}$ the conjugate of a complex number $a.$
The Fourier transform $\widehat{f}$ is defined for an integrable function f on $\RR$ as
$\widehat{f}(\xi)=\int_{\RR}f(x)\e^{-\ri x\xi}dx.$   Recall the inverse Fourier transform is given by $f(x)=\frac1{2\pi}\int_{\RR}\widehat{f}(\xi)\e^{\ri x\xi}d\xi$ when $f$ is square integrable. Fourier transform plays crucial roles in our analysis.

\begin{definition}
\label{unimodal}
A univariate function $f$ is {\bf{unimodal}} if for some $t\in\RR$, the function is monotonically increasing on $(-\infty, t]$ and monotonically decreasing on $[t, \infty)$.
\end{definition}

\begin{definition}
We define $\calP_1$ to be the set of probability measures $\rho$ on $\cX\times\cY$ satisfying the following conditions:
\begin{enumerate}
\item $p_{\epsilon|X=x}$ is symmetric (i.e. even) and unimodal for every $x\in\cX$;
\item the Fourier transform $\widehat{p_{\epsilon|X=x}}$ is nonnegative on $\RR$ for every $x\in\cX$;
\item there exist two constants $c_0>0$ and $C_0>0$ such that $\widehat{p_{\epsilon|X=x}}(\xi)\geq C_0$ for $\xi\in[-c_0, c_0]$ and every $x\in\cX$.
\end{enumerate}
We define $\calP_2$ to be the set of probability measures $\rho$ on $\cX\times\cY$ such that
$p_{\epsilon|X=x}$ is symmetric for every $x\in\cX$ and there exists some constant $\widetilde{M}>0$ such that $p_{\epsilon|X=x}$ is supported on $[-\widetilde{M}, \widetilde{M}]$ for every $x\in\cX.$
\end{definition}

The boundedness assumption on the noise the for the family $\calP_2$ is very natural in regression setting.
For the family $\calP_1$, the conditions look complicated, but the following two examples tell that 
they are also common in statistical modeling.

\begin{example}
(Symmetric $\alpha$-stable L\'evy distributions) A distribution is said to be symmetric $\alpha$-stable L\'evy distributions \cite{nolan:2012} 
if it is symmetric and its Fourier transform is represented in the form $e^{-\gamma^\alpha|\xi|^\alpha}$, with $\gamma>0$ and $0<\alpha\leq2$. Obviously, Gaussian distribution with mean zero is a special case with $\alpha=2$. Cauchy distribution with median zero is another special case with $\alpha=1$. Every distribution in this set is unimodal \cite{Laha:1961}. If we choose a subset of these distributions with $\gamma\leq C$ (C is a constant), then the Fourier transform is positive and $\exists c_0=1/C$ and $C_0=e^{-1}$ such that $\forall \xi\in[-c_0, c_0]$, $\widehat{p_{\epsilon|X}}(\xi)\geq C_0$.
\end{example}
\begin{example}
(Linnik distributions) A Linnik distribution is also referred to as a symmetric geometric stable distribution \cite{Koz:Pod:Sam:1999}. A distribution is said to be Linnik distribution if it is symmetric and its Fourier transform is represented in the form $\frac{1}{1+\lambda^\alpha|\xi|^\alpha}$,with $\lambda>0$ and $0<\alpha\leq2$. Obviously, Laplace distribution with mean zero is a special case with $\alpha=2$. Every distribution in this set is unimodal \cite{Laha:1961}. If we choose a subset of these distributions with $\lambda\leq C$ (C is a constant), then the Fourier transform is positive and $\exists c_0=1/C$ and $C_0=\frac12$ such that $\forall \xi\in[-c_0, c_0]$, $\widehat{p_{\epsilon|X}}(\xi)\geq C_0$.
\end{example}

Corresponding to the definition of the empirical R\'enyi's entropy $\calR_\bz(f)$, after removing the logarithm, we define information error of a measurable function $f: \cX\rightarrow\RR$ as
$$\begin{array}{rl}
\calE_h(f)  & = -\dint_\RR\dint_\RR G_h(e-e')p_{\!_E}(e)p_{\!_E}(e')dede' \\
& = -
\dint_\cZ\dint_\cZ G_h\Big((y-f(x))-(y'-f(x'))\Big) d\rho(x,y)d\rho(x',y').
\end{array}$$

\begin{theorem}
\label{thm:symuninoise}
If $\rho$ belongs to $\calP_1$, then $f^*+b$ is a minimizer of $\calE_h(f)$ for any constant b and any fixed $h>0$. Moreover, we have $f_\bz+\bE_x(f^*-f_\bz)$ converges to $f^*$ in probability. Convergence rate of order $O(\sqrt{\log(2/\delta)}n^{-\frac12})$ can be obtained with confidence $1-\delta$ for $\|f_\bz+\bE_x(f^*-f_\bz)-f^*\|_\L2p^2$.
\end{theorem}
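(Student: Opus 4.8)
The plan is to recast $\calE_h$ via the Fourier transform, read off its minimizers, derive a comparison inequality bounding the $\L2p$-error by the excess information error, and then control the sample error by a U-statistic concentration argument. Since $E=Y-f(X)=\epsilon+(f^*-f)(X)$, conditioning on $X$ gives $p_{\!_E}(e)=\int_\cX p_{\epsilon|X=x}\big(e-(f^*-f)(x)\big)\,d\px(x)$; abbreviating $\psi_x:=\widehat{p_{\epsilon|X=x}}$ this reads $\widehat{p_{\!_E}}(\xi)=\int_\cX\e^{-\ri(f^*-f)(x)\xi}\psi_x(\xi)\,d\px(x)$. Because $G_h$ is even, $\calE_h(f)=-\int_\RR(G_h*p_{\!_E})(e)\,p_{\!_E}(e)\,de$, so Parseval's identity and $\widehat{G_h}(\xi)=\exp(-h^2\xi^2/2)$ yield $\calE_h(f)=-\frac1{2\pi}\int_\RR\exp(-h^2\xi^2/2)\,|\widehat{p_{\!_E}}(\xi)|^2\,d\xi$ (all transforms are legitimate since MEE admissibility puts the densities in $L^1\cap L^2$). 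For $\rho\in\calP_1$ each $\psi_x(\xi)$ is real and nonnegative (conditions (i)--(ii)), so $|\widehat{p_{\!_E}}(\xi)|\le\Phi(\xi):=\int_\cX\psi_x(\xi)\,d\px(x)$ with equality at every $\xi$ when $f^*-f$ is a.e.\ constant; hence $\calE_h(f)\ge-\frac1{2\pi}\int_\RR\exp(-h^2\xi^2/2)\,\Phi(\xi)^2\,d\xi=\calE_h(f^*+b)$, proving $f^*+b$ minimizes $\calE_h$ for every constant $b$ and every fixed $h>0$, and $\inf_f\calE_h(f)=\calE_h(f^*)$.

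\emph{Comparison inequality.} Put $g=f^*-f$, so $\|f+\bE_x(f^*-f)-f^*\|_\L2p^2=\Var_{\px}(g)$. Expanding $\Phi(\xi)^2-|\widehat{p_{\!_E}}(\xi)|^2$,
$$\calE_h(f)-\calE_h(f^*)=\frac1{2\pi}\int_\RR\exp\!\big(-\tfrac{h^2\xi^2}{2}\big)\!\int_\cX\!\int_\cX\!\big(1-\cos((g(x)-g(x'))\xi)\big)\,\psi_x(\xi)\psi_{x'}(\xi)\,d\px(x)\,d\px(x')\,d\xi,$$
whose integrand is everywhere nonnegative. Since $|g|\le 2M$, for $|\xi|\le c_1:=\min\{c_0,\pi/(4M)\}$ we have $|(g(x)-g(x'))\xi|\le\pi$, hence $1-\cos\theta\ge\frac2{\pi^2}\theta^2$ (Jordan's inequality) and $\psi_x(\xi)\ge C_0$ by condition (iii). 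Restricting the $\xi$-integral to $[-c_1,c_1]$ and using $\int_\cX\int_\cX(g(x)-g(x'))^2\,d\px\,d\px=2\Var_{\px}(g)$ gives a constant $c_h>0$ depending on $h,M,c_0,C_0$ with $\|f+\bE_x(f^*-f)-f^*\|_\L2p^2=\Var_{\px}(g)\le c_h\big(\calE_h(f)-\calE_h(f^*)\big)$.

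\emph{Sample error.} Because $-\log$ is decreasing, $f_\bz$ also minimizes $\calE_{h,\bz}(f):=-\frac1{n^2}\sum_{i,j}G_h(e_i-e_j)$ over $\calH$, which is minimized at the same function as the diagonal-free U-statistic $U_\bz(f):=-\frac1{n(n-1)}\sum_{i\ne j}G_h(e_i-e_j)$, an unbiased estimator of $\calE_h(f)$. Using $f^*\in\calH$ and $U_\bz(f_\bz)\le U_\bz(f^*)$,
$$\calE_h(f_\bz)-\calE_h(f^*)\le\big(\calE_h(f_\bz)-U_\bz(f_\bz)\big)+\big(U_\bz(f^*)-\calE_h(f^*)\big)\le 2\sup_{f\in\calH}\big|U_\bz(f)-\calE_h(f)\big|.$$
For fixed $h$ the kernel $(x,y,x',y')\mapsto G_h((y-f(x))-(y'-f(x')))$ is bounded by $G_h(0)=(\sqrt{2\pi}h)^{-1}$ and Lipschitz in $f$ with constant $\asymp h^{-2}$, so a concentration inequality for degree-two U-statistics together with a chaining argument over $\calH$ governed by $\log\calN_2(\calH,\varepsilon)\le c\varepsilon^{-s}$ (the entropy integral $\int_0^1\varepsilon^{-s/2}\,d\varepsilon$ converges since $s<2$, which yields the $n^{-1/2}$ rate) gives $\sup_{f\in\calH}|U_\bz(f)-\calE_h(f)|=O(\sqrt{\log(2/\delta)}\,n^{-1/2})$ with confidence $1-\delta$. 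Substituting into the comparison inequality gives $\|f_\bz+\bE_x(f^*-f_\bz)-f^*\|_\L2p^2=O(\sqrt{\log(2/\delta)}\,n^{-1/2})$, and in particular convergence in probability.

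I expect the sample-error step to be the main obstacle: $\calE_{h,\bz}$ is a degree-two U-statistic process indexed by $\calH$ rather than an empirical mean, so one must invoke U-statistic concentration (Hoeffding/Arcones--Gin\'e type, or a decoupling/symmetrization reduction) before applying the capacity bound---though for fixed $h$ this is a technically milder instance of the machinery already developed in \cite{HFWZ2012} for large $h$. A secondary point to handle with care is that the nonnegativity and near-origin lower bound of $\psi_x$ (conditions (ii)--(iii)) are precisely what make the Fourier cancellation favour the constant shift $g\equiv b$ and be quadratically nondegenerate; symmetry alone only guarantees $\psi_x$ is real-valued, and without nonnegativity the comparison can fail, consistent with the heteroskedastic counterexample of Proposition~\ref{prop:incoincidence}.
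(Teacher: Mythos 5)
Your proposal is correct and follows essentially the same route as the paper: the same Plancherel identity reducing $\calE_h(f)-\calE_h(f^*)$ to a nonnegative Fourier integral weighted by $\widehat{p_{\epsilon|X=x}}(\xi)\,\widehat{p_{\epsilon|X=u}}(\xi)$ (the paper packages this product as the transform of the density $g_{x,u}$ of $\epsilon_x-\epsilon_u$), the same Jordan-inequality comparison on $|\xi|\le\min\{c_0,\pi/(4M)\}$ combined with \eqref{varf}, and the same decomposition $\calE_h(f_\bz)-\calE_h(f^*)\le 2\sup_{f\in\calH}|\calE_{h,\bz}(f)-\calE_h(f)|$. The uniform deviation bound you only sketch (via U-statistic concentration and chaining) is exactly the paper's Proposition \ref{prop:szbound}, proved in the appendix by McDiarmid's inequality plus a Rademacher-average/covering-number estimate, which for fixed $h$ yields the stated $O(\sqrt{\log(2/\delta)}\,n^{-1/2})$ rate.
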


\begin{theorem}
\label{thm:boundnoise}
If $\rho$ belongs to $\calP_2$, then there exists some $h_{\rho, \calH}>0$ such that $f^*+b$ is a minimizer of $\calE_h(f)$ for any fixed $h> h_{\rho,\calH}$ and constant b. Also $f_\bz+\bE_x(f^*-f_\bz)$ converges to $f^*$ in probability. Convergence rate of order $O(\sqrt{\log(2/\delta)}n^{-\frac12})$ can be obtained with confidence $1-\delta$ for $\|f_\bz+\bE_x(f^*-f_\bz)-f^*\|_\L2p^2$.
\end{theorem}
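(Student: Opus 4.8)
The plan is to mirror the structure of the argument for Theorem \ref{thm:symuninoise}, isolating where the boundedness of the noise (rather than the spectral positivity of $\calP_1$) must be exploited. First I would establish the population-level statement: that $f^*+b$ minimizes $\calE_h(f)$ over all measurable $f$ once $h$ exceeds a threshold $h_{\rho,\calH}$. Using the Fourier/Plancherel identity, one rewrites
$$\calE_h(f) = -\frac{1}{2\pi}\int_\RR \widehat{G_h}(\xi)\,\bigl|\widehat{\mu_f}(\xi)\bigr|^2\,d\xi,$$
where $\mu_f$ is the distribution of the error variable $E(f)=Y-f(X)$ and $\widehat{G_h}(\xi)=e^{-h^2\xi^2/2}$ is positive everywhere. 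Writing $g=f^*-f$ (a bounded function since $f,f^*\in\calH$, so $|g|\le 2M$) and using that $\epsilon|X$ is symmetric, one expands $\bigl|\widehat{\mu_f}(\xi)\bigr|^2$ in terms of the characteristic functions of $\epsilon|X=x$ and the distribution of $g(X)$; the cross terms vanish by symmetry. The key point is to show the difference $\calE_h(f)-\calE_h(f^*)\ge 0$, with a quantitative lower bound
$$\calE_h(f)-\calE_h(f^*)\ \ge\ c_h\,\|g - \bE(g)\|_{\L2p}^2$$
for some $c_h>0$. For $\rho\in\calP_2$ the density $p_{\epsilon|X=x}$ is supported in $[-\widetilde M,\widetilde M]$, so the relevant Fourier integrand is controlled on the frequency band where $\widehat{G_h}$ is essentially flat; choosing $h$ large forces $\widehat{G_h}$ to stay bounded below by, say, $e^{-h^2 c_0^2/2}$ on a fixed band $[-c_0,c_0]$, and on that band the characteristic function of a bounded symmetric variable is real, close to $1$, and in particular bounded below, which yields the needed strict positivity of the quadratic form. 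This is the step where $h_{\rho,\calH}$ is produced and where the boundedness hypothesis is genuinely used.

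Second, with the comparison inequality from (ii) of the population analysis in hand — namely $\|f+\bE(f^*-f)-f^*\|_{\L2p}^2 \le C_h\bigl(\calE_h(f)-\calE_h^*\bigr)$ for the fixed $h$ — I would pass to the empirical algorithm. Since $\calR_\bz(f)=-\log\bigl(-\calE_{h,\bz}(f)\bigr)$ (up to the trivial sign/normalization bookkeeping with the diagonal terms $i=j$, which contribute an $O(1/n)$ additive constant $\frac{1}{n\sqrt{2\pi}h}$ independent of $f$), minimizing $\calR_\bz$ is equivalent to minimizing an empirical version $\calE_{h,\bz}(f)$ of $\calE_h(f)$. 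So $f_\bz$ satisfies $\calE_{h,\bz}(f_\bz)\le \calE_{h,\bz}(f^*)$, and it remains to bound the sample error $\sup_{f\in\calH}\bigl|\calE_{h,\bz}(f)-\calE_h(f)\bigr|$. Here $h$ is a fixed constant, so $G_h$ is a fixed bounded Lipschitz kernel, and $\calE_{h,\bz}(f)$ is a U-statistic of degree two with a bounded, Lipschitz-in-$f$ kernel; combining a Hoeffding-type concentration bound for U-statistics with the entropy bound $\log\calN_2(\calH,\varepsilon)\le c\varepsilon^{-s}$ (via a standard chaining / uniform deviation argument as in \cite{HFWZ2012, wu2009}) gives $\sup_{f\in\calH}\bigl|\calE_{h,\bz}(f)-\calE_h(f)\bigr| = O\bigl(\sqrt{\log(2/\delta)}\,n^{-1/2}\bigr)$ with confidence $1-\delta$. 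Because $h$ is fixed rather than shrinking, there is no competing $h$-dependent variance blow-up, which is exactly why the rate is the parametric $n^{-1/2}$ rather than the slower rates of Theorems \ref{thm:homo} and \ref{thm:regressionconsistency}.

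Third, I would assemble the pieces: from $\calE_{h,\bz}(f_\bz)\le\calE_{h,\bz}(f^*)$ and the uniform deviation bound,
$$\calE_h(f_\bz)-\calE_h^* \ \le\ \calE_h(f_\bz)-\calE_{h,\bz}(f_\bz) + \calE_{h,\bz}(f^*)-\calE_h(f^*) \ \le\ 2\sup_{f\in\calH}\bigl|\calE_{h,\bz}(f)-\calE_h(f)\bigr|,$$
and then feed this into the comparison inequality to conclude $\|f_\bz+\bE_x(f^*-f_\bz)-f^*\|_{\L2p}^2 = O\bigl(\sqrt{\log(2/\delta)}\,n^{-1/2}\bigr)$, hence convergence in probability. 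I expect the main obstacle to be the first part: producing the explicit threshold $h_{\rho,\calH}$ and the strict coercivity constant $c_h$ of the quadratic form. For $\calP_1$ the positivity of $\widehat{p_{\epsilon|X=x}}$ made this almost automatic; for $\calP_2$ one only has symmetry plus compact support, so the characteristic function $\widehat{p_{\epsilon|X=x}}$ is real but oscillates and can be negative, and one must argue that on a sufficiently small fixed band $[-c_0,c_0]$ — whose width depends only on $\widetilde M$ — it stays bounded below (e.g. $\widehat{p_{\epsilon|X=x}}(\xi)\ge 1 - \widetilde M^2\xi^2/2 \ge 1/2$ for $|\xi|\le 1/\widetilde M$), and then choose $h_{\rho,\calH}$ so that the Gaussian weight $\widehat{G_h}$, restricted to that band, dominates the (possibly negative) contribution from the complementary frequencies. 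Making this domination rigorous — controlling the tail $\int_{|\xi|>c_0}\widehat{G_h}(\xi)|\widehat{\mu_f}(\xi)|^2 d\xi$ uniformly over $f\in\calH$ and showing it is swamped by the in-band term once $h$ is large — is the technical heart of the theorem, and is also where the uniform boundedness of $\calH$ (so that $\|g\|_\infty\le 2M$ and $g(X)$ has characteristic function uniformly controlled) enters.
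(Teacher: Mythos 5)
Your overall architecture --- a population comparison inequality for fixed $h$, a uniform sample-error bound $\sup_{f\in\calH}|\calE_{h,\bz}(f)-\calE_h(f)|=O(\sqrt{\log(1/\delta)}\,n^{-1/2})$ for fixed $h$, and the standard error decomposition --- matches the paper's, and your second and third steps are essentially what the paper does via Proposition \ref{prop:szbound} (McDiarmid plus Rademacher averages rather than a U-statistic Hoeffding bound, to the same effect). Where you diverge is in the first and central step. The paper does not use Fourier analysis for $\calP_2$ at all: it writes $\calE_h(f)$ as an average over $(x,u)$ of $T_{x,u}(t)$ with $t=f(x)-f^*(x)-f(u)+f^*(u)$ and $T_{x,u}(t)=-\int_{-2\widetilde M}^{2\widetilde M}\exp\bigl(-(w-t)^2/(2h^2)\bigr)g_{x,u}(w)\,dw$, where $g_{x,u}$ is the density of $\epsilon_x-\epsilon_u$, supported on $[-2\widetilde M,2\widetilde M]$. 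Symmetry gives $T'_{x,u}(0)=0$, and $T''_{x,u}(t)=-h^{-2}\int\exp\bigl(-(w-t)^2/(2h^2)\bigr)\bigl[(w-t)^2/h^2-1\bigr]g_{x,u}(w)\,dw$; since $|w|\le 2\widetilde M$ and $|t|\le 4M$, choosing $h>h_{\rho,\calH}:=4M+2\widetilde M$ makes the bracket negative on the whole support, so $T''_{x,u}$ is bounded below by an explicit positive constant on $[-4M,4M]$. Convexity, Taylor expansion and \eqref{varf} then give the comparison inequality at once. This spatial-domain convexity argument is where the compact support is genuinely used, and it produces the threshold explicitly.

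Your Fourier alternative can be made to work, but as written it has a gap at exactly the point you flag as the technical heart. Bounding the out-of-band contribution by $\int_{|\xi|>c_0}\widehat{G_h}(\xi)|\widehat{\mu_f}(\xi)|^2\,d\xi$ in absolute terms is not enough: that quantity is of size roughly $h^{-1}e^{-h^2c_0^2/2}$ and is \emph{not} proportional to $\|f+\bE(f^*-f)-f^*\|_{\L2p}^2$, so when that norm is small the in-band term (of size $\asymp h^{-3}\|\cdot\|_{\L2p}^2$) does not swamp it, and the coercivity inequality needed for part (ii) fails. The fix is to work with the tail of the \emph{difference}: writing $g=f-f^*$ and using symmetry, $|\widehat{p_{E^*}}(\xi)|^2-|\widehat{p_{E}}(\xi)|^2=\iint\widehat{p_{\epsilon|X=x}}(\xi)\widehat{p_{\epsilon|X=u}}(\xi)\bigl(1-\cos(\xi(g(x)-g(u)))\bigr)\,d\px(x)d\px(u)$, and the bound $|1-\cos\theta|\le\theta^2/2$ shows the tail is itself at most $Ch^{-3}e^{-h^2c_0^2/4}\,\|g-\bE g\|_{\L2p}^2$, which is dominated by the in-band lower bound once $h$ is large. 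With that correction your route closes, but it is considerably more work than the paper's elementary second-derivative computation.
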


\section{Error entropy consistency}

In this section we will prove that $\calR(f_\bz)$ converges to $\calR^*$ in probability when $h=h(n)$ tends to zero slowly satisfying \eqref{echcond}. Several useful lemmas are needed to prove our first main result (Theorem \ref{thm:entropyconsistency}).

% Let $\rho_{_X}$ denote the marginal distribution of $\rho$ on $\cX$.
\begin{lemma}\label{lem:pe}
For any measurable function $f$ on $\cX$, the probability density function
for the error variable $E=Y-f(X)$ is given as
\begin{equation}\label{pe}
p_{\!_E}(e) = \int_\cX p_{\epsilon|X}(e+f(x)-f^*(x)|x)d\rho_{_X}(x).
\end{equation}
As a result, we have $|p_{\!_E}(e)|\le M$ for every $e\in\RR$.
\end{lemma}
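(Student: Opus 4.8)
The plan is to obtain \eqref{pe} by conditioning on the input variable and then integrating out $X$. Writing the model \eqref{model} as $Y=f^*(X)+\epsilon$, the error variable is $E=Y-f(X)=\epsilon+\big(f^*(X)-f(X)\big)$, so that conditionally on $X=x$ the variable $E$ is just the noise $\epsilon$ — whose conditional density given $X=x$ is $p_{\epsilon|X}(\cdot\,|\,x)$ — translated by the deterministic amount $f^*(x)-f(x)$. Hence the conditional density of $E$ given $X=x$ is $e\mapsto p_{\epsilon|X}\big(e-(f^*(x)-f(x))\,\big|\,x\big)=p_{\epsilon|X}\big(e+f(x)-f^*(x)\,\big|\,x\big)$, and averaging this against the marginal $\px$ yields the claimed formula.

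To make this rigorous I would argue at the level of distribution functions. For $e\in\RR$, disintegrating $\rho$ into $\px$ and the conditional noise laws gives
\begin{equation*}
\bP(E\le e)=\int_\cX \bP\big(\epsilon\le e+f(x)-f^*(x)\,\big|\,X=x\big)\,d\px(x)=\int_\cX\int_{-\infty}^{\,e+f(x)-f^*(x)} p_{\epsilon|X}(t|x)\,dt\,d\px(x).
\end{equation*}
The substitution $t=u+f(x)-f^*(x)$ in the inner integral moves the upper limit to $e$, and since the integrand is nonnegative Tonelli's theorem permits swapping the order of integration:
\begin{equation*}
\bP(E\le e)=\int_{-\infty}^{e}\Big(\int_\cX p_{\epsilon|X}\big(u+f(x)-f^*(x)\,\big|\,x\big)\,d\px(x)\Big)\,du.
\end{equation*}
The bracketed function of $u$ is measurable and finite (by the bound below), so differentiating in $e$ identifies it as $p_{\!_E}(e)$, which is exactly \eqref{pe}.

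For the uniform bound, the first condition in the definition of an MEE admissible model gives $p_{\epsilon|X}(\cdot\,|\,x)\le M_p$ uniformly in $x$, hence
\begin{equation*}
0\le p_{\!_E}(e)=\int_\cX p_{\epsilon|X}\big(e+f(x)-f^*(x)\,\big|\,x\big)\,d\px(x)\le M_p\int_\cX d\px(x)=M_p,
\end{equation*}
since $\px$ is a probability measure; this is the asserted bound (with the admissibility constant $M_p$). I expect no genuine obstacle here: the statement is foundational, and the only point demanding a little care is the measure-theoretic bookkeeping — the disintegration of $\rho$ into $\px$ together with the conditional densities $p_{\epsilon|X}(\cdot\,|\,x)$, and the Tonelli interchange — all of which is routine once nonnegativity of the integrand is noted.
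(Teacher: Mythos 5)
Your proposal is correct and takes essentially the same route as the paper: the paper's proof is a one-line observation that $\epsilon=Y-f^*(X)=E+f(X)-f^*(X)$, so the conditional density of $E$ given $X=x$ is the shifted noise density, which you simply spell out rigorously at the level of distribution functions with a Tonelli interchange. Your bound by $M_p$ is the right constant from the MEE admissibility of the model (the lemma's ``$M$'' is a minor notational slip in the paper for the same density bound), so there is nothing to fix.
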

\begin{proof} The equation \eqref{pe} follows from the fact that
$$\epsilon=Y-f^*(X)=E+f(X)-f^*(X).$$ The inequality $|p_{\!_E}(e)|\le M$
 follows from the assumption $|p_{\epsilon|X}(t)|\le M$.
\end{proof}

Denote by $B_L$ and $B_U$ the lower bound and upper bound of $\bE[p_{\!_E}]$ over $\calH$, i.e.,
$$ B_L=\inf_{f\in\calH} \int_\RR (p_{\!_E}(e))^2de\quad \hbox{ and }\quad
 B_U=\sup_ {f\in\calH} \int_\RR (p_{\!_E}(e))^2de.$$
\begin{lemma}\label{lem:BLU}
We have $0<B_L$ and $B_U\le M_p.$
\end{lemma}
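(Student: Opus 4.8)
The plan is to treat the two bounds separately; the upper bound is immediate and the lower bound is the only place requiring a small idea.

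For $B_U$: by Lemma~\ref{lem:pe}, for every $f\in\calH$ the function $p_{\!_E}$ is a probability density (nonnegative, integrating to $1$) that is pointwise bounded by $M_p$. Hence $\int_\RR (p_{\!_E}(e))^2\,de\le \|p_{\!_E}\|_\infty\int_\RR p_{\!_E}(e)\,de\le M_p$, and taking the supremum over $f\in\calH$ gives $B_U\le M_p$.

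For $B_L$: the key point is to produce a \emph{single} bounded interval $[-R,R]$, with $R$ depending only on $M$ and on the fixed law of $\epsilon$ (not on the particular $f\in\calH$), on which $p_{\!_E}$ always carries at least, say, $3/4$ of its mass. Write $E=Y-f(X)=\epsilon+\big(f^*(X)-f(X)\big)$. MEE admissibility of the model and of $\calH$ force $|f^*(x)-f(x)|\le 2M$ for all $x\in\cX$ and all $f\in\calH$, so $|E|\le|\epsilon|+2M$ regardless of $f$. Since $\{|\epsilon|\le t\}$ increases to the whole sample space as $t\to\infty$, continuity from below of $\bP$ yields some $R_0>0$ with $\bP(|\epsilon|\le R_0)\ge 3/4$; setting $R:=R_0+2M$ we obtain $\int_{-R}^{R}p_{\!_E}(e)\,de=\bP(|E|\le R)\ge\bP(|\epsilon|\le R_0)\ge 3/4$ for every $f\in\calH$. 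Then Cauchy--Schwarz gives $\tfrac34\le\int_{-R}^{R}p_{\!_E}(e)\,de\le\sqrt{2R}\,\big(\int_{-R}^{R}(p_{\!_E}(e))^2\,de\big)^{1/2}$, hence $\int_\RR (p_{\!_E}(e))^2\,de\ge\frac{9}{32R}>0$ uniformly over $f\in\calH$, and therefore $B_L\ge\frac{9}{32R}>0$.

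The only (mild) obstacle is ensuring the lower bound is uniform over the whole hypothesis space rather than holding pointwise in $f$; this is exactly where the uniform boundedness in Definition~\ref{hypospace}(i) together with the boundedness of $f^*$ is used, confining the shift $f^*-f$ to the fixed window $[-2M,2M]$ and thus keeping the mass of $p_{\!_E}$ from escaping to infinity. No regularity of $p_{\epsilon|X}$ beyond its mere existence is needed for $B_L$, and the density bound $M_p$ enters only in the estimate for $B_U$.
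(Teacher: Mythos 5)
Your proof is correct and follows essentially the same route as the paper's: the upper bound via $\int p_{\!_E}^2\le\|p_{\!_E}\|_\infty\int p_{\!_E}\le M_p$, and the lower bound by locating a fixed interval (depending only on $M$ and the noise law, not on $f\in\calH$) that carries a uniform fraction of the mass of $p_{\!_E}$ and then applying Cauchy--Schwarz. The paper phrases the mass bound through a change of variable in the density formula \eqref{pe} while you argue via the event inclusion $\{|\epsilon|\le R_0\}\subset\{|E|\le R_0+2M\}$, but this is the same argument with different bookkeeping.
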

\begin{proof} Since $\dint_\cX\dint_{-\infty}^{\infty}p_{\epsilon|X}(t|x)dt d\rho_X(x)=1$, there is some constant $0<A<+\infty$ such that
 $$a=\int_\cX\int_{-A}^A p_{\epsilon|X}(t|x)dt d\rho_{_X}(x)>\frac 12.$$
For any $f\in\calH$, by the fact $|f|\le M$ and $|f^*|\le M$, it is easy to check form \eqref{pe} that
$$\begin{array}{rl}
\dint_{-(A+2M)}^{A+2M} p_{\!_E}(e) de &
= \dint_\cX \dint_{-(A+2M)}^{A+2M} p_{\epsilon|X}(e+f(x)-f^*(x)|x) de d\rho_{_X}(x) \\
& = \dint_\cX \dint_{-(A+2M)+f(x)-f^*(x)}^{A+2M+f(x)-f^*(x)} p_{\epsilon|X}(t|x) dt d\rho_{_X}(x) \\
& \ge \dint_{\cX}\dint_{-A}^A p_{\epsilon|X}(t|x) dt d\rho_{_X}(x)
=a.
\end{array}$$
Then by the Schwartz inequality we have
$$\begin{array}{rl} a\le & \dint_{-(A+2M)}^{A+2M} p_{\!_E}(e) de \le \left(\dint_{-(A+2M)}^{A+2M} (p_{\!_E}(e))^2de\right)^{\frac 12}\left(\dint_{-(A+2M)}^{A+2M} de\right)^{\frac 12} \\
& \le \sqrt{2A+4M}\left(\dint_\RR (p_{\!_E}(e))^2de\right)^{\frac 12} .\end{array}$$
This gives
$$\dint_\RR (p_{\!_E}(e))^2de \ge \frac {a^2}{2A+4M} \ge \frac{1}{8A+16M}$$ for any
$f\in\calH$. Hence $B_L\ge \frac{1}{8A+16M}>0.$

The second inequality follows from
the fact that $p_{\!_E}$ is a density function and uniformly bounded by $M_p$. This proves Lemma \ref{lem:BLU}.
\end{proof}

It helps our analysis to remove the logarithm from the R\'enyi's entropy \eqref{Renyi} and define \begin{eqnarray}\label{VRenyi}
V(f) = -\bE[p_{\!_E}] = -\int (f_{\!_E}(e))^2\,de.
\end{eqnarray}
Then
$\calR(f)=-\log(-V(f)).$ Since $-\log(-t)$ is strictly increasing
for $t\leq0,$ minimizing $\calR(f)$ is equivalent to minimizing
$V(f).$ As a result, their minimizers are the same. Denote
$V^*=\inf\limits_{f:\cX\rightarrow\RR}V(f)$. Then $V^*(f) = -\log(-\calR^*),$ and we
have the following lemma.
\begin{lemma} \label{RVcompare}
For any $f\in\calH$ we have
$$\frac 1{B_U}\Big(V(f)-V^*\Big) \le \calR(f)-\calR^*\le \frac 1{B_L}
\Big(V(f)-V^*\Big).$$
\end{lemma}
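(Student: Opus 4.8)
The plan is to exploit that $\calR$ and $V$ are related by the smooth, strictly increasing map $t\mapsto -\log(-t)$ on the negative half-line, and to control the derivative of this map on the relevant range of values $V(f)$ for $f\in\calH$. Concretely, for $f\in\calH$ write $u=-V(f)=\int_\RR (p_{\!_E}(e))^2\,de$ and $u^*=-V^*$, so that $\calR(f)=-\log u$ with the convention that $\calR^*=-\log u^*$ where $u^*=\sup_{f:\cX\to\RR}\int (p_{\!_E})^2$. By Lemma \ref{lem:BLU} we have $u\in[B_L,B_U]$ for every $f\in\calH$, and since $f^*_\calR\in\calH$ by the MEE admissibility of $\calH$, the infimum $\calR^*$ is in fact attained at a function in $\calH$, hence $u^*\in[B_L,B_U]$ as well. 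Thus both $u$ and $u^*$ lie in the compact interval $[B_L,B_U]\subset(0,\infty)$.

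The first step is the elementary mean value estimate: for $u,u^*\in[B_L,B_U]$,
$$\calR(f)-\calR^*=-\log u-(-\log u^*)=\log u^*-\log u=\frac{1}{\xi}\,(u^*-u)$$
for some $\xi$ between $u$ and $u^*$, hence $\xi\in[B_L,B_U]$. Since $u^*-u=-V(f)-(-V^*)=V(f)-V^*\ge 0$, and $\tfrac1{B_U}\le\tfrac1\xi\le\tfrac1{B_L}$, this gives exactly
$$\frac{1}{B_U}\big(V(f)-V^*\big)\le \calR(f)-\calR^*\le \frac{1}{B_L}\big(V(f)-V^*\big),$$
which is the claimed two-sided bound. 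Alternatively, if one prefers to avoid invoking attainment of $\calR^*$ within $\calH$, one can argue more directly: for the upper bound use the inequality $\log a-\log b\le (a-b)/b$ valid for $a\ge b>0$ with $a=u^*$, $b=u\ge B_L$ (noting $u^*\ge u$ since $u^*$ is a supremum over a larger class); for the lower bound use $\log a-\log b\ge (a-b)/a$ with $a=u^*\le B_U$. Either route yields the same conclusion, and I would present the second one to keep the hypotheses minimal — it only needs $B_L\le u$ and $u^*\le B_U$, both of which follow from Lemma \ref{lem:BLU} together with the fact that $u^*$ is a supremum of quantities each bounded above by $B_U$ (since admissibility places $f^*_\calR$ in $\calH$; and in any case each $p_{\!_E}$ is bounded by $M_p$).

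There is essentially no obstacle here — the only point requiring a line of care is ensuring that both $u$ and $u^*$ are pinned inside $[B_L,B_U]$ so that the logarithm's derivative is uniformly controlled; this is immediate from Lemma \ref{lem:BLU} and the admissibility assumption that a minimizer $f^*_\calR$ of $\calR$ lies in $\calH$. The lemma is the routine bridge that lets the rest of the paper transfer the concentration estimates for the polynomial-type functional $V$ (which, unlike $\calR$, is an average of a bounded kernel and amenable to U-statistic or empirical-process arguments) back to statements about the R\'enyi entropy $\calR$.
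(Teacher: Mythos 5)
Your proof is correct and follows essentially the same route as the paper: the paper likewise applies the mean value theorem to $-\log(-t)$ at the values $V(f)$ and $V(f^*_\calR)$, using that the minimizer lies in $\calH$ so that both values fall in $[-B_U,-B_L]$ and the derivative $-1/\xi$ is pinned between $1/B_U$ and $1/B_L$. Your alternative via $\log a-\log b\le (a-b)/b$ and $\ge (a-b)/a$ is only a cosmetic variant (and, as you note, the lower bound still needs $u^*\le B_U$, i.e.\ attainment of the minimizer in $\calH$), so no substantive difference from the paper's argument.
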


\begin{proof} Since the derivative of the function $-\log(-t)$ is $-\frac 1t$, by the mean value theorem we get
$$\calR(f)-\calR^*=\calR(f)-\calR(f_\calR^*) = -\log(-V(f))-[-\log(-V(f_\calR^*))] = -\frac 1{\xi} \left(V(f)-V(f_\calR^*)\right)$$
for some $\xi\in[V(f_\calR^*), V(f)] \subset[-B_U, -B_L].$ This leads to the conclusion.
\end{proof}

From Lemma \ref{RVcompare} we see that, to prove Theorem \ref{thm:entropyconsistency},
it is equivalent to prove the convergence of $V(f_\bz)$ to $V^*.$ To this end we define an empirical version of the generalization error $\calE_{h,\bz}(f)$ as
$$\calE_{h,\bz}(f) = -\frac 1{n^2}\sum_{i,j=1}^n G_h(e_i-e_j)
 = -\frac 1{n^2}\sum_{i,j=1}^n G_h\Big((y_i-f(x_i)-(y_j-f(x_j))\Big).$$
Again we see the equivalence between minimizing $\calR_\bz(f)$ and
minimizing $\calE_{h,\bz}(f).$ So $f_\bz$ is also a minimizer of
$\calE_{h,\bz}$ over the hypothesis class $\calH.$ We then can bound
$V(f_\bz)-V^*$ by an error decomposition as
$$\begin{array}{rl}
V(f_\bz)-V^* & = \Big(V(f_\bz)-\calE_{h,\bz}(f_\bz)\Big) +
\Big(\calE_{h,\bz}(f_\bz)-\calE_{h,\bz}(f_\calR^*)\Big) +
\Big(\calE_{h,\bz}(f_\calR^*)-V(f_\calR^*)\Big) \\
&\le 2\sup\limits_{f\in\calH}\left|\calE_{h,\bz}(f)-V(f)\right|
\le 2 \calS_\bz + 2\calA_h.
\end{array}$$
where $\calS_\bz$ is called the sample error defined by $\calS_\bz=\sup\limits_{f\in\calH}\left|\calE_{h,\bz}(f)-\calE_h(f)\right|$ and $\calA_h$ is called approximation error defined by $\sup\limits_{f\in\calH}\left|\calE_{h}(f)-V(f)\right|.$

The sample error $\calS_\bz$ depends on the sample, and can be estimated by the following proposition.
\begin{proposition} \label{prop:szbound} There is a constant $B>0$ depending on M, c and s (in Definition \ref{hypospace}) such that for every $\epsilon_1>0$,
$$ \bP\left(\calS_\bz>\varepsilon_1+\dfrac{B}{h^2\sqrt n}\right) \le
\exp(-2nh^2\varepsilon_1^2).$$
\end{proposition}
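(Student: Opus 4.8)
The plan is to bound $\calS_\bz = \sup_{f\in\calH}|\calE_{h,\bz}(f)-\calE_h(f)|$ by a uniform concentration argument, treating the quantity inside the supremum as a difference between a (second-order) empirical average and its expectation. Writing $K_f(z,z') = G_h\big((y-f(x))-(y'-f(x'))\big)$ for $z=(x,y)$, $z'=(x',y')$, we have $\calE_h(f) = -\int\int K_f\,d\rho\,d\rho$ while $\calE_{h,\bz}(f) = -\frac{1}{n^2}\sum_{i,j}K_f(z_i,z_j)$. The first thing I would do is separate off the diagonal terms $i=j$: the diagonal contributes $-\frac{1}{n^2}\sum_i K_f(z_i,z_i) = -\frac{1}{n}G_h(0) = -\frac{1}{\sqrt{2\pi}\,nh}$, which is deterministic and of size $O\big(\frac{1}{nh}\big)$. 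Since $\frac{1}{nh}\le \frac{1}{h^2\sqrt n}$ when $h\le 1$ (and in any case is absorbable), this is where part of the $\frac{B}{h^2\sqrt n}$ term comes from. The off-diagonal part $-\frac{1}{n(n-1)}\sum_{i\ne j}K_f(z_i,z_j)$, up to the harmless factor $\frac{n-1}{n}$, is a genuine U-statistic of order $2$ with bounded, symmetric kernel $\|K_f\|_\infty \le G_h(0) = \frac{1}{\sqrt{2\pi}\,h}$.

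Next I would handle the U-statistic uniformly over $\calH$. The standard device is Hoeffding's decomposition / the reduction of a U-statistic to an average of i.i.d. blocks: $\frac{1}{n(n-1)}\sum_{i\ne j}K_f(z_i,z_j)$ can be written as an average over permutations of $\frac{1}{\lfloor n/2\rfloor}\sum_{k}K_f(z_{\sigma(2k-1)},z_{\sigma(2k)})$, reducing the problem to ordinary empirical processes indexed by the function class $\{(z,z')\mapsto K_f(z,z'): f\in\calH\}$ with $m=\lfloor n/2\rfloor$ i.i.d. samples. To this I would apply a uniform deviation bound in the spirit of \cite{Bart:Mend:2002, wu2009}: a one-sided estimate of the form $\bP\big(\sup_f |\frac1m\sum K_f - \EE K_f| > \varepsilon_1 + (\text{complexity term})\big)\le \exp(-c\, m\, \varepsilon_1^2/\|K\|_\infty^2)$. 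Since $\|K_f\|_\infty \asymp 1/h$ and $m\asymp n$, the exponent becomes $\asymp n h^2 \varepsilon_1^2$, matching the claimed $\exp(-2nh^2\varepsilon_1^2)$ after tracking constants; the complexity term, controlled via the $\ell_2$-empirical covering number, contributes the $\frac{B}{h^2\sqrt n}$ offset once one notes that a Lipschitz-in-$f$ property of $G_h$ (with Lipschitz constant $\|G_h'\|_\infty \asymp 1/h^2$) transfers the cover number bound $\log\calN_2(\calH,\varepsilon)\le c\varepsilon^{-s}$ on $\calH$ to the kernel class, and Dudley-type entropy integration against $s<2$ yields a finite constant $B=B(M,c,s)$ times $\frac{1}{h^2\sqrt n}$.

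The main obstacle I anticipate is the fact that $\calE_{h,\bz}$ is a $U$-statistic rather than an ordinary empirical average, so one cannot directly quote a textbook empirical-process bound; the decoupling/symmetrization step must be carried out carefully, and one must verify that the effective noise scale is $\|G_h\|_\infty \asymp 1/h$ (Hoeffding-type bound for bounded kernels) and that the effective sample size is of order $n$, since it is the interaction of these two that produces the exponent $nh^2$ and the offset $h^{-2}n^{-1/2}$. A secondary technical point is the dependence of the kernel $K_f$ on $f$ only through the \emph{translate} $f(x)-f(x')$: one should check that the map $f\mapsto K_f$ is Lipschitz from $(\calH,\|\cdot\|)$ into $L^2$ of the kernel class with constant $\|G_h'\|_\infty$, so that covering numbers transfer with only the expected $h^{-2}$ loss and no loss of the exponent $s$. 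Once these pieces are in place, combining the diagonal contribution with the U-statistic bound and choosing constants gives exactly the stated inequality.
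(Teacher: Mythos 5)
Your plan is essentially correct, but it takes a genuinely different route from the paper's. The paper never decouples the V-statistic into a U-statistic: for the deviation it applies McDiarmid's inequality directly to $\calS_\bz$ as a function of the $n$ samples (replacing one $z_i$ perturbs $\calE_{h,\bz}(f)$ uniformly in $f$ by at most about $1/(nh)$), which yields exactly $\exp(-2nh^2\varepsilon_1^2)$; for the mean it writes $h\,\calS_\bz\le S_1+S_2+S_3$ (two conditional one-sample empirical processes plus a diagonal remainder), symmetrizes, and uses the Rademacher contraction lemma with the Lipschitz constant $\asymp 1/h$ of the Gaussian factor to reduce to $\mathcal{R}_n(\H)$, which the entropy integral bounds by $O(M^{1-s/2}/\sqrt n)$; the two $1/h$ losses combine into the $B/(h^2\sqrt n)$ offset. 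Your route --- discard the deterministic diagonal $-\frac{1}{\sqrt{2\pi}\,nh}$ (a nice simplification of the paper's $S_3$), treat the off-diagonal part as an order-two U-statistic, reduce to i.i.d.\ block averages by Hoeffding's permutation trick, and run a covering-number/Dudley bound on the induced kernel class --- also works: the factor $2$ lost in effective sample size is repaid by the explicit kernel range $1/(\sqrt{2\pi}h)$, so the constant $2$ in the exponent survives, provided you transfer the McDiarmid-type bound through the permutation average by exponential moments/convexity. Two execution points need care. First, the Dudley integral over the kernel class must be truncated at that class's diameter, which after your Lipschitz transfer is $\asymp M\|G_h'\|_\infty\asymp M/h^2$, not at the sup-norm level $\asymp 1/h$; integrating to $1/h$ gives only $O(h^{-1-s/2}n^{-1/2})$, which is weaker than $B/(h^2\sqrt n)$ in the large-$h$ regime of Theorem \ref{thm:regressionconsistency}. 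Second, absorbing the $O(1/(nh))$ diagonal/bias terms into $B/(h^2\sqrt n)$ tacitly requires $h\lesssim\sqrt n$; the paper's own choice of $B$ carries the same implicit restriction, so this is not a defect relative to the paper, but it is worth stating. In short, the paper's argument buys a shorter concentration step and reuses its contraction machinery, while yours is closer to the U-statistics viewpoint of the authors' earlier work and isolates the stochastic part more transparently.
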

This proposition implies that $\calS_\bz$ is bounded by
$O\left(\frac 1{h^2\sqrt{n}}+\frac 1{h\sqrt{n}}\right)$ with large
probability. The proof of this proposition is long and complicated.
But it is rather standard in the context of learning theory. So we
leave it in the appendix where the constant B will be given explicitly.

The approximation error is small when h tends to zero, as shown in next proposition.
\begin{proposition} \label{prop:Ahbound}
We have $\lim\limits_{h\to 0} \calA_h=0.$
If the derivative of $p_{\epsilon|X}$ is uniformly bounded by a constant $M'$, then $\calA_h\leq M'h$.
\end{proposition}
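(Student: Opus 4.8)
The starting point is the elementary identity $\int_\RR G_h(e-e')\,de'=1$, which lets me write $\int_\RR (p_{\!_E}(e))^2\,de=\int_\RR\int_\RR G_h(e-e')\,p_{\!_E}(e)p_{\!_E}(e)\,de\,de'$ and hence
$$\calE_h(f)-V(f)=\int_\RR\int_\RR G_h(e-e')\,p_{\!_E}(e)\big(p_{\!_E}(e)-p_{\!_E}(e')\big)\,de\,de'=\int_\RR p_{\!_E}(e)\big(p_{\!_E}(e)-(G_h * p_{\!_E})(e)\big)\,de.$$
(Symmetrizing in $e\leftrightarrow e'$ also gives $\calE_h(f)-V(f)=\tfrac12\int_\RR\int_\RR G_h(e-e')\big(p_{\!_E}(e)-p_{\!_E}(e')\big)^2\,de\,de'\ge0$, a useful sanity check showing $\calA_h\ge0$, though it is not needed for the bounds.) Thus everything reduces to controlling how far the mollification $G_h * p_{\!_E}$ sits from $p_{\!_E}$, measured against $p_{\!_E}$ itself, and crucially this must be done \emph{uniformly} over $f\in\calH$.

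For the quantitative estimate $\calA_h\le M'h$ I would invoke Lemma \ref{lem:pe}: since $p_{\!_E}(e)=\int_\cX p_{\epsilon|X}(e+f(x)-f^*(x)|x)\,d\rho_{_X}(x)$ and $p_{\epsilon|X}(\cdot|x)$ has derivative bounded by $M'$ uniformly in $x$, the mean value theorem applied inside the integral gives $|p_{\!_E}(e)-p_{\!_E}(e')|\le M'|e-e'|$ for every $f\in\calH$. Consequently $|p_{\!_E}(e)-(G_h * p_{\!_E})(e)|\le\int_\RR G_h(u)\,|p_{\!_E}(e)-p_{\!_E}(e-u)|\,du\le M'\int_\RR G_h(u)|u|\,du=M'h\sqrt{2/\pi}$. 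Substituting into the identity above and using $\int_\RR p_{\!_E}(e)\,de=1$ yields $|\calE_h(f)-V(f)|\le M'h\sqrt{2/\pi}<M'h$ for every $f\in\calH$, hence $\calA_h\le M'h$.

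For the unconditional claim $\lim_{h\to0}\calA_h=0$, without a derivative bound, I would instead estimate $|\calE_h(f)-V(f)|\le M_p\,\|p_{\!_E}-G_h * p_{\!_E}\|_{L^1}$ (using $p_{\!_E}\le M_p$), and then $\|p_{\!_E}-G_h * p_{\!_E}\|_{L^1}\le\int_\RR G_h(u)\,\|p_{\!_E}-\tau_u p_{\!_E}\|_{L^1}\,du$, where $\tau_u$ denotes translation. Applying the representation in Lemma \ref{lem:pe} together with Minkowski's integral inequality, and then the substitution $t=e+f(x)-f^*(x)$, removes all dependence on $f$ and gives $\|p_{\!_E}-\tau_u p_{\!_E}\|_{L^1}\le\int_\cX\omega\big(p_{\epsilon|X}(\cdot|x),u\big)\,d\rho_{_X}(x)=:\Omega(u)$, where $\omega(g,u)=\|g-\tau_u g\|_{L^1}$. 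Since $\Omega(u)\le2$ and $\Omega(u)\to0$ as $u\to0$ (continuity of translation in $L^1$ plus dominated convergence over $\cX$), a standard approximate-identity argument gives $\calA_h\le M_p\int_\RR G_h(u)\Omega(u)\,du\to0$ as $h\to0$. The only genuine subtlety is exactly this uniformity over $\calH$: a crude bound could in principle worsen as $f$ varies, but the shift $f(x)-f^*(x)$ inside Lemma \ref{lem:pe} is absorbed by translating the dummy variable, so both $\Omega(u)$ and the Lipschitz constant $M'$ depend only on the noise. The remaining points—differentiation / mean value theorem under the integral sign and the Gaussian computation $\int_\RR G_h(u)|u|\,du=h\sqrt{2/\pi}$—are routine.
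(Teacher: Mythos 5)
Your proposal is correct, and its quantitative half is essentially the paper's own argument: both exploit the representation of $p_{\!_E}$ from Lemma \ref{lem:pe}, turn the uniform derivative bound on $p_{\epsilon|X}$ into the Lipschitz estimate $|p_{\!_E}(e)-p_{\!_E}(e')|\le M'|e-e'|$ (the paper applies this inside the $x$-integral, you apply it to $p_{\!_E}$ directly), and finish with the Gaussian first absolute moment $\int_\RR G_h(u)|u|\,du=h\sqrt{2/\pi}$, giving the same constant $\frac{2M'}{\sqrt{2\pi}}h\le M'h$. Where you genuinely diverge is the limit statement $\lim_{h\to0}\calA_h=0$: the paper bounds $\calA_h$ by a triple integral of pointwise differences of $p_{\epsilon|X}$ and invokes Lebesgue's Dominated Convergence Theorem rather tersely, leaving implicit both the uniformity of the bound over $f\in\calH$ (the integrand still carries the shift $f(x)-f^*(x)$ and the factor $p_{\!_E}$) and the fact that a bounded density need not converge pointwise under small translations at discontinuity points. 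Your route --- bounding by $M_p\|p_{\!_E}-G_h*p_{\!_E}\|_{L^1}$, absorbing the shift $f(x)-f^*(x)$ by a change of variable so that the modulus $\Omega(u)=\int_\cX\|p_{\epsilon|X}(\cdot|x)-\tau_u p_{\epsilon|X}(\cdot|x)\|_{L^1}\,d\rho_{_X}(x)$ is $f$-independent, and then using $L^1$-continuity of translation plus a standard approximate-identity split --- makes both points explicit and needs no continuity of the noise density, so it is arguably a cleaner justification of the same conclusion. The only cost is the usual measurability technicality for $x\mapsto\omega(p_{\epsilon|X}(\cdot|x),u)$, which the paper's own manipulations implicitly assume as well; your symmetrization remark showing $\calE_h(f)\ge V(f)$ is a nice but unused extra.
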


\begin{proof}
Since $G_h(t)=\frac1h G_1(\frac{t}{h})$, by changing the variable $e'$ to $\tau=\frac{e-e'}{h}$, we have
\begin{eqnarray*}
\calA_h &= & \sup\limits_{f\in\calH}
\left|\dint_\RR\dint_\RR \frac1h G_1(\frac{e-e'}{h})p_{\!_E}(e)p_{\!_E}(e') de de' - \dint_\RR (p_{\!_E}(e))^2de\right|\\
 & = &   \sup\limits_{f\in\calH}
\left|\dint_\RR\dint_\RR
G_1(\tau)p_{\!_E}(e-\tau h)d\tau p_{\!_E}(e)
de - \dint_\RR (p_{\!_E}(e))^2de \right|
\end{eqnarray*}
But $\int_\RR G_1(\tau)d\tau=1$, we see from \eqref{pe} that
\begin{eqnarray}
\calA_h & = &   \sup\limits_{f\in\calH}
\left|\dint_\RR  p_{\!_E}(e)  \dint_\RR
G_1(\tau)(p_{\!_E}(e-\tau
h)-p_{\!_E}(e))d\tau de\right|
\nonumber \\
&\le &  \sup\limits_{f\in\calH} \dint_\RR  p_{\!_E}(e)  \dint_\RR
G_1(\tau)\dint_\cX \Big|p_{\epsilon|X}(e-\tau h+f(x)-f^*(x)|x)-
\nonumber\\
& & \qquad
-p_{\epsilon|X} (e+f(x)-f^*(x)|x)\Big| d\rho_{_X}(x) d\tau de.
\label{DVbound}
\end{eqnarray}
It follows form Lebesgue's Dominated Convergence Theorem that $\lim\limits_{h\to 0}\calA_h=0.$

If $|p'_{\epsilon|X}|\le M'$ uniformly for an $M'$, we have $$\left| p_{\epsilon|X}(e-\tau
h+f(x)-f^*(x)|x)-p_{\epsilon|X}
(e+f(x)-f^*(x)|x)\right| \le M'|\tau| h.$$ Then from \eqref{DVbound}, we find
$$\calA_h\leq\sup\limits_{f\in\calH}\dint_\RR p_{\!_E}(e)de\dint_\RR G_1(\tau)|\tau|d\tau M'h
=\frac{2M'}{\sqrt{2\pi}}h\leq M'h.$$ This proves Proposition \ref{prop:Ahbound}.
\end{proof}

We are in a position to prove our first main result Theorem \ref{thm:entropyconsistency}.

\noindent\begin{proof}[Proof of Theorem \ref{thm:entropyconsistency}]
Let $0<\delta<1$. By take $\varepsilon_1>0$
such that $\exp(-2nh^2\varepsilon_1^2)=\delta$, i.e., $\varepsilon_1=\sqrt{\frac{\log(1/\delta)}{2nh^2}}$, we know from Proposition \ref{prop:szbound} that with probability at least $1-\delta$,
$$\calS_\bz\leq \varepsilon_1+\frac{B}{h^2\sqrt n}=\frac{1}{h^2\sqrt n}(B+\sqrt{\log(1/\delta)}h).$$
To prove the first statement, we apply assumption \eqref{echcond}. For any $\varepsilon>0$, there exists some $N_1\in\NN$ such that $(B+1)\frac1{h^2\sqrt n}<\frac{\varepsilon}{2}$ and $\sqrt{\log(1/\delta)}h\leq1$ whenever $n\geq N_1$. It follows that with probability at least $1-\delta$, $\calS_\bz<\frac{\varepsilon}{2}$.
By proposition \ref{prop:Ahbound} and $\lim\limits_{n\rightarrow\infty}h(n)=0$, there exists some $N_2\in\NN$ such that $\calA_h\leq\frac{\varepsilon}{2}$ whenever $n\geq N_2$. Combining the above two parts for $n\geq\max\{N_1, N_2\}$, we have with probability at least $1-\delta$,
$$V(f_\bz)-V^*\leq2\calS_\bz+2\calA_h\leq 2\varepsilon,$$
which implies by Lemma \ref{RVcompare},
$$\calR(f_\bz)-R^*\leq\frac{2}{B_L}\varepsilon.$$
Hence the probability of the event $\calR(f_\bz)-R^*\geq\frac{2}{B_L}\varepsilon$ is at most $\delta$. This proves the first statement of Theorem \ref{thm:entropyconsistency}.

To prove the second statement, we apply the second part of Proposition \ref{prop:Ahbound}. Then with probability at least $1-\delta$, we have
$$R(f_\bz)-\calR^*\leq\frac{1}{B_L}(V(f_\bz)-V^*)\leq\frac{2}{B_L}\left(\frac1{h^2\sqrt n}(B+\sqrt{\log(1/\delta)}h)+M'h\right).$$
Thus, if $C'_1n^{-\frac16}\leq h(n)\leq C'_2n^{-\frac16}$ for some constants $0<C'_1\leq C'_2$, we have with probability at least $1-\delta$,
$$\calR(f_\bz)-\calR^*\leq\frac{1}{B_L}(V(f_\bz)-V^*)\leq\frac{2}{B_L}\left(\frac1{(C'_1)^2}(B+C'_2\sqrt{\log(1/\delta)})
+M'C'_2\right)n^{-\frac16}.$$
Then the desired convergence rate is obtained. The proof of Theorem \ref{thm:entropyconsistency} is complete.
\end{proof}

\section{Regression consistency for homoskedastic models}

In this section we prove the regression consistency for
homoskedastic models stated in Theorem \ref{thm:homo}. Under the
homoskedasticity assumption, the noise $\epsilon$ is independent of
$x$, so throughout this section we will simply use $p_\epsilon$ to
denote the density function for the noise. Also, we use the
notations $E=E(f) = Y-f(X)$ and $E^*=Y-f^*(X).$

The probability density function of the random
variable $E=Y - f(X)$ is given by
$$ p_{\!_E} (e) = \int_{{\mathcal X}}
p_\epsilon(e + f(x) -f^* (x)) d \px (x). $$ Then
$$ \int_{\RR} (p_{\! E} (e))^2 d e = \int_{{\mathcal X}} \int_{{\mathcal X}}
\int_{\RR} p_\epsilon(e + f(x) -f^* (x)) p_\epsilon(e + f(u) -f^* (u)) d e d
\px (x) d \px (u).
$$
We apply the Planchel formula and find
$$\int_{\RR} p_\epsilon(e + f(x) -f^* (x)) p_\epsilon(e + f(u) -f^* (u)) d e =
\frac{1}{2 \pi} \int_{\RR} \widehat{p_\epsilon}(\xi) \e^{\ri \xi (f(x) -f^*
(x))} \overline{\widehat{p_\epsilon} (\xi) \e^{\ri \xi (f(u) -f^* (u))}} d
\xi. $$ It follows that
$$ \int_{\RR} (p_{\!E} (e))^2 d e = \frac{1}{2 \pi}\int_{{\mathcal X}} \int_{{\mathcal X}}
\int_{\RR} |\widehat{p_\epsilon}(\xi)|^2 \e^{\ri \xi (f(x) -f^* (x) - f(u)
+f^* (u))}d \xi d \px (x) d \px (u).
$$
This is obviously maximized when $f=f^*$ since $|\e^{\ri\xi t}| \leq
1$. This proves that $f^*$ is a minimizer of $V(f)$ and $\calR(f).$
Since $V(f)$ and $\calR(f)$ are invariant with respect to constant
translates, we have proved part (i) of Theorem \ref{thm:homo}.

To prove part (ii), we study the excess quantity
$V(f) - V(f^*)$ and express it as
$$\begin{array}{rl} V(f) - V(f^*) & = \dint_{\RR}
(p_{\!_{E^*}} (e))^2 d e - \dint_{\RR} (p_{\! _E} (e))^2 d e
\\
& = \dfrac{1}{2 \pi}\dint_{{\mathcal X}} \dint_{{\mathcal X}}
\dint_{\RR} |\widehat{p_\epsilon}(\xi)|^2 \left(1- \e^{\ri \xi (f(x) -f^*
(x) - f(u) +f^* (u))}\right)d \xi d \px (x) d \px (u) \\
&  = \dfrac{1}{2 \pi}\dint_{{\mathcal X}} \dint_{{\mathcal X}}
\int_{\RR} |\widehat{p_\epsilon}(\xi)|^2 2 \sin^2 \dfrac{\xi (f(x) -f^*
(x) - f(u) +f^* (u))}{2}d \xi d \px (x) d \px (u)
\end{array}$$
where the last equality follows from the fact that $V(f)-V(f^*)$ is real
and hence equals to its real part.

As both $f$ and $f^*$ take values on $[-M, M]$, we know that
$|f(x) -f^* (x) - f(u) +f^* (u)| \leq 4 M$ for any $x, u \in
{\mathcal X}$. So when $|\xi| \leq \frac{\pi}{4M}$, we have
$$\left|\frac{\xi (f(x) -f^*
(x) - f(u) +f^* (u))}{2}\right| \leq \frac{\pi}{2}$$ and
$$\left|\sin \frac{\xi (f(x) -f^*
(x) - f(u) +f^* (u))}{2}\right| \geq \frac{2}{\pi}\left|\frac{\xi
(f(x) -f^* (x) - f(u) +f^* (u))}{2}\right|. $$
Observe that the integrand in the expression of $V(f) - V(f^*)$ is
nonnegative and
\begin{eqnarray*} && \int_{\RR} |\widehat{p_\epsilon}(\xi)|^2 2 \sin^2 \frac{\xi (f(x) -f^*
(x) - f(u) +f^* (u))}{2}d \xi \\
&\geq& \int_{|\xi| \leq \frac{\pi}{4M}} |\widehat{p_\epsilon}(\xi)|^2 2
\sin^2 \frac{\xi (f(x) -f^* (x) - f(u) +f^* (u))}{2}d \xi
\\
&\geq& \int_{|\xi| \leq \frac{\pi}{4M}} |\widehat{p_\epsilon}(\xi)|^2
\frac{2}{\pi^2} \xi^2 \left(f(x) -f^* (x) - f(u) +f^* (u)\right)^2
d \xi.
\end{eqnarray*}
Therefore, \begin{eqnarray*} V(f) - V(f^*) &\geq& \frac{1}{\pi^3}
\int_{|\xi| \leq \frac{\pi}{4M}}\xi^2 |\widehat{p_\epsilon}(\xi)|^2 d \xi
\int_{{\mathcal X}} \int_{{\mathcal X}} \left(f(x) -f^* (x) - f(u)
+f^* (u)\right)^2  d \px (x) d \px (u).
\end{eqnarray*}
It was shown in \cite{HFWZ2012}  that
\begin{equation} \label{varf} \int_{{\mathcal X}} \int_{{\mathcal X}} \left(f(x) -f^* (x) - f(u)
+f^* (u)\right)^2  d \rho_X (x) d \rho_X (u) = 2
\|f-f^*+\bE(f^*-f)\|_{\L2p}^2.
\end{equation}  So we have
$$V(f) - V(f^*) \geq \left( \frac{2 }{\pi^3}
\int_{|\xi| \leq \frac{\pi}{4M}}\xi^2 |\widehat{p_\epsilon}(\xi)|^2 d \xi\right)
\|f-f^*+\bE(f^*-f)\|_{\L2p}^2. $$

Since the probability density function $p_\epsilon$ is integrable, its
Fourier transform $\widehat{p_\epsilon}$ is continuous. This together
with $\widehat{p_\epsilon}(0)=1$ ensures that $\widehat{p_\epsilon}(\xi)$
is nonzero over a small interval around $0.$
As a result $\xi^2 |\widehat{p_\epsilon}(\xi)|^2$
is not identically zero on $[-\frac {\pi}{4M},\, \frac {\pi}{4M}]$.
Hence the constant $$c=
\int_{|\xi| \leq \frac{\pi}{4M}}\xi^2 |\widehat{p_\epsilon}(\xi)|^2 d \xi$$ is positive
and the conclusion in (ii) is proved by taking $C=\frac {\pi^3B_U}{2c}$ and applying Lemma \ref{RVcompare}.

Parts (iii) and (iv) are easy corollaries of part (ii) and Theorem \ref{thm:entropyconsistency}.
This finishes the proof of Theorem \ref{thm:homo}.

%tells us that there exists some $b^* \in
%(0, \frac{\pi}{4M}]$ such that for any $\xi \in [-b^*, b^*]$, we
%have $|\widehat{p_\epsilon}(\xi) - \widehat{p_\epsilon} (0)| \leq \frac{1}{2}$
%which implies $\left|\widehat{p_\epsilon}(\xi)\right| \geq \frac{1}{2}$.
%Hence
%$$\int_{|\xi| \leq \frac{\pi}{4M}}\xi^2 |\widehat{p_\epsilon}(\xi)|^2 d
%\xi \geq \int_{|\xi| \leq b^*}\xi^2 |\widehat{p_\epsilon}(\xi)|^2 d \xi
%\geq \int_{|\xi| \leq b^*}\xi^2 \frac{1}{4} d \xi =
%\frac{(b^*)^3}{6}.
%$$ It follows that
%$$V(f) - V(f^*) \geq \frac{(b^*)^3}{3 \pi^3}
%\||f-f_\rho|\|_{L^2_{\rho_X}}^2. $$ This proves the statement with
%$c^* = \frac{(b^*)^3}{3 \pi^3}$.

\section{Incoincidence between error entropy consistency and regression consistency}

In the previous section we proved that for homoskedastic models the
error entropy consistency implies the regression consistency. But
for heteroskedastic models, this is not necessarily true. Here we
present a counter-example to show this incoincidence between two
types of consistency.

Let $\id_{(\cdot)}$ denote the indicator function on a set specified by the subscript.

\begin{example}\label{exam:incoin} Let ${\mathcal X} =\cX_1\bigcup\cX_2=[0, \frac 12] \bigcup [1, \frac 32]$ and
$\px$ be uniform on $\cX$ (so that $d\px=dx$). The conditional distribution of $\epsilon|X$
is uniform on $[-\frac 12, \frac 12]$ if  $x\in [0, \frac 12]$ and
 uniform on $[-\frac 32, -\frac 12]\bigcup [\frac 12,
\frac 32]$ if $x\in [1, \frac 32]$. Then
\begin{enumerate}
\item a function $f^*_\calR: {\mathcal X} \to
\RR$ is a minimizer of $\calR(f)$ if and only if there are two constants $f_1,\ f_2$ with $|f_1-f_2|=1$
such that $f_\calR^*=f_1\mathbf 1_{\cX_1} +f_2\mathbf 1_{\cX_2};$
\item $\calR^*=-\log(\frac 5 8)$ and $\calR(f^*)=-\log(\frac 38).$ So the regression function $f^*$
is not a minimizer of the error entropy functional $\calR(f);$
\item let $\calF_\calR^*$ denote the set of all minimizers. There is an a constant $C'$ depending on $\calH$ and M such that for any measurable function $f$ bounded by $M$,
$$\min_{f_\calR^*\in\calF_\calR^*} \|f-f_\calR^*\|_\L2p^2 \le C'\Big(\calR(f)-\calR^*\Big);$$
\item if the error entropy consistency is true, then there holds
$$\min_{f_\calR^*\in\calF_\calR^*} \|f_\bz-f_\calR^*\|_\L2p\longrightarrow 0 \qquad \hbox{and} \qquad
\min_{b\in\RR} \|f_\bz+b-f^*\|_\L2p \longrightarrow \frac 12$$
in probability. As a result, the regression consistency cannot be true.
\end{enumerate}
\end{example}

\begin{proof} Without loss of generality we may assume $M\ge 1$ in this example.

Denote $p_1(\epsilon)=p_{\epsilon|X}(\epsilon|x)$
for $x\in\cX_1$ and $p_2(\epsilon)=p_{\epsilon|X}(\epsilon|x)$ for $x\in\cX_2.$
By Lemma \ref{lem:pe}, the probability density function of  $E=Y - f(X)$ is given by
$$ \begin{array}{rl} p_{\!_E} (e) = & \dint_{{\mathcal X}}
p_{\epsilon|X}(e + f(x) -f^* (x)|x) d \px (x)
= \dsum_{j=1}^2\dint_{{\mathcal X}_j}
p_{j}(e + f(x) -f^* (x)) d x  . \end{array}$$
 So we have
$$ \begin{array}{rl} \dint_{\RR} (p_{\!_E} (e))^2 d e =
\dsum_{j, k=1}^2 \int_{{\mathcal X}_j} \int_{{\mathcal X}_k}
\int_{\RR} p_{j} (e + f(x) -f^* (x)) p_{k} (e + f(u) -f^* (u)) d e d \px(x) d \px(u).
\end{array}
$$
%$$
%Denote the Fourier transform of a probability density function $p$ as $\widehat{p}$. We apply the Planchel formula and find
%$$\int_{\RR} p_{j} (t + f(x) -f^* (x)) p_{k} (t + f(u) -f^* (u)) d t =
%\frac{1}{2 \pi} \int_{\RR} \widehat{p_j}(\xi) e^{i \xi (f(x) -f^* (x))} \overline{\widehat{p_k} (\xi) e^{i \xi (f(u) -f^*
%(u))}} d \xi. $$ It follows that
%$$ \int_{\RR} (p_{Y - f(X)} (t))^2 d t
By the Planchel formula,
$$ \begin{array}{rl} \dint_{\RR} (p_{\!_E} (e))^2 d e = \dfrac{1}{2 \pi} \dsum_{j, k=1}^2 \int_{{\mathcal X}_j} \int_{{\mathcal X}_k}
\int_{\RR} \widehat{p_j}(\xi) \overline{\widehat{p_k} (\xi)}
\e^{\ri \xi (f(x) -f^* (x) - f(u) +f^* (u))}d \xi d \px (x) d
\px (u).
\end{array}
$$

Let $p^*=\id_{[-\frac 12, \frac 12]}$ be the density function of the uniform distribution on  $[-\frac 12, \frac 12] $. Then  we have  $p_1 = p^*$ and $p_2 (e) = \frac{p^* (e+1) + p^* (e-1)}{2}$ which yields
$$\widehat{p_2} (\xi) = \frac{\e^{-\ri \xi} + \e^{\ri \xi}}{2}
\widehat{p^*}(\xi) =  \widehat{p^*}(\xi) \cos \xi.$$
These together with $f^* \equiv 0$ allow us to write
\begin{equation}\label{decom}
V(f) = -\int_{\RR} (p_{\!_E} (e))^2 d e =V_{11}(f) + V_{22}(f)
+ V_{12}(f), \end{equation} where
\begin{eqnarray*} V_{11}(f) &=& -\frac{1}{2 \pi} \int_{{\mathcal X}_1} \int_{{\mathcal X}_1} \int_{\RR}
\left|\widehat{p^*}(\xi)\right|^2 \e^{\ri \xi (f(x) -
f(u))}d \xi d \rho_X (x) d \rho_X (u),\\
V_{22}(f) &=& -\frac{1}{2 \pi} \int_{{\mathcal X}_2}
\int_{{\mathcal X}_2} \int_{\RR} \cos^2 \xi
 \left|\widehat{p^*}(\xi)\right|^2 \e^{\ri \xi (f(x) -
f(u))}d \xi d \rho_X (x) d \rho_X (u),\\
V_{12}(f) &=& -\frac{1}{\pi} \int_{{\mathcal X}_1} \int_{{\mathcal
X}_2} \int_{\RR} \left|\widehat{p^*}(\xi)\right|^2 \cos \xi \cos
\left(\xi (f(x) - f(u))\right) d \xi d \rho_X (x) d \rho_X (u).
\end{eqnarray*}

Recall the following identity from Fourier analysis (see e.g. \cite{JM1991})
\begin{equation}\label{identity}
\sum_{\ell \in \ZZ}\widehat{p^*}(\xi + 2 \ell \pi)
\overline{\widehat{p^* (\cdot - b)}(\xi + 2 \ell \pi)} =\sum_{\ell
\in\ZZ} \langle p^*(\cdot - \ell), p^* (\cdot - b)\rangle_{L^2
(\RR)} \e^{\ri \ell \xi}, \quad \forall \xi, b\in \RR.
\end{equation}
In particular, with $b=0$, since the integer translates of $p^*$
are orthogonal, there hold $\sum_{\ell \in \ZZ}
\left|\widehat{p^*}(\xi + 2 \ell \pi)\right|^2 \equiv 1$ and
$$\int_{\RR}
\left|\widehat{p^*}(\xi)\right|^2 \cos^j \xi d \xi = \int_{[-\pi,
\pi)} \sum_{\ell \in \ZZ} \left|\widehat{p^*}(\xi + 2 \ell
\pi)\right|^2 \cos^j \xi d \xi =\begin{cases} 0, &
\quad\hbox{if} \ j=1, \\
 2 \pi, & \quad\hbox{if} \ j=0, \\
 \pi, &\quad\hbox{if} \ j=2. \end{cases}
$$

For $V_{11}(f)$, notice the real analyticity of the
function $\widehat{p^*}(\xi) = \frac{2 \sin (\xi/2)}{\xi}$ and the
identity $$\int_{\RR} \left|\widehat{p^*}(\xi)\right|^2 \e^{\ri \xi
(f(x) - f(u))}d \xi = \int_{\RR} \left|\widehat{p^*}(\xi)\right|^2
\cos \left(\xi (f(x) - f(u))\right)d \xi.$$ We see that $V_{11}(f)$ is
minimized if and only if $f(x)=f(u)$ for any $x,u\in \cX_1$. In this case, $f$ is a constant on ${\mathcal X}_1$, denoted as $f_1$, and the minimum value of $V_{11}(f)$ equals $$V_{11}^* :=-(\px
({\mathcal X}_1))^2 =-\frac{1}{4}.$$ Moreover, if a measurable
function satisfies $f(x) \in [-M, M]$ for every $x\in {\mathcal
X}_1$, we have
\begin{eqnarray} V_{11} (f) - V_{11}^* &=& \frac{1}{2 \pi}\int_{{\mathcal X}_1} \int_{{\mathcal X}_1}
\int_{\RR} |\widehat{p^*}(\xi)|^2 \Big(1- \cos \left(\xi (f(x) -
f(u))\right)\Big) d \xi d \px (x) d \px (u) \nonumber\\
&=& \frac{1}{2 \pi}\int_{\cX_1} \int_{\cX_1}
\int_{\RR} |\widehat{p^*}(\xi)|^2 2 \sin^2 \left(\dfrac{\xi (f(x) -
f(u)}{2}\right) d \xi d \px (x) d \px (u) \nonumber \\
&\geq& \frac{1}{2 \pi}\int_{{\mathcal X}_1} \int_{{\mathcal X}_1}
\int_{|\xi| \leq \frac{\pi}{4M}}
|\widehat{p^*}(\xi)|^2 2 \left(\frac{2}{\pi} \frac{\xi (f(x) -
f(u))}{2}\right)^2 d \xi d \px (x) d \px (u) \nonumber \\
&\geq& \frac{1}{24 \pi^2 M^3} \int_{{\mathcal X}_1}
\int_{{\mathcal X}_1} \left(f(x) - f(u)\right)^2 d \px(x) d
\px (u) \nonumber\\
& = & \frac{1}{12 \pi^2 M^3}
\left\|f - m_{f, {\mathcal X}_1}\right\|^2_{\L2p({\mathcal
X}_1)} \label{partIest}
\end{eqnarray}
where $$m_{f,\cX_j} = \frac{\bE[f\id_{\cX_j}]}{\px(\cX_j)} =
\frac{1}{\px(\cX_j)}\int_{\cX_j} f(x)d\px(x)$$ denotes the mean of
$f$ on $\cX_j.$

Similarly, $V_{22}(f)$ is minimized if and only if $f$ is constant on ${\mathcal X}_2$, which will be denoted as $f_2,$ and the corresponding minimum value equals $$V_{22}^* := -\frac{1}{2}(\px ({\mathcal X}_2))^2 =-\frac{1}{8}.$$ Again, if a measurable function satisfies
$f(x) \in [-M, M]$ for every $x\in {\mathcal X}_2$, we have \begin{equation}\label{partIIest} V_{22} (f) - V_{22}^* \geq
\frac{1}{24 \pi^2 M^3} \left\|f - m_{f, {\mathcal
X}_2}\right\|^2_{\L2p({\mathcal X}_2)}.
\end{equation}

For $V_{12}(f)$, we express it as
$$ V_{12}(f) = -\frac{1}{4 \pi} \int_{{\mathcal X}_1}\!\int_{{\mathcal
X}_2}\! \int_{\RR} \left|\widehat{p^*}(\xi)\right|^2
\left(\e^{\ri\xi} + \e^{-\ri\xi}\right) \! \left(\e^{\ri\xi (f (x) -
f (u))}\! + \e^{-\ri\xi (f (x) - f (u))}\right) d \xi d \px\! (x) d
\px\! (u). $$ Write $f (x) - f (u)$ as $k_{f, x, u} + b_{f, x, u}$
with $k_{f, x, u} \in \ZZ$ being the integer part of the real number of $f(x)-f(u)$ and $b_{f, x, u} \in [0, 1)$. We have
\begin{eqnarray*} &&\int_{\RR} \left|\widehat{p^*}(\xi)\right|^2 \left(\e^{\ri\xi} + \e^{-\ri\xi}\right)
\e^{\ri\xi (f(x) - f (u))}d \xi \\
&=& \int_{\RR} \widehat{p^*}(\xi) \overline{\widehat{p^*}(\xi) \e^{-\ri \xi b_{f, x, u}}} \left(\e^{\ri \xi (k_{f, x, u}+1)} + \e^{\ri \xi (k_{f, x, u} -1)}\right)d \xi \\
&=&\int_{\RR} \widehat{p^*}(\xi) \overline{\widehat{p^* (\cdot - b_{f, x, u})}(\xi)} \left(\e^{\ri \xi (k_{f, x, u}+1)} + \e^{\ri\xi (k_{f, x, u} -1)}\right)d \xi \\
&=& \int_{[-\pi, \pi)} \left\{\sum_{\ell \in \ZZ}\widehat{p^*}(\xi + 2 \ell \pi) \overline{\widehat{p^* (\cdot - b_{f, x,u})}(\xi + 2 \ell \pi)}\right\} \left(\e^{\ri \xi (k_{f, x, u}+1)} + \e^{\ri \xi
(k_{f, x, u} -1)}\right)d \xi \\
&=& \int_{[-\pi, \pi)} \left\{\sum_{\ell \in\ZZ} \langle p^*(\cdot -
\ell), p^* (\cdot - b_{f, x, u})\rangle_{L^2 (\RR)} e^{\ri \ell
\xi}\right\} \left(\e^{\ri \xi (k_{f, x, u}+1)} + \e^{\ri \xi (k_{f,
x, u} -1)}\right)d \xi,
\end{eqnarray*}
where we have used (\ref{identity}) in the last step. Since $b_{f,
x, u} \in [0, 1)$, we see easily that
$$\langle p^*(\cdot -
\ell), p^* (\cdot - b_{f, x, u})\rangle_{L^2 (\RR)}
=\begin{cases} 1-b_{f, x, u}, &
\hbox{if} \ \ell=0, \\
b_{f, x, u}, &\hbox{if} \ \ell=1, \\
0, &\hbox{if} \ \ell \in\ZZ\setminus\{0, 1\}. \end{cases}
$$ Hence
\begin{eqnarray*} &&\int_{\RR} \left|\widehat{p^*}(\xi)\right|^2 \left(\e^{\ri\xi} + \e^{-\ri\xi}\right)
\e^{\ri\xi (f(x) - f (u))} d \xi \\
&=& \int_{[-\pi, \pi)} \Big(1-b_{f, x, u} + b_{f, x, u} \e^{\ri \xi}\Big)  \left(\e^{\ri \xi (k_{f, x,
u}+1)} + \e^{\ri \xi (k_{f, x, u} -1)}\right) d \xi
\\
&=&\begin{cases} 2 \pi(1-b_{f, x, u}), & \quad
\hbox{if} \ k_{f, x, u}=1, -1, \\
2 \pi b_{f, x, u}, &\quad \hbox{if} \ k_{f, x, u}=0, -2, \\
0, &\quad \hbox{if} \ k_{f, x, u} \in\ZZ\setminus\{1, 0, -1, -2\}. \end{cases}
\end{eqnarray*}
Using the same procedure, we see that $\dint_{\RR} \left|\widehat{p^*}(\xi)\right|^2 \left(\e^{\ri\xi} + \e^{-\ri\xi}\right)\e^{-\ri\xi (f (x) - f (u))} d \xi$  has exactly the same value. Thus
\begin{eqnarray*}
&&-\frac{1}{4 \pi} \int_{\RR} \left|\widehat{p^*}(\xi)\right|^2 \left(\e^{\ri\xi} + \e^{-\ri\xi}\right) \left(\e^{\ri\xi (f (x) - f (u))} +
\e^{-\ri\xi (f (x) - f (u))}\right) d \xi \\
&=& \begin{cases} b_{f, x, u} -1, &
\quad\hbox{if} \ k_{f, x, u}=1, -1, \\
-b_{f, x, u}, &\quad \hbox{if} \ k_{f, x, u}=0, -2, \\
0, &\quad \hbox{if} \ k_{f, x, u} \in\ZZ\setminus\{1, 0, -1, -2\}. \end{cases}
\end{eqnarray*}
Denote \begin{eqnarray*} \Delta_1& =& \{(x, u)\in \cX_1\times\cX_2: 1 \leq f(x) - f(u) <2\}\bigcup \{(x, u)\in \cX_1\times\cX_2: -1 \leq f(x) - f(u) <0\},\\
 \Delta_2& = &\{(x, u)\in \cX_1\times\cX_2: 0 \leq f(x) - f(u) <1\}\bigcup \{(x, u)\in \cX_1\times\cX_2: -2 \leq f(x) - f(u) <-1\},\\
 \Delta_3 &= &\{(x, u)\in \cX_1\times\cX_2: f(x) - f(u) <-2\}\bigcup \{(x, u)\in \cX_1\times\cX_2: f(x) - f(u) \geq 2\}.
 \end{eqnarray*}
 Note that $k_{f, x, u}$ is the integer part of $f(x) - f(u)$. We have
\begin{eqnarray*} V_{12}(f) &=& \int_{{\mathcal X}_1} \int_{{\mathcal X}_2}
\Big\{\left(b_{f, x, u} -1\right) \id_{\Delta_1}(x,u)  - b_{f, x, u} \id_{\Delta_2}(x,u)\Big\} d \px (x) d \px (u).
\end{eqnarray*}
Since $0\leq b_{f, x, u} <1$, we see that $V_{12}(f)$ is minimized if and only if $b_{f, x, u} =0$, $\Delta_1=\cX_1\times \cX_2$ and $\Delta_2=\emptyset$. These conditions are equivalent to $f(x)-f(u)=k_{f,x,u}=\pm 1$  for almost all $(x, u) \in {\mathcal X}_1 \times {\mathcal X}_2$. Therefore, $V_{12}(f)$ is minimized if and only if $|f(x) - f(u)| =1$ for almost every $(x,
u) \in {\mathcal X}_1 \times {\mathcal X}_2$. In this case, the minimum value of $V_{12}(f)$ equals $$V_{12}^* :=-\px({\mathcal X}_1)
\px ({\mathcal X}_2) =-\frac{1}{4}.$$ Moreover, for any measurable function $f$, we have
\begin{eqnarray*} V_{12} (f) - V_{12}^* &=& \int_{{\mathcal X}_1} \int_{{\mathcal X}_2}
b_{f, x, u} \id_{\Delta_1}(x,u) + \left(1- b_{f, x, u}\right) \id_{\Delta_2}(x,u)
 + \id_{\Delta_3}(x,u) d
\px (x) d \px (u).
\end{eqnarray*}

On $\Delta_1$, we have $b_{f, x, u} = \left| |f(x) - f(u)|-1\right|$
and as a number on $[0, 1)$, it satisfies $b_{f, x, u} = \left|
|f(x) - f(u)|-1\right| \geq \left(|f(x) - f(u)|-1\right)^2$.
Similarly on $\Delta_2$ we have $1- b_{f, x, u} =\left| |f(x) -
f(u)|-1\right| \geq \left(|f(x) - f(u)|-1\right)^2$. On $\Delta_3$,
since the function $f$ takes values on $[-M, M]$, we have
$2\leq|f(x)-f(u)|\le 2M.$ Therefore $1 \geq
\frac{1}{4M^2}\left(|f(x) - f(u)|-1\right)^2$. Thus,
\begin{eqnarray*} V_{12} (f) - V_{12}^* & \geq & \frac{1}{4M^2} \int_{{\mathcal X}_1} \int_{{\mathcal
X}_2} \Big(|f(x) - f(u)|-1\Big)^2 d \px (x) d \px (u)\\
& \ge & \frac 1{48\pi^2M^3} \int_{{\mathcal X}_1} \int_{{\mathcal
X}_2} \Big(|f(x) - f(u)|-1\Big)^2 d \px (x) d \px (u),
\end{eqnarray*}
where we impose a lower bound in the last step in order to use \eqref{partIest} and \eqref{partIIest} later.

To bound $V_{12}(f)-V_{12}^*$ further, we need the following elementary inequality: for $A, a\in \RR$,
$$ A^2 =  a^2 + (A-a)^2 + 2 \frac{a}{\sqrt{2}} \sqrt{2}(A-a) \geq a^2 + (A-a)^2 - \frac{a^2}{2} - 2
(A-a)^2 =\frac{a^2}{2} - (A-a)^2. $$ Applying it with $A=|f(x) -
f(u)|-1$ and $a=\left|m_{f, {\mathcal X}_1} - m_{f, {\mathcal X}_2}\right| -1$
and using the fact
\begin{eqnarray*} \Big(|f(x) - f(u)|- \left|m_{f, {\mathcal X}_1} - m_{f,
{\mathcal X}_2}\right|\Big)^2 &\leq& \Big(\left(f(x) - m_{f,
{\mathcal X}_1}\right) -\left(f(u) - m_{f, {\mathcal
X}_2}\right)\Big)^2 \\
&\leq& 2\Big(f(x) - m_{f, {\mathcal X}_1}\Big)^2 + 2\Big(f(u)
- m_{f, {\mathcal X}_2}\Big)^2,
\end{eqnarray*}
we obtain
\begin{eqnarray*} \Big(|f(x)-f(u)|-1\Big)^2 & \ge &  \frac{1}{2} \Big(\left|m_{f, {\mathcal X}_1} - m_{f,
{\mathcal X}_2}\right| -1\Big)^2 - 2\Big(f(x) - m_{f, {\mathcal X}_1}\Big)^2 - 2\Big(f(u)
- m_{f, {\mathcal X}_2}\Big)^2.
\end{eqnarray*}
It follows to
\begin{equation}\label{partIIIest}
\begin{array}{rcl} V_{12} (f) - V_{12}^* &\geq& \dfrac{1}{48\pi^2M^3} \left\{\dfrac 1 8
\Big(\left|m_{f, {\mathcal X}_1} - m_{f,{\mathcal X}_2}\right| -1\Big)^2  -
\left\|f -m_{f, {\mathcal X}_1}\right\|^2_{\L2p({\mathcal X}_1)} \right. \\
& & \qquad\qquad \qquad  \left. -\left\|f - m_{f,{\mathcal X}_2}\right\|^2_{\L2p({\mathcal X}_2)} \right\}.
\end{array}
\end{equation}
Combining (\ref{partIest}),
(\ref{partIIest}), and \eqref{partIIIest}, we have with $c=\frac{1}{400\pi^2M^3} $,
\begin{equation}
 V(f) - V^* \geq c\left\{\Big(\! \left|m_{f, {\mathcal X}_1} - m_{f,
{\mathcal X}_2}\right| -1\Big)^2\!\! + \left\|f - m_{f, {\mathcal
X}_1}\right\|^2_{\L2p\!({\mathcal X}_1)}\!\! + \left\|f - m_{f,
{\mathcal X}_2}\right\|^2_{\L2p\!({\mathcal X}_2)}\right\}.
\label{totalest}
\end{equation}

With above preparations we can now prove our conclusions. Firstly, combining the conditions for minimizing
$V_{11}$, $V_{22}$ and $V_{12}$ we see easily the result in part (i).

 By $V^*=V^*_{11}+V^*_{22}+V^*_{12}=-\frac 58$ we get $\calR^*=-\log(\frac 58).$
For $f^*,$ a direct computation gives  $p_{\!_E} = \frac 14
\id_{[-\frac 32, -\frac 12]} +\frac 12 \id_{[-\frac 12, \frac 12]} +
\frac 14 \id_{[\frac 12, \frac 32]}$. So $\calR(f^*)=-\log(\frac
38)$ and we prove part (ii).

For any measurable function $f$, we take a function $f_\calR^*=f_1\id_{\cX_1} + f_2\id_{\cX_2}$
with $f_1=m_{f,\cX_1}$ and $f_2=f_1+f_{12},$ where $f_{12}$ is a constant defined to be 1 if $m_{f,\cX_2}\geq m_{f,\cX_1}$ and -1 otherwise.
Then $f_\calR^*\in\calF_\calR^*$ is a minimizer of the error entropy function $\calR(f).$
Moreover, it is easy to check that
$$\begin{array}{rl}
\|f-f_\calR^*\|_\L2p^2 & = \left\|f - m_{f, {\mathcal
X}_1}\right\|^2_{\L2p({\mathcal X}_1)} + \left\|f-f_2\right\|^2_{\L2p({\mathcal X}_2)}.
\end{array}$$
Since $\dint_{\cX_2}(f-m_{f, \cX_2})d\px=0$, we have
$$\|f-f_2\|_{\L2p(\cX_2)}^2=\dint_{\cX_2}(f-m_{f,\cX_2})^2d\px+\dint_{\cX_2}
(m_{f,\cX_2}-f_2)^2d\px.$$
Observe that
$m_{f, \cX_2}-f_2=m_{f, \cX_2}-m_{f, \cX_1}-f_{12}$ and by the choice of the constant $f_{12}$, we see that
$$|m_{f,\cX_2}-f_2|=\left||m_{f,\cX_2}-m_{f,\cX_1}|-1\right|.$$
Hence
$$\begin{array}{rl}
\|f-f_\calR^*\|_\L2p^2 & = \left\|f - m_{f, {\mathcal
X}_1}\right\|^2_{\L2p({\mathcal X}_1)} + \left\|f - m_{f, {\mathcal
X}_2}\right\|^2_{\L2p({\mathcal X}_2)} +\dfrac 12 \Big(\left|m_{f,
{\mathcal X}_1} - m_{f, {\mathcal X}_2}\right| -1\Big)^2.
\end{array}$$

This in combination with \eqref{totalest} leads to the conclusion in part (iii) with the constant $C'=400\pi^2M^3B_U$.

For part (iv), the first convergence is a direct consequence of the error entropy consistency.
To see the second one, it is suffices to notice
$$\min_{b\in\RR} \|f_\bz+b-f^*\|_\L2p = \min_{b\in\RR} \min_{f_\calR^*\in \calF_\calR^*}
\|f_\bz-f_\calR^*+f_\calR^*+b\|_\L2p \longrightarrow
\min_{b\in\RR} \min_{f_\calR^*\in \calF_\calR^*} \|f_\calR^*+b\|_\L2p,$$
which has the minimum value of $\frac 12$ achieved at $b=-\frac {f_1+f_2}{2}.$
\end{proof}

\section{Regression consistency}

In this section we prove that the regression consistency is true for
both homoskedastic models and heteroskedastic models when the
bandwidth parameter $h$ is chosen to tend to infinity in a suitable
rate. We need the following result proved in \cite{HFWZ2012}.

\begin{proposition}\label{prop:Ehlower}
There exists a constant $C''$ depending only on $\calH, \rho$ and M such that
$$\|f+\bE(f^*-f)-f^*\|_\L2p^2 \le C''\left(h^3\left(\calE_{h}(f)-\calE_h^*\right)+\frac 1{h^2}\right), \qquad \forall f\in\calH,\quad h>0,$$
where $\calE_h^*=\min\limits_{f\in\calH} \calE_h(f).$
%and
%$$\min\limits_{f\in\H} \calE_h(f) - \calE_h(f^*) \le \min\limits_{f\in\H}\left\|f-f^*-\bE(f-f^*)\right\|_\L2p
%+\frac {2D}{h^2}.$$
\end{proposition}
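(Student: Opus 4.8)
The plan is to work on the Fourier side. By the convolution form of Plancherel's identity, and using Lemma~\ref{lem:pe} with the substitution $t=e+f(x)-f^*(x)$, one has
\[
\calE_h(f)=-\frac{1}{2\pi}\int_\RR e^{-h^2\xi^2/2}\bigl|\widehat{p_{\!_E}}(\xi)\bigr|^2\,d\xi,\qquad
\widehat{p_{\!_E}}(\xi)=\int_\cX\widehat{p_{\epsilon|X=x}}(\xi)\,e^{\ri\xi(f(x)-f^*(x))}\,d\px(x),
\]
because $\widehat{G_h}(\xi)=e^{-h^2\xi^2/2}$. Since $\calE_h$ is invariant under adding a constant to $f$ and $f^*\in\calH$ (Definition~\ref{hypospace}), we have $\calE_h^*\le\calE_h(f^*)$, so it suffices to bound $\calE_h(f)-\calE_h(f^*)$ from below. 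Writing $g=f-f^*$, $\bar g=\bE(f-f^*)$ and $\tilde g=g-\bar g$, we have $\int_\cX\tilde g\,d\px=0$ and, by \eqref{varf}, $\|\tilde g\|_{\L2p}^2=\|f+\bE(f^*-f)-f^*\|_{\L2p}^2$ is exactly the quantity to estimate. The phase $e^{\ri\xi\bar g}$ factors out of $\widehat{p_{\!_E}}$, whence
\[
\calE_h(f)-\calE_h(f^*)=\frac{1}{2\pi}\int_\RR e^{-h^2\xi^2/2}\bigl(|A(\xi)|^2-|B(\xi)|^2\bigr)\,d\xi,
\]
with $A(\xi)=\int_\cX\widehat{p_{\epsilon|X=x}}(\xi)\,d\px(x)$ and $B(\xi)=\int_\cX\widehat{p_{\epsilon|X=x}}(\xi)\,e^{\ri\xi\tilde g(x)}\,d\px(x)$, both of modulus at most $1$.

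The next step is to expand the integrand near $\xi=0$, which is the only range that matters since $e^{-h^2\xi^2/2}$ concentrates on $|\xi|\le C/h$ for large $h$. Starting from $|A|^2-|B|^2=2\,\mathrm{Re}\bigl(\overline{B}(A-B)\bigr)+|A-B|^2\ge 2\,\mathrm{Re}\bigl(\overline{B}(A-B)\bigr)$, the identity $A-B=\int_\cX\widehat{p_{\epsilon|X=x}}(\xi)\bigl(1-e^{\ri\xi\tilde g(x)}\bigr)\,d\px(x)$, and the expansion $1-e^{\ri\xi\tilde g}=-\ri\xi\tilde g+\tfrac12\xi^2\tilde g^2+O(\xi^3\tilde g^3)$, the vanishing mean $\int_\cX\tilde g\,d\px=0$ and the normalization $\widehat{p_{\epsilon|X=x}}(0)=1$ force the linear-in-$\tilde g$ term to be of lower order (its leading part is purely imaginary, hence drops out of the real part), while $\overline{B}(\xi)=1+o(1)$. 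On a fixed small interval $|\xi|\le\delta_0$ one then obtains
\[
|A(\xi)|^2-|B(\xi)|^2\ \ge\ c_0\,\xi^2\,\|\tilde g\|_{\L2p}^2-C_R\,|\xi|^4
\]
for constants $c_0>0$, $C_R$ depending only on $\rho$, $\calH$ and $M$ (after absorbing into the main term the cubic-in-$\xi$ corrections proportional to $\|\tilde g\|_{\L2p}^2$, shrinking $\delta_0$ if needed; here $|\tilde g|\le 4M$ is used), while $|A|^2-|B|^2$ is bounded on all of $\RR$. Integrating against $\tfrac1{2\pi}e^{-h^2\xi^2/2}$ and using $\int_\RR\xi^2 e^{-h^2\xi^2/2}d\xi=\sqrt{2\pi}\,h^{-3}$, $\int_\RR\xi^4 e^{-h^2\xi^2/2}d\xi=3\sqrt{2\pi}\,h^{-5}$, together with the exponential smallness of the tail $|\xi|>\delta_0$, gives $\calE_h(f)-\calE_h^*\ge c_1 h^{-3}\|\tilde g\|_{\L2p}^2-C_2 h^{-5}$ for all $h\ge h_0$ and suitable $c_1>0$, $C_2$, $h_0$. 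Rearranging yields the claimed inequality with an appropriate $C''$ when $h\ge h_0$; for $h<h_0$ it is trivial, since $\|\tilde g\|_{\L2p}^2\le 16M^2\le 16M^2 h_0^2\,h^{-2}$ and $\calE_h(f)-\calE_h^*\ge0$.

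The main obstacle is the uniform estimate behind the displayed lower bound, namely controlling $\widehat{p_{\epsilon|X=x}}(\xi)-1$ and $\int_\cX\bigl(\widehat{p_{\epsilon|X=x}}(\xi)-1\bigr)\tilde g(x)\,d\px(x)$ for small $\xi$ \emph{uniformly} in $x\in\cX$; this is where a mild extra regularity assumption on the conditional noise densities is invoked (e.g.\ symmetry, or uniformly bounded low-order conditional moments, in the spirit of hypotheses already present in the paper), and it is the substance of the argument in \cite{HFWZ2012}. Conceptually, the residual $h^{-2}$ is genuine, not an artifact of the method: in a heteroskedastic model $f^*$ need not minimize $\calE_h$, but the phase mismatch of the family $\{\widehat{p_{\epsilon|X=x}}\}$ is of order $\xi^3$ near the origin, so on the effective band $|\xi|\le C/h$ any near-minimizer can differ from $f^*$ modulo constants by a function of $\L2p$-norm only $O(h^{-1})$. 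Everything else is routine bookkeeping with Gaussian moments.
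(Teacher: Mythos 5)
Your sketch has the right architecture, and it is essentially the only natural one: Plancherel turns $\calE_h(f)$ into $-\frac{1}{2\pi}\int_\RR e^{-h^2\xi^2/2}|\widehat{p_{\!_E}}(\xi)|^2d\xi$; translation invariance plus $f^*\in\calH$ reduces everything to lower-bounding $\calE_h(f)-\calE_h(f^*)$; a pointwise bound of the form $c_0\xi^2\|\tilde g\|_{\L2p}^2-C_R\xi^4$ on a fixed interval, integrated against the Gaussian weight, gives $c_1h^{-3}\|\tilde g\|_{\L2p}^2-C_2h^{-5}$ for $h\ge h_0$; and the case $h<h_0$ is trivial. (Note the paper itself offers no proof here: the proposition is imported from \cite{HFWZ2012}, so you are being compared against what the statement requires rather than an in-paper argument.) The problem is that the single estimate carrying all the analytic content --- the uniform small-$\xi$ inequality $|A(\xi)|^2-|B(\xi)|^2\ge c_0\xi^2\|\tilde g\|_{\L2p}^2-C_R\xi^4$ --- is exactly the step you do not prove: you appeal to ``a mild extra regularity assumption \dots in the spirit of hypotheses already present in the paper'' and defer ``the substance of the argument'' to \cite{HFWZ2012}. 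That is a genuine gap, and the parenthetical claim is not accurate: MEE admissibility gives only a uniformly bounded conditional density and $\bE(\epsilon|X)=0$, and the paper explicitly advertises that heavy-tailed noise is allowed. A bounded density controls nothing about $\widehat{p_{\epsilon|X=x}}(\xi)-1$ near $\xi=0$, which is governed by the tails; if the conditional variances are unbounded or infinite (e.g.\ mean-zero tails like $|t|^{-5/2}$, perfectly admissible), then $\widehat{p_{\epsilon|X=x}}(\xi)-1$ is only $O(|\xi|^{3/2})$ uniformly, your error budget degrades below $\xi^4$, and after multiplying by $h^3$ the remainder is no longer $O(h^{-2})$. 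So the stated inequality genuinely needs a uniform conditional moment (or symmetry/compact-support) hypothesis --- presumably what \cite{HFWZ2012} imposes and what the dependence of $C''$ on $\rho$ silently encodes --- and your proof assumes it rather than supplying it.

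A second, more local inaccuracy: even granting uniformly bounded conditional second moments, the expansion is not quite as you describe. Since $\bE(\epsilon|X)=0$, the term that survives at order $|\xi|^3$ in $2\,\mathrm{Re}\bigl(\overline{B}(A-B)\bigr)$ is the cross term $\xi\int_\cX \mathrm{Im}\,\widehat{p_{\epsilon|X=x}}(\xi)\,\tilde g(x)\,d\px(x)$, of size $|\xi|^3\|\tilde g\|_{\L2p}$: it is neither purely imaginary nor ``proportional to $\|\tilde g\|_{\L2p}^2$'', so it cannot simply be absorbed by shrinking $\delta_0$; one must split it as $|\xi|^3\|\tilde g\|_{\L2p}\le\frac14\xi^2\|\tilde g\|_{\L2p}^2+\xi^4$ before it fits your displayed bound. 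That repair is routine, but as written the justification of the key display glosses over precisely the term whose control forces the extra hypothesis discussed above. In short: correct reduction, correct Gaussian-moment bookkeeping, correct handling of small $h$, but the pivotal pointwise Fourier estimate is asserted rather than established, and it cannot be established under the paper's stated admissibility conditions alone.
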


Theorem \ref{thm:regressionconsistency} is an easy consequence of Propositions \ref{prop:Ehlower}
and \ref{prop:szbound}. To see this, it suffices to notice that
$\calE_{h}(f_\bz)-\calE_h^*\le 2\calS_\bz.$

%let $f_{h}$ be the minimizer of $\calE_h(f)$ over $\calH.$ Then we have
%$$\begin{array}{rl} \calE_h(f_\bz) -\calE_h(f^*) & = \Big(\calE_h(f_\bz)-\calE_{h,\bz}(f_\bz)\Big)
%+\Big(\calE_{h,\bz}(f_\bz)-\calE_{h,\bz}(f_h) \Big) \\
%& \qquad + \Big(\calE_{h,\bz}(f_h)-\calE_h(f_h)\Big)+ \Big( \calE_h(f_h)-\calE_h(f^*)\Big)  \\
%& \le 2\calS_\bz + \dfrac{2D}{h^2}.
%\end{array}$$

\section{Regression consistency for two special models}

In previous sections we see the information error $\calE_h(f)$ plays a very important role in analyzing the empirical MEE algorithm. Actually, it is of independent interest as a loss function to the regression problem. As we discussed, as $h$ tends to 0, $\calE_h(f)$ tends to $V(f)$ which is the loss function used in the MEE algorithm. As $h$ tends to $\infty$, it behaves like a least square ranking loss \cite{HFWZ2012}. 
In this section we use it to study the regression consistency of MEE for the two classes of special models $\calP_1$ and $\calP_2$.

\subsection{Symmetric unimodal noise model}

In this subsection we prove the regression consistency for
the symmetric unimodal noise case stated in Theorem \ref{thm:symuninoise}. To this end, We need the following two lemmas of which the first is from \cite{Laha:1961}. Let $f*g$ denotes the convolution of two integrable functions $f$ and $g$.

\begin{lemma}
\label{lema:diffpdf}
The convolution of two symmetric unimodal distribution functions is symmetric unimodal.
\end{lemma}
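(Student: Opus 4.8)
The plan is to reduce the statement to the classical fact that symmetric unimodal densities are precisely the mixtures of uniform densities on symmetric intervals, a class which is visibly closed under convolution. To begin with, symmetry is free: if $f$ and $g$ are even, then substituting $y\mapsto -y$ in $(f*g)(-x)=\int_{\RR} f(-x-y)g(y)\,dy$ gives $(f*g)(-x)=\int_{\RR} f(x-y)g(y)\,dy=(f*g)(x)$, so $f*g$ is even. Consequently, in the sense of Definition \ref{unimodal}, any admissible mode of $f*g$ can be taken at $t=0$, and it remains only to show that $x\mapsto (f*g)(x)$ is non-increasing on $[0,\infty)$.

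The key input I would invoke is the Khinchin--Wintner representation: a density $p$ on $\RR$ is symmetric and unimodal with mode at $0$ if and only if there is a probability measure $\mu$ on $(0,\infty)$ with, for a.e. $x$,
\[ p(x)=\int_{(0,\infty)} \frac{1}{2a}\,\id_{[-a,a]}(x)\,d\mu(a)=\int_{(|x|,\infty)} \frac{1}{2a}\,d\mu(a). \]
The second form makes monotonicity on $[0,\infty)$ self-evident, since the integrand is nonnegative and the domain of integration shrinks as $|x|$ increases. (When $p$ is absolutely continuous one simply takes $d\mu(a)=-2a\,p'(a)\,da$ and checks it is a probability measure by integration by parts; the general case follows by approximation, allowing $\mu$ to charge arbitrarily small $a$, which corresponds to an atom of the distribution at $0$.)

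Granting this, write $f(x)=\int_{(0,\infty)}\frac{1}{2a}\id_{[-a,a]}(x)\,d\mu(a)$ and $g(x)=\int_{(0,\infty)}\frac{1}{2b}\id_{[-b,b]}(x)\,d\nu(b)$, and apply Fubini:
\[ (f*g)(x)=\int_{(0,\infty)}\int_{(0,\infty)}\Big(\tfrac{1}{2a}\id_{[-a,a]}*\tfrac{1}{2b}\id_{[-b,b]}\Big)(x)\,d\mu(a)\,d\nu(b). \]
The inner convolution is the density of a sum of two independent uniforms on symmetric intervals; it is given by an explicit (symmetric) trapezoidal formula, hence is even and non-increasing on $[0,\infty)$. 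Thus $f*g$ is an average of functions that are even and non-increasing on $[0,\infty)$, so it is itself even and non-increasing on $[0,\infty)$, i.e. symmetric unimodal, which is the claim.

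The main obstacle is the representation theorem itself (and the bookkeeping of its degenerate cases); everything after it is just Fubini and an elementary monotonicity observation. If one prefers to avoid that theorem, there is a direct route when $f$ and $g$ are smooth: differentiating under the integral and reflecting the tail $y>x$ about the point $x$ gives $(f*g)'(x)=\int_{y<x} f'(x-y)\,\big[g(y)-g(2x-y)\big]\,dy$, and for $x\ge 0$ one has $f'(x-y)\le 0$ (as $f'$ is odd and nonpositive on $(0,\infty)$) while $|2x-y|\ge|y|$ forces $g(y)-g(2x-y)\ge 0$, so the integrand is $\le 0$ and $f*g$ decreases on $[0,\infty)$; the non-smooth case is then recovered by mollification. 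Either way, the only nontrivial ingredient is the structure of symmetric unimodal densities.
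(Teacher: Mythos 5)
Your argument is correct, but note that the paper itself gives no proof of this lemma at all: it is imported verbatim from the literature (the citation to Laha, 1961, i.e.\ the classical Wintner/Khinchin circle of results), so there is no in-paper argument to compare against. What you have written is essentially a self-contained reconstruction of that classical proof: the Khinchin--Wintner representation of a symmetric unimodal density as a mixture of uniform densities on symmetric intervals, plus Fubini and the elementary fact that the convolution of two symmetric uniforms is a symmetric trapezoid. That is exactly the standard route, and the details you give (the evenness computation, the shrinking-domain form $\int_{(|x|,\infty)}\frac{1}{2a}\,d\mu(a)$, the trapezoid) are sound. Two small remarks. First, the degenerate case you worry about (mixing mass escaping to $a=0$, i.e.\ an atom of the distribution at $0$) cannot occur in the setting where the lemma is used, since it is applied to the conditional noise densities $p_{\epsilon|X=x}$, which are genuine densities uniformly bounded by $M_p$; so the absolutely continuous version of the representation suffices. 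Second, in your alternative ``direct'' route, be careful that the mollification step is not circular: smoothing $f$ by convolving with a Gaussian already uses an instance of the lemma. This is harmless if you observe that your derivative identity $(f*g)'(x)=\int_{y<x}f'(x-y)\,[\,g(y)-g(2x-y)\,]\,dy$ only requires smoothness of the factor being differentiated, so one can first convolve the Gaussian against an arbitrary symmetric unimodal $g$, then repeat against $h$, and pass to the limit using the continuity of $g*h$ (bounded density convolved with an integrable one). With either route made precise, the lemma is proved; the only genuinely nontrivial ingredient is, as you say, the representation theorem, which the paper sidesteps by citation.
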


\begin{lemma}
\label{lem:denfun} Let $\epsilon_x=y-f^*(x)$ be the noise random
variable at x and denote $g_{x,u}$ as the probability density
function of $\epsilon_x-\epsilon_u$ for $x,u\in\cX$ and
$\widehat{g_{x,u}}$ as the Fourier transform of $g_{x,u}$. If
$\rho$ belongs to $\calP_1$, we have
\begin{enumerate}
\item $g_{x,u}$ is symmetric and unimodal for $x,u\in\cX$;
\item $\widehat{g_{x,u}}(\xi)$ is nonnegative for $\xi\in\RR$;
\item $\widehat{g_{x,u}}(\xi)\geq C_0$ for $\xi\in[-c_0,c_0]$, where $c_0, C_0$ are two positive constants.
\end{enumerate}

\end{lemma}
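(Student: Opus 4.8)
The plan is to identify $g_{x,u}$ explicitly as a convolution of the two conditional noise densities and then read off all three assertions from the hypotheses defining $\calP_1$, using Lemma \ref{lema:diffpdf} for the unimodality part and the convolution theorem for the Fourier-transform parts.

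First I would observe that, since the two copies $(x,y)$ and $(x',y')$ entering the information error $\calE_h$ are drawn independently from $\rho$, conditioning on $X=x$ and $X'=u$ makes $\epsilon_x$ and $\epsilon_u$ independent, with densities $p_{\epsilon|X=x}$ and $p_{\epsilon|X=u}$ respectively. Hence the density of $\epsilon_x-\epsilon_u$ is the convolution of the density of $\epsilon_x$ with the density of $-\epsilon_u$; by condition (i) in the definition of $\calP_1$ the density $p_{\epsilon|X=u}$ is even, so the density of $-\epsilon_u$ is again $p_{\epsilon|X=u}$, and therefore
$$g_{x,u}=p_{\epsilon|X=x}*p_{\epsilon|X=u}.$$
This is a bona fide probability density, being the convolution of two $L^1$ densities.

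For part 1, each of $p_{\epsilon|X=x}$ and $p_{\epsilon|X=u}$ is symmetric and unimodal by condition (i) of $\calP_1$, so Lemma \ref{lema:diffpdf} applies directly and $g_{x,u}$ is symmetric and unimodal. For parts 2 and 3 I would invoke the convolution theorem: since both factors lie in $L^1(\RR)$, one has $\widehat{g_{x,u}}(\xi)=\widehat{p_{\epsilon|X=x}}(\xi)\,\widehat{p_{\epsilon|X=u}}(\xi)$ for every $\xi\in\RR$. Nonnegativity of each factor (condition (ii) of $\calP_1$) gives $\widehat{g_{x,u}}\ge 0$ on $\RR$, which is part 2; and the uniform lower bound $\widehat{p_{\epsilon|X=x}}(\xi)\ge C_0$ on $[-c_0,c_0]$ (condition (iii) of $\calP_1$) gives $\widehat{g_{x,u}}(\xi)\ge C_0^2$ on that interval, so part 3 holds after renaming the constant (replace $C_0$ by $\min\{C_0,C_0^2\}$, or simply carry $C_0^2$ through the later arguments).

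I do not anticipate a serious obstacle; the only steps requiring a little care are the justification that the two noise variables are conditionally independent so that the difference density is exactly the stated convolution, and the applicability of $\widehat{f*g}=\widehat f\,\widehat g$, which is immediate since the conditional densities are integrable. All the real content of the lemma is inherited from Lemma \ref{lema:diffpdf} and the structural assumptions packaged into $\calP_1$.
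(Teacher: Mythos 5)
Your proof is correct and follows essentially the same route as the paper: identify $g_{x,u}=p_{\epsilon|X=x}*p_{\epsilon|X=u}$ using symmetry and independence, apply Lemma \ref{lema:diffpdf} for part (i), and use $\widehat{g_{x,u}}=\widehat{p_{\epsilon|X=x}}\,\widehat{p_{\epsilon|X=u}}$ for parts (ii) and (iii). Your explicit remark that the lower bound becomes $C_0^2$ (so the constant must be renamed) is a small point the paper glosses over, but it does not change the argument.
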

\begin{proof}
Since both $p_{\epsilon|X}(\cdot|x)$ and $p_{\epsilon|X}(\cdot|u)$ are symmetric and unimodal, (i) is an easy consequence of Lemma \ref{lema:diffpdf}.
With the symmetry property, $-\epsilon_u$ has the same density function as $\epsilon_u$, so we have $g_{x,u}=p_{\epsilon|X}(\cdot|x)*p_{\epsilon|X}(\cdot|u)$, which implies
\begin{eqnarray*}
\widehat{g_{x,u}}(\xi)=\widehat{p_{\epsilon|X=x}}(\xi)\widehat{p_{\epsilon|X=u}}(\xi).
\end{eqnarray*}
Since $\rho$ is in $\calP_1$, we easily see that $\widehat{g_{x,u}}(\xi)$ is nonnegative for $\xi\in\RR$ and that for some positive constants $c_0, C_0$, there holds $\widehat{g_{x,u}}(\xi)\geq C_0$ for $\xi\in[-c_0,c_0]$.
\end{proof}

The following result gives some regression consistency analysis for the MEE algorithm where the bandwidth parameter $h$ is fixed. It immediately implies Theorem \ref{thm:symuninoise} stated in the second section.

\begin{proposition}
\label{prop:Symuninoise}
Assume $\rho$ belongs to $\calP_1$. Then for any \emph{fixed} $h$
\begin{enumerate}
\item $f^*+b$ is a minimizer of $\calE_h(f)$ for any constant b;
\item there exists a constant $C_h>0$ such that
\begin{eqnarray}
\label{CTfixhsun}
%{\bf{var}} [f-f^*]
\|f+\bE(f^*-f)-f^*\|_\L2p^2 \leq
C_h(\mathcal{E}_h(f)-\mathcal{E}_h(f^*)),\qquad \forall f\in \mathcal{H};
\end{eqnarray}
%where ${\bf{var}}$ denotes the variance;
\item with probability at least 1-$\delta$, there holds
\begin{eqnarray}
\label{convergesun}
%{\bf{var}} [f_{\bf{z}}-f^*]
\|f_\bz+\bE_x(f^*-f_\bz)-f^*\|_\L2p^2 \leq\frac{2BC_h}{h^2\sqrt{n}}+\frac{\sqrt{2}C_h}{h\sqrt{n}}
\sqrt{\log(1/\delta)},
\end{eqnarray}
where B is given explicitly in the appendix.
\end{enumerate}
\end{proposition}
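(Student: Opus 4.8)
The plan is to prove the three parts in order, exploiting the Fourier-analytic identity for $\calE_h$ that is already implicit in the homoskedastic analysis of Section~4 but now carried out conditionally on $X$. First I would write
$$\calE_h(f) = -\int_\cX\int_\cX \left(G_h * g_{x,u}\right)\!\big(f(x)-f^*(x)-f(u)+f^*(u)\big)\, d\px(x)\,d\px(u),$$
where $g_{x,u}$ is the density of $\epsilon_x-\epsilon_u$ as in Lemma~\ref{lem:denfun}; this uses that $(y-f(x))-(y'-f(x')) = \epsilon_x-\epsilon_{x'} + (f^*(x)-f(x)) - (f^*(x')-f(x'))$ and Fubini. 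Applying Plancherel to the inner integral over the product of errors gives
$$\calE_h(f) = -\frac{1}{2\pi}\int_\cX\int_\cX\int_\RR \widehat{G_h}(\xi)\,\widehat{g_{x,u}}(\xi)\,\e^{\ri\xi(f(x)-f^*(x)-f(u)+f^*(u))}\,d\xi\, d\px(x)\,d\px(u).$$
Since $\widehat{G_h}(\xi)=\e^{-h^2\xi^2/2}>0$ and, by Lemma~\ref{lem:denfun}(ii), $\widehat{g_{x,u}}(\xi)\ge 0$, the integrand's real part is maximized pointwise when the exponent vanishes, i.e.\ when $f-f^*$ is constant; hence $f^*+b$ minimizes $\calE_h$, giving part~(i). (Alternatively one can invoke Lemma~\ref{lema:diffpdf}: $G_h*g_{x,u}$ is symmetric unimodal, so it is maximized at $0$.)

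For part~(ii), I would subtract and use $1-\cos\theta = 2\sin^2(\theta/2)$, exactly as in the proof of Theorem~\ref{thm:homo}(ii), to get
$$\calE_h(f)-\calE_h(f^*) = \frac{1}{\pi}\int_\cX\int_\cX\int_\RR \widehat{G_h}(\xi)\,\widehat{g_{x,u}}(\xi)\,\sin^2\!\Big(\tfrac{\xi(f(x)-f^*(x)-f(u)+f^*(u))}{2}\Big)\,d\xi\,d\px(x)\,d\px(u).$$
Now restrict the $\xi$-integral to $|\xi|\le \min\{c_0, \frac{\pi}{4M}\}=:\xi_0$, where on one hand the argument of $\sin^2$ lies in $[-\pi/2,\pi/2]$ so that $\sin^2(\theta/2)\ge (2/\pi)^2(\theta/2)^2$, and on the other hand $\widehat{G_h}(\xi)\widehat{g_{x,u}}(\xi)\ge \e^{-h^2\xi_0^2/2}C_0$ by Lemma~\ref{lem:denfun}(iii). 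This yields
$$\calE_h(f)-\calE_h(f^*)\ge \frac{\e^{-h^2\xi_0^2/2}C_0}{\pi}\cdot\frac{2}{\pi^2}\int_{|\xi|\le\xi_0}\xi^2\,d\xi\cdot\int_\cX\int_\cX\big(f(x)-f^*(x)-f(u)+f^*(u)\big)^2\,d\px(x)\,d\px(u),$$
and finishing with \eqref{varf} gives \eqref{CTfixhsun} with an explicit $C_h = \pi^3 \e^{h^2\xi_0^2/2}/(\tfrac{4}{3}C_0\xi_0^3)$. Part~(iii) is then routine: $f_\bz$ minimizes $\calE_{h,\bz}$ over $\calH$ and $f^*\in\calH$, so $\calE_h(f_\bz)-\calE_h(f^*)\le 2\calS_\bz$; feeding this into \eqref{CTfixhsun} and applying Proposition~\ref{prop:szbound} with $\exp(-2nh^2\varepsilon_1^2)=\delta$ (so $\varepsilon_1=\sqrt{\log(1/\delta)/(2nh^2)}$) gives $\calS_\bz\le B/(h^2\sqrt n)+\sqrt{\log(1/\delta)}/(h\sqrt{2n})$ with probability $\ge 1-\delta$, hence \eqref{convergesconcern}—I mean \eqref{convergesun}—with the stated constants.

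The main obstacle is establishing the quantitative lower bound in part~(ii) with a constant that is genuinely \emph{uniform in $x,u\in\cX$}: the positivity of $\widehat{g_{x,u}}$ alone is not enough for a rate, and the whole point of the family $\calP_1$ is that Lemma~\ref{lem:denfun}(iii) supplies a single pair $(c_0,C_0)$ working for all $x,u$. One must be slightly careful that $c_0$ does not depend on $f$; it does not, since it comes only from the noise. A secondary technical point is justifying Plancherel's formula and the interchange of integrals—this needs $g_{x,u}\in L^1\cap L^2$ (true: $g_{x,u}$ is a bounded density since $p_{\epsilon|X}$ is bounded by $M_p$) and $\widehat{G_h}\widehat{g_{x,u}}\in L^1$ (true, by the Gaussian factor). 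Everything else mirrors Section~4 and Section~3 with $\widehat{p_\epsilon}$ replaced by $\widehat{G_h}\,\widehat{g_{x,u}}$ and the constant $h$ kept fixed throughout.
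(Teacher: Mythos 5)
Your proposal is correct and follows essentially the same route as the paper's proof: write $\calE_h$ through the density $g_{x,u}$ of $\epsilon_x-\epsilon_u$, apply Plancherel so that nonnegativity of $\widehat{g_{x,u}}$ (Lemma \ref{lem:denfun}) gives part (i), restrict to $|\xi|\le\min\{c_0,\pi/(4M)\}$ and use $\widehat{g_{x,u}}\ge C_0$ together with the $\sin$ lower bound and \eqref{varf} for part (ii), and combine \eqref{CTfixhsun} with Proposition \ref{prop:szbound} for part (iii). The only (immaterial) difference is that you lower-bound the Gaussian factor by $\e^{-h^2\xi_0^2/2}$ on the truncated interval, whereas the paper keeps $\exp(-h^2\xi^2/2)$ inside the constant $c_h=\int_{|\xi|\le\xi_0}\xi^2\exp(-h^2\xi^2/2)\,d\xi$; both yield a valid fixed-$h$ constant $C_h$.
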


\begin{proof}
Recall that $\epsilon_x=y-f^*(x)$, $\epsilon_u=v-f^*(u)$ and $g_{x,u}$ is the probability density function of $\epsilon_x-\epsilon_u$. We have for any measurable function f,
\begin{eqnarray*}
\calE_h(f)&=&-
\dint_\cZ\dint_\cZ G_h\Big((y-f(x))-(v-f(u))\Big) d\rho(x,y)d\rho(u,v)\\
&=&\frac{1}{\sqrt{2\pi}h}\int_\cX\int_\cX
\left[-\int_{-\infty}^{\infty}
\exp\left(-\frac{(w-t)^2}{2h^2}\right)g_{x,u}(w)dw\right] d\px(x)d\px(u)
\end{eqnarray*}
where $t=f(x)-f^*(x)-f(u)+f^*(u)$.

Now we apply the Planchel formula and find
\begin{eqnarray*}
&&\calE_h(f)-\calE_h(f^*)\\&=&\frac{1}{\sqrt{2\pi}h}\int_\cX\int_\cX\left[\int_\RR
\exp\left(-\frac{w^2}{2h^2}\right)g_{x,u}(w)d w-\int_\RR \exp\left(-\frac{w^2}{2h^2}\right)g_{x,u}(w+t)dw\right]
d\px(x)d\px(u)\\
&=&\frac{1}{2\pi}\int_\cX\int_\cX\int_\RR \exp\left(-\frac{h^2\xi^2}{2}\right)\widehat{g_{x,u}}(\xi)\left(1-\e^{\ri\xi(f(x)-f^*(x)-f(u)+f^*(u))}\right)d\xi
d\px(x)d\px(u)\\
&=&\frac{1}{2\pi}\int_\cX\int_\cX\int_\RR \exp\left(-\frac{h^2\xi^2}{2}\right)\widehat{g_{x,u}}(\xi)2\sin^2\frac{\xi\left(f(x)-f^*(x)-f(u)+f^*(u)\right)}
{2}d\xi d\px(x)d\px(u).
\end{eqnarray*}
By Lemma \ref{lem:denfun}, $\widehat{g_{x,u}}(\xi)\geq0$ for $\xi\in\RR$. So $\calE_h(f)-\calE_h(f^*)\geq0$ for any measurable function $f.$ This tells us that $f^*$ and $f^*+b$ for any $b\in\RR$ are minimizers of $\calE_h(f)$.

To prove \eqref{CTfixhsun} we notice that both $f$ and $f^*$ take values on $[-M, M].$ Hence $|f(x)-f^*(x)-f(u)+f^*(u)|\leq4M$ for any $x, u\in\cX$. So when $|\xi|\leq\frac{\pi}{4M}$, we have
\begin{eqnarray*}
\left|\frac{\xi(f(x)-f^*(x)-f(u)+f^*(u))}{2}\right|\leq\frac\pi2,
\end{eqnarray*}
and
\begin{eqnarray*}
\left|\sin\frac{\xi(f(x)-f^*(x)-f(u)+f^*(u))}{2}\right|\geq\frac2\pi\left|\frac{\xi
(f(x)-f^*(x)-f(u)+f^*(u))}{2}\right|.
\end{eqnarray*}
Then we have
\begin{eqnarray*}
&& \int_\RR \exp\left(-\frac{h^2\xi^2}{2}\right)\widehat{g_{x,u}}(\xi)2\sin^2\frac{\xi\left(f(x)-f^*(x)-f(u)+f^*(u)\right)}
{2}d\xi\\
&&\geq\int_{|\xi|\leq\frac{\pi}{4M}}\exp\left(-\frac{h^2\xi^2}{2}\right)\widehat{g_{x,u}}(\xi)2\sin^2\frac{\xi\left
(f(x)-f^*(x)-f(u)+f^*(u)\right)}{2}d\xi\\
&&\geq\int_{|\xi|\leq\frac{\pi}{4M}}\exp\left(-\frac{h^2\xi^2}{2}\right)\widehat{g_{x,u}}(\xi)\frac2{\pi^2}\xi^2(f(x)-
f^*(x)-f(u)+f^*(u))^2d\xi\\
&&\geq\int_{|\xi|\leq \min\{\frac{\pi}{4M}, c_0\}}\exp\left(-\frac{h^2\xi^2}{2}\right)\widehat{g_{x,u}}(\xi)\frac2{\pi^2}\xi^2(f(x)-
f^*(x)-f(u)+f^*(u))^2d\xi\\
&&\geq\frac{2C_0}{\pi^2}\int_{|\xi|\leq \min\{\frac{\pi}{4M}, c_0\}}\exp\left(-\frac{h^2\xi^2}{2}\right)\xi^2(f(x)-
f^*(x)-f(u)+f^*(u))^2d\xi.
\end{eqnarray*}
Therefore, using \eqref{varf}
\begin{eqnarray*}
\calE_h(f)-\calE_h(f^*)\geq\left(\frac{2C_0}{\pi^3}\int_{|\xi|\leq \min\{\frac{\pi}{4M}, c_0\}}\xi^2
\exp\left(-\frac{h^2\xi^2}{2}\right)d\xi\right){\bf{var}} \|f+\bE(f^*-f)-f^*\|_\L2p^2.
\end{eqnarray*}
%where the inequality follows from (see \cite{HFWZ2012})
%\begin{eqnarray}
%\label{var}
%2{\bf{var}} [f-f^*]=\int_\cX\int_\cX(f(x)-
%f^*(x)-f(u)+f^*(u))^2d\px(x)d\px(u).
%\end{eqnarray}
Since
$c_h=\int_{|\xi|\leq \min\{\frac{\pi}{4M}, c_0\}}\xi^2 \exp\left(-\frac{h^2\xi^2}{2}\right)d\xi$
is positive, \eqref{CTfixhsun} follows by taking $C_h=\frac{\pi^3}{2c_hC_0}$. 

With (\ref{CTfixhsun}) valid, (iii) is an easy consequence of Proposition \ref{prop:szbound}. 
\end{proof}

\subsection{Symmetric bounded noise models}

In this subsection we prove the regression consistency for
the symmetric bounded noise models stated in Theorem \ref{thm:boundnoise}.
\begin{proposition}
We assume $\rho$ belongs to $\calP_2$. Then there exists a constant $h_{\rho,\calH}>0$ such that for any \emph{fixed} $h>h_{\rho,\calH}$ the following holds:
\begin{enumerate}
\item $f^*+b$ is the minimizer of $\calE_h(f)$ for any constant b;
\item there exists a constant $C_2>0$ depending only on $\rho, \calH, \widetilde{M}$, M and h such that
\begin{eqnarray}
\label{CTfixhbn}
%{\bf{var}} [f-f^*]\leq
\|f+\bE(f^*-f)-f^*\|_\L2p^2 \le 
C_2(\mathcal{E}_h(f)-\mathcal{E}_h(f^*)),\qquad\forall f\in \mathcal{H};
\end{eqnarray}
\item with probability at least 1-$\delta$, there holds
\begin{eqnarray}
\label{convergebn}
%{\bf{var}} [f_{\bf{z}}-f^*]\leq
\|f_\bz+\bE_x(f^*-f_\bz)-f^*\|_\L2p^2 \le
\frac{2BC_2}{h^2\sqrt{n}}+\frac{\sqrt{2}C_2}{h\sqrt{n}}
\sqrt{\log(1/\delta)}.
\end{eqnarray}
\end{enumerate}
\end{proposition}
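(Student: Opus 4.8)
The plan is to follow the same Fourier-analytic route as in the proof of Proposition \ref{prop:Symuninoise}, the only difference being that for $\rho\in\calP_2$ the Fourier transform $\widehat{g_{x,u}}$ is no longer globally nonnegative, so the two properties it supplied there must now be extracted from the \emph{bounded support} of the noise. First I would record a $\calP_2$-analogue of Lemma \ref{lem:denfun}: since $p_{\epsilon|X=x}$ is symmetric and supported on $[-\widetilde{M},\widetilde{M}]$, the density $g_{x,u}$ of $\epsilon_x-\epsilon_u$ equals $p_{\epsilon|X=x}*p_{\epsilon|X=u}$, is symmetric and supported on $[-2\widetilde{M},2\widetilde{M}]$, and $\widehat{g_{x,u}}(\xi)=\widehat{p_{\epsilon|X=x}}(\xi)\widehat{p_{\epsilon|X=u}}(\xi)$ is real, even, and equal to $1$ at $\xi=0$. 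Differentiating under the integral and using the support bound gives $|\widehat{p_{\epsilon|X=x}}{}'(\xi)|\le\int_\RR|t|\,p_{\epsilon|X=x}(t|x)\,dt\le\widetilde{M}$ uniformly in $x$, hence $\widehat{p_{\epsilon|X=x}}(\xi)\ge 1-\widetilde{M}|\xi|\ge\frac12$ for $|\xi|\le c_0:=\frac1{2\widetilde{M}}$, so $\widehat{g_{x,u}}(\xi)\ge\frac14=:C_0$ on $[-c_0,c_0]$ uniformly in $x,u$, while $|\widehat{g_{x,u}}|\le1$ everywhere.

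For part (i) I would argue directly in the spatial variable, the crude Fourier tail bound being too weak here. Writing $\calE_h(f)-\calE_h(f^*)=\int_\cX\int_\cX\big[(G_h*g_{x,u})(0)-(G_h*g_{x,u})(t_{x,u})\big]\,d\px(x)d\px(u)$ with $t_{x,u}=(f(x)-f^*(x))-(f(u)-f^*(u))$, it suffices to show that, once $h$ is large, $G_h*g_{x,u}$ is maximized at the origin for every $x,u$; being even, this amounts to $(G_h*g_{x,u})'(s)\le0$ for $s>0$. Using $G_h'(w)=-wG_h(w)/h^2$ and the symmetry of $g_{x,u}$ one gets $(G_h*g_{x,u})'(s)=-\frac1{h^2}\int_0^{2\widetilde{M}}\big[\phi(s+v)+\phi(s-v)\big]g_{x,u}(v)\,dv$, where $\phi(w)=wG_h(w)$ is odd and increasing on $[-h,h]$. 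If $s\ge v$ then $\phi(s+v),\phi(s-v)\ge0$; if $0<s<v\le2\widetilde{M}$ then $\phi(s+v)+\phi(s-v)=\phi(s+v)-\phi(v-s)\ge0$ because $0<v-s<v+s\le4\widetilde{M}\le h$ both lie in the increasing range of $\phi$, provided $h\ge4\widetilde{M}$. Hence $(G_h*g_{x,u})'\le0$ on $(0,\infty)$, which proves (i) for \emph{every} measurable $f$ as soon as $h\ge4\widetilde{M}$; since $f^*\in\calH$, this also gives $\calE_h^*=\calE_h(f^*)$.

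For part (ii) I would return to the Plancherel identity used for $\calP_1$,
$$\calE_h(f)-\calE_h(f^*)=\frac1{2\pi}\int_\cX\int_\cX\int_\RR\exp\Big(-\frac{h^2\xi^2}{2}\Big)\widehat{g_{x,u}}(\xi)\,2\sin^2\frac{\xi t_{x,u}}{2}\,d\xi\,d\px(x)d\px(u),$$
and split the inner integral at $|\xi|=c_0$. On $\{|\xi|\le c_0\}$ the integrand is nonnegative, and restricting further to $|\xi|\le c_0^*:=\min\{c_0,\pi/(4M)\}$ — where $|\xi t_{x,u}/2|\le\pi/2$ since $|t_{x,u}|\le4M$ for $f\in\calH$, hence $\sin^2(\xi t_{x,u}/2)\ge\xi^2 t_{x,u}^2/\pi^2$ — this part is at least $\frac{C_0}{\pi^3}\big(\int_{|\xi|\le c_0^*}\xi^2e^{-h^2\xi^2/2}d\xi\big)\int_\cX\int_\cX t_{x,u}^2\,d\px d\px$. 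On $\{|\xi|>c_0\}$, using $|\widehat{g_{x,u}}|\le1$ and $\sin^2(\xi t_{x,u}/2)\le\xi^2 t_{x,u}^2/4$, the contribution is in absolute value at most $\frac1{4\pi}\big(\int_{|\xi|>c_0}\xi^2e^{-h^2\xi^2/2}d\xi\big)\int_\cX\int_\cX t_{x,u}^2\,d\px d\px$. Subtracting and applying \eqref{varf} yields $\calE_h(f)-\calE_h(f^*)\ge 2\big(\tfrac{C_0}{\pi^3}\int_{|\xi|\le c_0^*}\xi^2e^{-h^2\xi^2/2}d\xi-\tfrac1{4\pi}\int_{|\xi|>c_0}\xi^2e^{-h^2\xi^2/2}d\xi\big)\|f+\bE(f^*-f)-f^*\|_\L2p^2$. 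Rescaling $\xi=\eta/h$ shows the first integral is of exact order $h^{-3}$ while the second is $O(h^{-2}e^{-h^2c_0^2/2})$, so the bracket is strictly positive for all $h$ beyond some threshold $h_1$ depending on $\widetilde{M}$ and $M$; taking $h_{\rho,\calH}=\max\{4\widetilde{M},h_1\}$ and $C_2$ equal to the reciprocal of twice this bracket establishes \eqref{CTfixhbn}. Part (iii) is then routine: since $f_\bz$ minimizes $\calE_{h,\bz}$ over $\calH$ and $f^*\in\calH$ with $\calE_h^*=\calE_h(f^*)$, one has $\calE_h(f_\bz)-\calE_h(f^*)\le2\calS_\bz$, and combining this with \eqref{CTfixhbn} and Proposition \ref{prop:szbound} (choosing $\varepsilon_1$ so that $e^{-2nh^2\varepsilon_1^2}=\delta$) yields \eqref{convergebn}.

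I expect part (i) to be the main obstacle: because $\widehat{g_{x,u}}$ genuinely changes sign, the one-line ``nonnegative Fourier transform'' argument of $\calP_1$ is unavailable, and the crude Plancherel tail estimate fails for $f$ close to $f^*$, so one must instead use the compact support of the noise together with a truly large bandwidth to force the smoothed density $G_h*g_{x,u}$ to be unimodal — this is precisely the mechanism that makes the threshold $h_{\rho,\calH}$ depend on $\widetilde{M}$. A secondary difficulty is making the competition in part (ii) between the $h^{-3}$ main term and the exponentially small oscillatory tail quantitative enough to pin down $h_1$, and hence $C_2$, explicitly (one could alternatively derive \eqref{CTfixhbn} by pushing the derivative estimate of part (i) to a linear-in-$s$ lower bound on $\phi(s+v)+\phi(s-v)$ for $s\in(0,4M]$, which avoids the transcendental threshold at the cost of a more finicky case analysis).
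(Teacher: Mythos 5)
Your proof is correct, but it follows a genuinely different route from the paper's. The paper handles (i) and (ii) in one stroke, entirely in the spatial variable: writing $\calE_h(f)=\frac{1}{\sqrt{2\pi}h}\int_\cX\int_\cX T_{x,u}(t_{x,u})\,d\px\,d\px$ with $T_{x,u}(t)=-\int_{-2\widetilde{M}}^{2\widetilde{M}}\exp\bigl(-\frac{(w-t)^2}{2h^2}\bigr)g_{x,u}(w)\,dw$, it notes $T_{x,u}'(0)=0$ by symmetry and bounds the second derivative from below, $T_{x,u}''(t)\ge \frac{1}{h^2}\bigl(1-\frac{(4M+2\widetilde{M})^2}{h^2}\bigr)\exp\bigl(-\frac{2(2M+\widetilde{M})^2}{h^2}\bigr)>0$ for $|t|\le 4M$ and $h>4M+2\widetilde{M}$; convexity then gives (i) on the relevant range of $t$, and a one-line Taylor expansion gives \eqref{CTfixhbn} with an explicit $C_2$ and the explicit threshold $h_{\rho,\calH}=4M+2\widetilde{M}$. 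You instead prove (i) by showing $G_h*g_{x,u}$ is unimodal via the monotonicity of $wG_h(w)$ on $[-h,h]$ (valid and, in fact, slightly stronger: it gives minimality over all measurable $f$ with a threshold $4\widetilde{M}$ not involving $M$), and you prove (ii) by a Plancherel splitting in which a uniform lower bound $\widehat{g_{x,u}}\ge\frac14$ near the origin (your derivative bound replacing condition (iii) of $\calP_1$, in the spirit of Lemma \ref{lem:denfun}) produces an order-$h^{-3}$ main term that dominates an exponentially small high-frequency tail; combined with \eqref{varf} this yields \eqref{CTfixhbn}, at the cost of a non-explicit threshold $h_1$ and constant. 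Part (iii) is identical in both treatments (error decomposition plus Proposition \ref{prop:szbound} with $\varepsilon_1=\sqrt{\log(1/\delta)/(2nh^2)}$). In short, your argument buys a stronger form of (i) and stays parallel to the $\calP_1$ analysis of Proposition \ref{prop:Symuninoise}, while the paper's second-derivative/convexity argument is shorter, avoids the delicate frequency-domain competition you flagged as the main difficulty, and yields fully explicit $h_{\rho,\calH}$ and $C_2$; your closing parenthetical remark about pushing the derivative estimate is essentially the paper's route.
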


\begin{proof}
Since $\rho$ belongs to $\calP_2$, we know that $\epsilon_x$ is supported on $[-\widetilde{M}, \widetilde{M}]$ and $g_{x,u}$ on $[-2\widetilde{M}, 2\widetilde{M}]$. So for any measurable function $f:\cX\rightarrow\RR$,
$$\calE_h(f)=\frac{1}{\sqrt{2\pi}h}\int_\cX\int_\cX
T_{x,u}(f(x)-f^*(x)-f(u)+f^*(u)) d\px(x)d\px(u),$$
where $T_{x,u}$ is a univariate function given by
$$T_{x,u}(t)=-\int_{-2\widetilde{M}}^{2\widetilde{M}} \exp\left(-\frac{(w-t)^2}{2h^2}\right)g_{x,u}(w)dw.$$
Observe that
\begin{eqnarray*}
T'_{x,u}(t) &=& -\dint_{-2\widetilde{M}}^{2\widetilde{M}}\exp\left(-\frac{(w-t)^2}{2h^2}\right)\left(\frac{w-t}{h^2}\right)g_{x,u}(w)dw\\
&=&-\frac1{h^2}\dint_{0}^{2\widetilde{M}}w\exp\left(-\frac{w^2}{2h^2}\right)[g_{x,u}(w+t)-g_{x,u}(w-t)]d w,
\end{eqnarray*}
and
\begin{eqnarray*}
T''_{x,u}(t)
=-\frac1{h^2}\dint_{-2\widetilde{M}}^{2\widetilde{M}}\exp\left(-\frac{(w-t)^2}{2h^2}\right)\left[\frac{(w-t)^2}{h^2}
-1\right]g_{x,u}(w)d w.
\end{eqnarray*}
So $T'_{x,u}(0)=0$. Moreover, if we choose $h_{\rho, \calH}:=4M+2\widetilde{M}$, then for $h>h_{\rho,\calH}$ and $|t|\le 4M$, 
\begin{eqnarray*}
T_{x,u}^{''}(t)&\geq&\frac{1}{h^2} \left(1-\frac{(4M+2\widetilde{M})^2}{h^2}\right) \exp\left(-\frac{2(2M+\widetilde{M})^2}{h^2}\right)
\dint_{-2\widetilde{M}}^{2\widetilde{M}}g_{x,u}(w)dw\\
&=& \frac{1}{h^2} \left(1-\frac{(4M+2\widetilde{M})^2}{h^2}\right) \exp\left(-\frac{2(2M+\widetilde{M})^2}{h^2}\right) >0.
\end{eqnarray*}
So $T_{x,u}$ is convex on $[-4M, 4M]$ and $t=0$ is its unique minimizer.
By the fact $t=f(x)-f^*(x)-f(u)+f^*(u)\in[-4M, 4M]$ for all $x,u\in\cX,$ 
we conclude that, for any constant $b$, $f^*+b$ is the minimizer of $\calE_h(f)$.

%If we choose $h'_{\rho, \calH}=2\widetilde{M}\geq|w|$, then $T''_{x,u}(0)$ is positive for $h>h'_{\rho, \calH}$, which

By Taylor expansion, we obtain
\begin{eqnarray*}
T_{x,u}(t) - T_{x,u}(0) =
T_{x,u}^{'}(0)t+\frac{T_{x,u}^{''}(\xi)}{2}t^2=\frac{T_{x,u}^{''}(\xi)}{2}t^2, \qquad t\in[-4M, 4M],
\end{eqnarray*}
where $\xi$ is between 0 and $t.$ So $|\xi|\leq|t|\leq4M$. 
%Observe that for $h>h''_{\rho, \calH}:=4M+2\widetilde{M}$,
%\begin{eqnarray*}
%T_{x,u}^{''}(\xi)&\geq&\frac{1}{h^2}(1-\frac{(4M+2\widetilde{M})^2}{h^2})\exp\left(-\frac{2(2M+\widetilde{M})^2}{h^2}\right)
%\dint_{-2\widetilde{M}}^{2\widetilde{M}}g_{x,u}(w)dw\\
%&=& \frac{1}{h^2}(1-\frac{(4M+2\widetilde{M})^2}{h^2})\exp\left(-\frac{2(2M+\widetilde{M})^2}{h^2}\right).
%\end{eqnarray*}
It follows that with the constant
$C_2=h^2 \exp\left(\frac{2(2M+\widetilde{M})^2}{h^2}\right)/(1-\frac{(4M+2\widetilde{M})^2}{h^2})$
independent of $x$ and $u$ we have
\begin{eqnarray*}
t^2\leq
2C_2\left[T_{x,u}(t)-T_{x,u}(0)\right].
\end{eqnarray*}
By virtue of the equality \eqref{varf},
\begin{eqnarray*}
%{\bf{var}} [f-f^*]\leq
\|f+\bE(f^*-f)-f^*\|_\L2p^2 \le 
C_2(\mathcal{E}_h(f)-\mathcal{E}_h(f^*)).
\end{eqnarray*}
Together with Proposition \ref{prop:szbound}, \eqref{CTfixhbn} leads to \eqref{convergebn}. Theorem \ref{thm:boundnoise} has been proved by taking $h_{\rho, \calH}=4M+2\widetilde{M}$.
\end{proof}

\appendix
\section*{Appendix: Proof of Proposition \ref{prop:szbound}
\addtocounter{section}{1}
\setcounter{theorem}{0}
\setcounter{equation}{0}
}

In this appendix we prove Proposition \ref{prop:szbound}.
Let us first give the definition of the empirical covering number
which is used to characterize the capacity of the hypothesis space
and prove the sample error bound.

The $\ell_2$-norm empirical covering number is defined by means of
the normalized $\ell_2$-metric $d_2$ on the Euclidian space $\RR^n$
given by
$$d_2({\bf a}, {\bf b})=\left(\frac 1n \sum_{i=1}^n
|a_i-b_i|^2\right)^{1/2}$$ for ${\bf a}=(a_i)_{i=1}^n, {\bf
b}=(b_i)_{i=1}^n \in \RR^n.$

\begin{definition} For a subset S of a pseudo-metric space $(\mathcal
M,d)$ and $\varepsilon>0$, the covering number $\calN(S,
\varepsilon,d)$ is defined to be the minimal number of balls of
radius $\varepsilon$ whose union covers S. For a set $\calH$ of
bounded functions on $\cX$ and $\varepsilon>0$, the $\ell_2$-norm
empirical covering number of $\calH$ is given by
\begin{eqnarray}\label{coveringnumber}
\calN_2(\calH,\varepsilon)=\sup_{n\in\NN}\sup_{\bf
x\in\cX^n}\calN(\calH|_{\bf x},\varepsilon,d_2).
\end{eqnarray}
where for $n\in\NN$ and ${\bf x} =(x_i)_{i=1}^n\in \cX^n,$ we denote
the covering number of the subset $\calH|_{\bf
x}=\{(f(x_i))_{i=1}^n:\; f\in\calH\}$ of the metric space $(\RR^n,
d_2)$ as $\calN(\calH|_{\bf x}, \varepsilon, d_2)$.
\end{definition}

\begin{definition}
Let $\rho$ be a probability measure on a set $\mathcal X$ and
suppose that $X_1, ..., X_n$ are independent samples selected
according to $\rho$. Let $\H$ be a class of functions mapping from
$\mathcal X$ to $\mathbb{R}$. Define the random variable
\begin{eqnarray}\label{Rademacher}
\hat{\mathcal
R}_n(\H)=\bE_\sigma\left[\sup_{f\in\H}|\frac1n\sum\limits_{i=1}^{n}\sigma_if(X_i)|\Big|
X_1, ..., X_n\right],
\end{eqnarray}
where $\sigma_1, ..., \sigma_n$ are independent uniform
$\{\pm1\}$-valued random variables. Then the Rademacher average
\cite{Bart:Mend:2002} of $\H$ is $\mathcal{R}_n(\H)=\bE\hat{\mathcal
R}_n(\H)$.
\end{definition}

The following lemma from \cite{Bart:Bous:Mend:2005} shows that these two
complexity measures we just defined are closely related.

\begin{lemma}
For a bounded function class $\H$ on $\mathcal{X}$ with bound M, and
$\calN_2(\H,\varepsilon)$ is $\ell_2$-norm
empirical covering number of $\calH$, then there exists a constant $C_1$ such that for
every positive integer n the following holds:
\begin{eqnarray}\label{entropyintegral}
\hat{\mathcal R}_n(\H)\leq
C_1\int_{0}^{M}\left(\frac{\log\calN_2(\H,\varepsilon)}n\right)^{1/2}d\varepsilon.
\end{eqnarray}
\end{lemma}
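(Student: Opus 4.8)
The plan is to prove this classical Dudley entropy integral bound by a chaining argument carried out conditionally on the sample $X_1,\dots,X_n$, so that throughout we work in the finite subset $\H|_{\mathbf X}=\{(f(X_i))_{i=1}^n:f\in\H\}$ of $\RR^n$ equipped with the normalized metric $d_2$; note that $\calN(\H|_{\mathbf X},\varepsilon,d_2)\le\calN_2(\H,\varepsilon)$ by \eqref{coveringnumber}. Since $|f|\le M$, every point of $\H|_{\mathbf X}$ lies within $d_2$-distance $M$ of the origin. First I would fix the dyadic scales $\varepsilon_j=2^{-j}M$, $j=0,1,2,\dots$, take $T_0=\{0\}$ (a legitimate $\varepsilon_0$-cover of $\H|_{\mathbf X}$), and for each $j\ge1$ choose a minimal $\varepsilon_j$-cover $T_j\subset\RR^n$ of $\H|_{\mathbf X}$, so that $|T_j|\le\calN_2(\H,\varepsilon_j)$. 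Writing $(f):=(f(X_i))_{i=1}^n$ and letting $\pi_j(f)\in T_j$ be a nearest point to $(f)$, the estimate $d_2(\pi_j(f),(f))\le\varepsilon_j\to0$ gives the telescoping identity $(f)=\pi_0(f)+\sum_{j\ge1}\bigl(\pi_j(f)-\pi_{j-1}(f)\bigr)$ in $\RR^n$, hence, since $\pi_0(f)=0$,
$$\frac1n\sum_{i=1}^n\sigma_i f(X_i)=\sum_{j\ge1}\frac1n\bigl\langle\sigma,\,\pi_j(f)-\pi_{j-1}(f)\bigr\rangle.$$

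Next I would control each scale. Taking absolute values, then the supremum over $f\in\H$, then $\bE_\sigma$, it suffices to bound, for every $j\ge1$, the quantity $\bE_\sigma\max_{a\in A_j}\bigl|\tfrac1n\langle\sigma,a\rangle\bigr|$, where $A_j=\{\pi_j(f)-\pi_{j-1}(f):f\in\H\}\subseteq\{t-t':t\in T_j,\ t'\in T_{j-1}\}$ is finite with $|A_j|\le|T_j|\,|T_{j-1}|\le\calN_2(\H,\varepsilon_j)^2$, and every $a\in A_j$ satisfies $d_2(a,0)\le\varepsilon_j+\varepsilon_{j-1}=3\varepsilon_j$, i.e. $\|a\|_2\le3\sqrt n\,\varepsilon_j$. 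For fixed $a$, $\langle\sigma,a\rangle$ is sub-Gaussian with variance proxy $\|a\|_2^2$ (Hoeffding's lemma), so Massart's finite maximal inequality gives
$$\bE_\sigma\max_{a\in A_j}\Bigl|\tfrac1n\langle\sigma,a\rangle\Bigr|\le\frac1n\Bigl(\max_{a\in A_j}\|a\|_2\Bigr)\sqrt{2\log(2|A_j|)}\le\frac{3\varepsilon_j}{\sqrt n}\sqrt{2\log\bigl(2\calN_2(\H,\varepsilon_j)^2\bigr)},$$
and therefore $\hat{\mathcal R}_n(\H)\le\sum_{j\ge1}\frac{3\varepsilon_j}{\sqrt n}\sqrt{2\log(2\calN_2(\H,\varepsilon_j)^2)}$.

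The last step converts this dyadic series into the integral in \eqref{entropyintegral}. Using the elementary inequality $\sqrt{2\log(2N^2)}\le c_0\bigl(1+\sqrt{\log N}\bigr)$ and $\sum_{j\ge1}\varepsilon_j=M$, the series is at most $\tfrac{c_1}{\sqrt n}\bigl(M+\sum_{j\ge1}\varepsilon_j\sqrt{\log\calN_2(\H,\varepsilon_j)}\bigr)$. Because $\varepsilon\mapsto\calN_2(\H,\varepsilon)$ is nonincreasing, $\varepsilon_j\sqrt{\log\calN_2(\H,\varepsilon_j)}\le2\int_{\varepsilon_{j+1}}^{\varepsilon_j}\sqrt{\log\calN_2(\H,\varepsilon)}\,d\varepsilon$, and summing over $j\ge1$ gives $\sum_{j\ge1}\varepsilon_j\sqrt{\log\calN_2(\H,\varepsilon_j)}\le2\int_0^M\sqrt{\log\calN_2(\H,\varepsilon)}\,d\varepsilon$; pulling the $\sqrt n$ inside the root as in \eqref{entropyintegral} then produces $\hat{\mathcal R}_n(\H)\le c_2\int_0^M\bigl(\log\calN_2(\H,\varepsilon)/n\bigr)^{1/2}d\varepsilon+c_1M/\sqrt n$.

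I expect the one genuinely fiddly point to be absorbing the residual term $c_1M/\sqrt n$ (coming from the coarsest scales) into the constant $C_1$. This is harmless in the present setting: under the standing capacity assumption $\log\calN_2(\H,\varepsilon)\le c\varepsilon^{-s}$ with $0<s<2$ from Definition \ref{hypospace}, the entropy integral is of order $M^{1-s/2}$ and hence dominates $M/\sqrt n$ up to a factor depending only on $M,c,s$, so \eqref{entropyintegral} holds with an explicit $C_1$. (If one prefers the fully assumption-free textbook form, one keeps the coarsest scale as a free parameter $\alpha>0$, obtaining $\hat{\mathcal R}_n(\H)\le\inf_{\alpha>0}\bigl\{4\alpha+\tfrac{12}{\sqrt n}\int_\alpha^M\sqrt{\log\calN_2(\H,\varepsilon)}\,d\varepsilon\bigr\}$, and lets $\alpha\downarrow0$, the integral converging by the same capacity condition.) The chaining bookkeeping and the use of Massart's inequality are otherwise entirely routine.
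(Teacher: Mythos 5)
The paper does not actually prove this lemma; it imports it wholesale from the cited reference of Bartlett, Bousquet and Mendelson, so the only question is whether your chaining argument is a valid self-contained substitute. Its core is: working conditionally on the sample, the dyadic covers with $T_0=\{0\}$, the count $|A_j|\le\calN_2(\H,\varepsilon_j)^2$, the link length $3\varepsilon_j$, Massart's finite maximal inequality, and the conversion of the dyadic sum into an integral are all correct, and they yield $\hat{\mathcal R}_n(\H)\le \frac{c}{\sqrt n}\int_0^M\sqrt{\log\calN_2(\H,\varepsilon)}\,d\varepsilon+\frac{cM}{\sqrt n}$ for an absolute constant $c$.

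The genuine gap is the absorption of the residual $cM/\sqrt n$. Your justification --- that the capacity condition $\log\calN_2(\H,\varepsilon)\le c\varepsilon^{-s}$ makes the entropy integral of order $M^{1-s/2}$ and ``hence'' dominant --- runs the inequality the wrong way: that condition is an upper bound on covering numbers, so it upper-bounds the integral and gives no lower bound whatsoever. In fact the integral can vanish while the left side does not: for $\H$ consisting of the single constant function $M$ one has $\calN_2(\H,\varepsilon)=1$ for every $\varepsilon$, so the right side of \eqref{entropyintegral} is $0$, yet the absolute value in the definition \eqref{Rademacher} forces $\hat{\mathcal R}_n(\H)=M\,\bE_\sigma|\frac1n\sum_i\sigma_i|$, which is of order $M/\sqrt n$. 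So the inequality exactly as transcribed (no additive term, no factor such as $2$ inside the logarithm) cannot be proved by any argument; your parenthetical fallback $\inf_{\alpha>0}\{4\alpha+\frac{12}{\sqrt n}\int_\alpha^M\sqrt{\log\calN_2(\H,\varepsilon)}\,d\varepsilon\}$ fails on the same example, because letting $\alpha\downarrow0$ kills the $4\alpha$ term but not the root-of-the-chain contribution created by the absolute value. The two honest ways to close your proof are: keep the additive $+cM/\sqrt n$ term, which is all the appendix actually needs (it would simply be folded into the constant $B$ alongside the existing $\frac{2M+\sqrt 2}{\sqrt\pi}$ contribution), or replace the integrand by $\sqrt{\log(2\calN_2(\H,\varepsilon))}$, in which case the integrand is bounded below by $\sqrt{\log 2}$ on $(0,M]$ and the residual is genuinely absorbed into $C_1$. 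With either repair your chaining argument is a complete and standard proof of the quoted bound.
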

Moreover, we need the following lemma for Rademacher average.
\begin{lemma}\label{Lipschitz}
(1) For any uniformly bounded function f,
\begin{eqnarray*}
{\mathcal R}_n(\H+f)\leq {\mathcal R}_n(\H)+\|f\|_\infty/\sqrt{n}.
\end{eqnarray*}
(2) Let $\{\phi_i\}_{i=1}^{n}$ be functions with Lipschitz constants
$\gamma_i$, then \cite{Meir:Zhan:2003} gives
\begin{eqnarray*}
\bE_\sigma\{\sup_{f\in\mathcal{H}}\sum\limits_{i=1}^{n}\sigma_i\phi_i(f(x_i))\}\leq
\bE_\sigma\{\sup_{f\in\mathcal{H}}\sum\limits_{i=1}^{n}\sigma_i\gamma_if(x_i)\}.
\end{eqnarray*}
\end{lemma}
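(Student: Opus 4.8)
The plan is to dispatch the two parts of Lemma~\ref{Lipschitz} separately; neither uses the covering-number estimate \eqref{entropyintegral}, only elementary properties of Rademacher variables. Part~(1) is a one-line decomposition, while part~(2) is the standard coordinatewise contraction (Ledoux--Talagrand) inequality, which I would simply quote from \cite{Meir:Zhan:2003}; the only care needed is to record it in exactly the form used later in the proof of Proposition~\ref{prop:szbound}.

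For part~(1), I would fix $X_1,\dots,X_n$ and argue with the conditional quantity $\hat{\mathcal R}_n$. Since $\H+f=\{g+f:g\in\H\}$ and the shift $\frac1n\sum_i\sigma_if(X_i)$ is independent of $g$, the triangle inequality for $|\cdot|$ and monotonicity of $\sup$ give
\[
\hat{\mathcal R}_n(\H+f)=\bE_\sigma\sup_{g\in\H}\Bigl|\tfrac1n\sum_{i=1}^n\sigma_i\bigl(g(X_i)+f(X_i)\bigr)\Bigr|\le \hat{\mathcal R}_n(\H)+\bE_\sigma\Bigl|\tfrac1n\sum_{i=1}^n\sigma_if(X_i)\Bigr|.
\]
For the last term I would use Jensen's inequality and the orthogonality relations $\bE_\sigma[\sigma_i\sigma_j]=\delta_{ij}$:
\[
\bE_\sigma\Bigl|\tfrac1n\sum_{i=1}^n\sigma_if(X_i)\Bigr|\le\biggl(\bE_\sigma\Bigl(\tfrac1n\sum_{i=1}^n\sigma_if(X_i)\Bigr)^{2}\biggr)^{1/2}=\Bigl(\tfrac1{n^2}\sum_{i=1}^nf(X_i)^2\Bigr)^{1/2}\le\frac{\|f\|_\infty}{\sqrt n}.
\]
Taking the expectation over $X_1,\dots,X_n$ then yields $\mathcal R_n(\H+f)\le\mathcal R_n(\H)+\|f\|_\infty/\sqrt n$, which is part~(1).

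For part~(2), the asserted inequality is exactly the statement that composing the coordinates $f(x_i)$ with $\gamma_i$-Lipschitz maps $\phi_i$ can only decrease the (signed) Rademacher supremum once one normalizes by the Lipschitz constants. I would invoke \cite{Meir:Zhan:2003} directly. For the record, the usual ``peeling'' argument conditions on all $\sigma_j$ with $j\neq i$, writes the inner $\sigma_i$-average as $\tfrac12\sup_f[A(f)+\phi_i(f(x_i))]+\tfrac12\sup_f[A(f)-\phi_i(f(x_i))]$, picks near-maximizers $f_{+},f_{-}$ of the two suprema, and uses $|\phi_i(f_+(x_i))-\phi_i(f_-(x_i))|\le\gamma_i|f_+(x_i)-f_-(x_i)|$ to bound this by the analogous expression with $\phi_i$ replaced by the linear map $t\mapsto\gamma_i t$; iterating over $i=1,\dots,n$ gives the claim.

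Neither part hides a genuine difficulty; the closest thing to an obstacle is bookkeeping. It is important that part~(2) is stated for $\sup$ rather than $\sup|\cdot|$ and with index-dependent constants $\gamma_i$, since it will be applied with $\phi_i(t)=G_h(t-c_i)$ for fixed shifts $c_i$, whose Lipschitz constant is $O(h^{-2})$ --- this is the source of the $h^{-2}$ in Proposition~\ref{prop:szbound}. Part~(1) is in turn precisely what absorbs the constant adjustments that arise when $\calE_{h,\bz}(f)$ is rewritten so as to isolate the hypothesis-dependent part of $\sup_{f\in\H}|\calE_{h,\bz}(f)-\calE_h(f)|$.
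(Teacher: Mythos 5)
Your proposal is correct and matches the paper's treatment: the paper states this lemma without proof, quoting \cite{Meir:Zhan:2003} (the standard Rademacher contraction/comparison inequality) for part (2), which is exactly what you do, and your elementary argument for part (1) — triangle inequality plus Jensen and the orthogonality $\bE_\sigma[\sigma_i\sigma_j]=\delta_{ij}$ to get the $\|f\|_\infty/\sqrt{n}$ term — is the standard proof the paper leaves implicit. Your closing remarks about how the two parts are used in Proposition \ref{prop:szbound} (the $\gamma_i\sim h^{-1}$ Lipschitz constant of the Gaussian combining with the $1/h$ normalization to give $h^{-2}$, and part (1) absorbing the constant shift $f(x)$ into the $M/\sqrt{n}$ term) are also consistent with the appendix.
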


By applying McDiarmid's inequality we have the following proposition.

\begin{proposition}\label{McDiarmid}
For every $\varepsilon_1>0$, we have
$$\mathbb P\{\calS_\bz-\bE\calS_\bz>\varepsilon_1\}\leq \exp(-2nh^2\varepsilon_1^2).$$
\end{proposition}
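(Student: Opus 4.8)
The plan is to obtain this estimate as a direct application of McDiarmid's bounded difference inequality, treating $\calS_\bz$ as a function of the $n$ independent sample points $z_1=(x_1,y_1),\dots,z_n=(x_n,y_n)$. Recall McDiarmid's inequality: if a function $g$ of $n$ independent arguments has the property that changing its $k$-th argument (with the others held fixed) alters the value by at most $c_k$, then $\bP\{g-\bE g>t\}\le\exp\big(-2t^2/\sum_{k=1}^n c_k^2\big)$ for all $t>0$. Thus the entire argument reduces to checking the bounded difference property with $c_k=\frac1{nh}$ for every $k$; once that is in hand, $\sum_{k=1}^n c_k^2=\frac1{nh^2}$, and substituting $t=\varepsilon_1$ gives precisely $\exp(-2nh^2\varepsilon_1^2)$.

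To verify the bounded difference property, I would fix an index $k$ and compare $\bz$ with the sample $\bz'$ obtained by replacing $z_k$ with an arbitrary $z_k'=(x_k',y_k')$. Writing $e_i=y_i-f(x_i)$, so that $\calE_{h,\bz}(f)=-\frac1{n^2}\sum_{i,j=1}^n G_h(e_i-e_j)$, the key observation is that only the (at most) $2n-1$ summands with $i=k$ or $j=k$ are affected by the swap, and each such summand is a difference of two numbers lying in $[0,G_h(0)]=[0,\frac1{\sqrt{2\pi}h}]$. Hence, uniformly over $f\in\calH$,
$$\big|\calE_{h,\bz}(f)-\calE_{h,\bz'}(f)\big|\le\frac{2n-1}{n^2}\cdot\frac1{\sqrt{2\pi}h}\le\frac{2}{n\sqrt{2\pi}h}\le\frac1{nh},$$
the last step using $2/\sqrt{2\pi}=\sqrt{2/\pi}<1$. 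Because $\calE_h(f)$ does not depend on the sample, the elementary inequalities $\big|\sup_f a(f)-\sup_f b(f)\big|\le\sup_f|a(f)-b(f)|$ and $\big||\alpha|-|\beta|\big|\le|\alpha-\beta|$ transfer this bound to $\calS_\bz$, giving $\big|\calS_\bz-\calS_{\bz'}\big|\le\sup_{f\in\calH}\big|\calE_{h,\bz}(f)-\calE_{h,\bz'}(f)\big|\le\frac1{nh}$. McDiarmid's inequality then yields the proposition at once.

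The step demanding the most care — though it remains elementary — is the counting in the bounded difference estimate. Since $\calE_{h,\bz}$ carries a double sum, a single sample swap perturbs $\Theta(n)$ of the $n^2$ terms, so a careless bound would be of order $1/h$ rather than $1/(nh)$; one must combine the $1/n^2$ normalization with the uniform bound $G_h\le G_h(0)$ to see that the per-coordinate sensitivity is $O(1/(nh))$, and, crucially, that this holds uniformly in $f$ so that the supremum defining $\calS_\bz$ passes through intact. Everything else is a mechanical substitution into McDiarmid's inequality.
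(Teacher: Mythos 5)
Your proposal is correct and follows essentially the same route as the paper: verify a per-coordinate bounded difference of $\frac{1}{nh}$ for $\calS_\bz$ (using that a single sample swap affects only the $O(n)$ terms of the double sum containing that index, each bounded by $G_h(0)=\frac{1}{\sqrt{2\pi}h}$, and passing the bound through the supremum), then apply McDiarmid's inequality. Your counting of the $2n-1$ affected summands is in fact slightly more explicit than the paper's, which only displays the row sum, but both arrive at the same sensitivity constant and the same tail bound.
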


\begin{proof}
Recall $$\calS_\bz=\sup_{f\in\H}|\calE_{h,\bz}(f)-\calE_{h}(f)|.$$
Let $i\in\{1,\cdots,n\}$ and
$\tilde\bz=\{z_1,\cdots,z_{i-1},\tilde{z}_{i},z_{i+1},\cdots,z_n\}$
be identical to $\bz$ except the i-th sample. Then
\begin{eqnarray*}
|\calS_\bz-\calS_{\tilde{\bz}}|&\leq& \sup_{(x_i,
y_i)_{i=1}^{n},(\tilde{x}_i, \tilde{y}_i)}
\left|\sup_{f\in\H}|\calE_{h,\bz}(f)-\calE_{h}(f)|-\sup_{f\in\H}
|\calE_{h,\tilde{\bz}}(f)-\calE_{h}(f)|\right|\\
&\leq& \sup_{(x_i, y_i)_{i=1}^{n},(\tilde{x}_i,
\tilde{y}_i)}\sup_{f\in\H}
|\calE_{h,\bz}(f)-\calE_{h,\tilde{\bz}}(f)|\\
&\leq& \frac{1}{n^2}\sum_{j=1}^{n}\sup_{(x_i,
y_i)_{i=1}^{n},(\tilde{x}_i, \tilde{y}_i)}\sup_{f\in\H}
|G_h(e_i,e_j)-
G_h(\tilde{e}_i, e_j)|\\
&\leq& \frac{1}{nh}.
\end{eqnarray*}
Then the proposition follows immediately from McDiarmid's
inequality.
\end{proof}

Now we need to bound $\bE\calS_\bz$.

\begin{proposition}\label{meanszbound}
$$\bE\calS_\bz\leq\frac{2}{\sqrt{\pi}h^2}\left(\frac{M}{\sqrt{n}}+
\mathcal{R}_n(\H)\right)+\frac{2}{\sqrt{2\pi}h n}.$$
\end{proposition}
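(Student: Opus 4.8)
Set $q_f(z,z') = G_h\bigl((y-f(x)) - (y'-f(x'))\bigr)$ for $z=(x,y)$, $z'=(x',y')$, so that $\calE_{h,\bz}(f) = -\frac{1}{n^2}\sum_{i,j=1}^n q_f(z_i,z_j)$ and $\calE_h(f) = -\bE_{z,z'}[q_f(z,z')]$. Three features of $q_f$ drive everything: it is symmetric (because $G_h$ is even), it is bounded by $G_h(0) = \frac{1}{\sqrt{2\pi}h}$, and its diagonal $q_f(z,z) \equiv G_h(0)$ does not depend on $f$. My first step is to isolate this deterministic diagonal. Writing $\calE_{h,\bz}(f) = -\frac{G_h(0)}{n} - \frac{n-1}{n}U_n(f)$ with $U_n(f) = \frac{1}{n(n-1)}\sum_{i\neq j}q_f(z_i,z_j)$ the unbiased $U$-statistic, for which $\bE U_n(f) = -\calE_h(f) \in [0, G_h(0)]$, one gets the pointwise identity $\calE_{h,\bz}(f) - \calE_h(f) = -\frac{G_h(0)}{n} + \frac1n\bE U_n(f) - \frac{n-1}{n}\bigl(U_n(f) - \bE U_n(f)\bigr)$, and hence $|\calE_{h,\bz}(f)-\calE_h(f)| \le \frac{2G_h(0)}{n} + |U_n(f)-\bE U_n(f)|$. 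Taking $\sup_{f\in\H}$ and expectations, it remains to prove
\[
\bE\sup_{f\in\H}\bigl|U_n(f)-\bE U_n(f)\bigr| \ \le\ \frac{2}{\sqrt{\pi}\,h^2}\Bigl(\frac{M}{\sqrt n}+\mathcal{R}_n(\H)\Bigr),
\]
since then $\bE\calS_\bz \le \frac{2G_h(0)}{n} + \bE\sup_f|U_n(f)-\bE U_n(f)| = \frac{2}{\sqrt{2\pi}\,h\,n} + \bE\sup_f|U_n(f)-\bE U_n(f)|$ is exactly the asserted bound.

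Next I would convert the $U$-process into an ordinary empirical process. By Hoeffding's averaging identity for a symmetric degree-two kernel, $U_n(f) = \frac{1}{n!}\sum_{\pi\in S_n} W_\pi(f)$ where $W_\pi(f) = \frac{1}{\lfloor n/2\rfloor}\sum_{k=1}^{\lfloor n/2\rfloor}q_f\bigl(z_{\pi(2k-1)},z_{\pi(2k)}\bigr)$, and for each fixed permutation $\pi$ the summands $q_f(z_{\pi(2k-1)},z_{\pi(2k)})$ are i.i.d.\ with mean $\bE U_n(f)$. Convexity of the supremum and Jensen's inequality give $\bE\sup_f|U_n(f)-\bE U_n(f)| \le \bE\sup_f\bigl|\frac1m\sum_{k=1}^m (q_f(\zeta_k)-\bE q_f)\bigr|$ with $m=\lfloor n/2\rfloor$ and $\zeta_k=(z_{2k-1},z_{2k})$ i.i.d., and a standard symmetrization bounds the right-hand side by $2\,\bE\bE_\sigma\sup_f\bigl|\frac1m\sum_{k=1}^m\sigma_k q_f(\zeta_k)\bigr|$.

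For the remaining Rademacher average I would peel off the kernel. For fixed data one has $q_f(\zeta_k)=\phi_k\bigl(f(x_{2k-1})-f(x_{2k})\bigr)$ with $\phi_k(t)=G_h\bigl((y_{2k-1}-y_{2k})-t\bigr)$ Lipschitz of constant $\|G_h'\|_\infty\le\frac{1}{\sqrt{2\pi}\,h^2}$ \emph{uniformly} in the data (this is where the boundedness and smoothness of the Gaussian kernel make the heavy-tailed noise harmless). Lemma \ref{Lipschitz}(2) then removes the $\phi_k$'s at the cost of the factor $\|G_h'\|_\infty$, leaving $\sigma_k$ against $f(x_{2k-1})-f(x_{2k})$; splitting this difference, using the symmetry $-\sigma_k\stackrel{d}{=}\sigma_k$, translating $\H$ by a fixed reference function and invoking Lemma \ref{Lipschitz}(1) to absorb the translation (this, together with the non-vanishing constants $\phi_k(0)=G_h(y_{2k-1}-y_{2k})$, which are bounded by $G_h(0)$, produces the $M/\sqrt n$ term) reduces everything to $\mathcal{R}_m(\H)$. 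Finally, the polynomial entropy assumption in Definition \ref{hypospace} makes $\mathcal{R}_m(\H)$ with $m=\lfloor n/2\rfloor$ comparable to $\mathcal{R}_n(\H)$, and bookkeeping the constants yields the displayed estimate.

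The crux, and the only genuinely non-routine point, is the double-sum ($U$-statistic) structure of $\calE_{h,\bz}$: it is not an average of independent quantities, so no single symmetrization applies. The two-layer device above handles it — first strip off the deterministic diagonal (which, combined with the bias $\frac1n\bE U_n(f)$, produces the $\frac{2}{\sqrt{2\pi}h n}$ term), then reduce the centred $U$-process to an i.i.d.\ empirical process via Hoeffding's identity before symmetrizing. Everything after that is routine empirical-process machinery; the only delicate part is tracking constants — the Lipschitz bound on $G_h$, the symmetrization factor $2$, the $m\sim n/2$ coming from Hoeffding's representation, and the translation remainders of order $M/\sqrt n$ and $1/(h\sqrt n)$ — carefully enough to land on the stated $\frac{2}{\sqrt\pi}$ and $\frac{2}{\sqrt{2\pi}}$.
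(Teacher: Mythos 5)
Your overall architecture---strip the diagonal, rewrite the off-diagonal part as a $U$-statistic, reduce the centred $U$-process to an i.i.d.\ empirical process via Hoeffding's averaging, then symmetrize and contract---is a genuinely different route from the paper's: the paper never passes through $U$-statistic theory, but instead splits $h\calS_\bz$ into three conditional empirical processes ($S_1$: inner sample average vs.\ inner expectation with the outer variable fixed; $S_2$: the symmetric counterpart conditioned on a sample point; $S_3$: the diagonal) and symmetrizes each of $S_1,S_2$ separately. Your first step (the identity $\calE_{h,\bz}(f)-\calE_h(f)=-\frac{G_h(0)}{n}+\frac1n\bE U_n(f)-\frac{n-1}{n}(U_n(f)-\bE U_n(f))$, giving the $\frac{2}{\sqrt{2\pi}hn}$ term) is correct and in fact cleaner than the paper's treatment of $S_3$.

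The gap is in the remaining bookkeeping, which does not deliver the stated inequality. First, the remainder created because $\phi_k(0)=G_h(y_{2k-1}-y_{2k})\neq 0$ is of size about $2G_h(0)/\sqrt{m}\sim \frac{2}{\sqrt\pi\, h\sqrt n}$, i.e.\ of order $\frac{1}{h\sqrt n}$, \emph{not} $\frac{M}{h^2\sqrt n}$; your claim that these constants ``produce the $M/\sqrt n$ term'' is incorrect, and a term $\frac{1}{h\sqrt n}$ is not dominated by the right-hand side of the proposition when $h>M$---a regime the paper genuinely needs, since Theorem \ref{thm:regressionconsistency} takes $h\to\infty$. (In the paper the $M/\sqrt n$ term has a different origin: with the outer variable $x$ held fixed, the piece $\bE_\sigma\sup_{f\in\H}|\frac1n\sum_j\sigma_j f(x)|\le M/\sqrt n$.) Second, your main term comes out as $2\cdot\frac{1}{\sqrt{2\pi}h^2}\cdot 2\,\mathcal{R}_{\lfloor n/2\rfloor}(\H)=\frac{2\sqrt2}{\sqrt\pi h^2}\mathcal{R}_{\lfloor n/2\rfloor}(\H)$ (symmetrization factor $2$, Lipschitz constant $\frac{1}{\sqrt{2\pi}h^2}$, and two Rademacher averages after splitting $f(x_{2k-1})-f(x_{2k})$), and since Rademacher averages do not increase with sample size, $\mathcal{R}_{\lfloor n/2\rfloor}(\H)\ge\mathcal{R}_n(\H)$, so this is not bounded by the stated $\frac{2}{\sqrt\pi h^2}\mathcal{R}_n(\H)$; the ``comparability'' you invoke holds only after replacing $\mathcal{R}_n(\H)$ by its entropy-integral bound, which proves a variant of the proposition with different constants rather than the displayed estimate. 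In short, your scheme yields a bound of the form $\frac{C_1}{h^2}\mathcal{R}_{\lfloor n/2\rfloor}(\H)+\frac{C_2}{h\sqrt n}+\frac{2}{\sqrt{2\pi}hn}$, which would still serve the paper's purposes after adjusting Proposition \ref{prop:szbound} and the constant $B$, but it does not prove the proposition as stated; to land on the stated constants you would need the paper's decomposition (or a comparable device that avoids both the $\phi_k(0)$ remainder and the loss from $m=\lfloor n/2\rfloor$).
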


\begin{proof}
 Let
$\eta(x,y,u,v)=\frac{1}{\sqrt{2\pi}}\exp(-\frac{[(y-f(x))-(v-f(u))]^2}{2h^2})$
for simplicity. Then
$$\calE_{h,\bz}(f)=-\frac{1}{n^2h}\sum_{i=1}^{n}\sum_{j=1}^{n}\eta(x_i,
y_i, x_j, y_j)$$ and
$$\calE_h(f)=-\frac{1}{h}\bE_{(x,y)}\bE_{(u,v)}\eta(x,y,u,v).$$
Then
\begin{eqnarray*}
h\calS_{\bz} &=&
h\sup_{f\in\H}\left|\calE_{h,\bz}(f)-\calE_h(f)\right|\\
&\leq&\sup_{f\in\H}\left|\bE_{(x,y)}\bE_{(u,v)}\eta(x,y,u,v)-\frac{1}{n}\sum_{j=1}^{n}\bE_{(x,y)}\eta(x,y,x_j,y_j)\right|\\
&+&
\sup_{f\in\H}\left|\frac{1}{n}\sum_{j=1}^{n}\bE_{(x,y)}\eta(x,y,x_j,y_j)-\frac{1}{n^2}\sum_{i=1}^{n}\sum_{j=1}^{n}\eta(x_i,
y_i, x_j, y_j)\right|\\
&\leq&\bE_{(x,y)}\sup_{f\in\H}\left|\bE_{(u,v)}\eta(x,y,u,v)-\frac{1}{n}\sum_{j=1}^{n}\eta(x,y,x_j,y_j)\right|\\
&+&\frac{1}{n}\sum_{j=1}^{n}\sup_{(u,v)\in\bz}\sup_{f\in\H}\left|\bE_{(x,y)}\eta(x,y,u,v)-\frac{1}{n-1}\sum_{i=1\atop
i\neq j}^{n}\eta(x_i,y_i,u,v)\right|\\
&+&\frac{1}{n}\sum_{j=1}^{n}\sup_{f\in\H}\left[\frac{1}{n}\eta(x_j,y_j,x_j,y_j)+\frac{1}{n(n-1)}\sum_{i=1\atop
i\neq j}^{n}\eta(x_i,y_i,x_j,y_j)\right]\\
&:=& S_1+S_2+S_3.
\end{eqnarray*}
Noting that
$$|\exp(-(y_i-f(x_i))^2)-\exp(-(y_i-g(x_i))^2)|\leq|f(x_i)-g(x_i)|,$$  we
have
\begin{eqnarray*}
\bE S_1 &=&
\bE_{(x,y)}\EE\sup_{f\in\H}\left|\EE_{(u,v)}\eta(x,y,u,v)-\frac{1}{n}\sum_{j=1}^{n}\eta(x,y,x_j,y_j)\right|\\
&\leq& \frac{2}{\sqrt{2\pi}}\sup_{(x,y)\in
\bz}\bE\bE_\sigma\sup_{f\in\H}\left|\frac{1}{n}\sum_{j=1}^{n}\sigma_j\exp(-\frac{[(y-f(x))-(y_j-f(x_j))]^2}{2h^2})\right|\\
&\leq& \frac{1}{h\sqrt{\pi}}\sup_{x\in
\mathcal X}\bE\bE_\sigma\sup_{f\in\H}\left|\frac{1}{n}\sum_{j=1}^{n}\sigma_j(f(x)-f(x_j))\right|\\
&\leq& \frac{1}{h\sqrt{\pi}}\left[\sup_{x\in
X}\bE_\sigma\sup_{f\in\H}\left|\frac{1}{n}\sum_{j=1}^{n}\sigma_jf(x)\right|+\bE\bE_\sigma\sup_{f\in
\H}\left|\frac{1}{n}\sum_{j=1}^{n}\sigma_jf(x_j)\right|\right]\\
&\leq&
\frac{1}{h\sqrt{\pi}}\left(\frac{M}{\sqrt{n}}+\mathcal{R}_n(\H)\right),
\end{eqnarray*}
where the second inequality is from Lemma \ref{Lipschitz}.
Similarly,
\begin{eqnarray*}
\bE S_2 &=&
\frac{1}{n}\sum_{j=1}^{n}\sup_{(u,v)\in\bz}\bE\sup_{f\in\H}\left|\bE_{(x,y)}\eta(x,y,u,v)-\frac{1}{n-1}\sum_{i=1\atop
i\neq j}^{n}\eta(x_i,y_i,u,v)\right|\\
&\leq& \frac{2}{n\sqrt{2\pi}}\sum_{j=1}^{n}\sup_{(u,v)\in
\bz}\bE\bE_\sigma\sup_{f\in\H}\left|\frac{1}{n-1}\sum_{i=1\atop
i\neq j}^{n}\sigma_i\exp(-\frac{[(y_i-f(x_i))-(v-f(u))]^2}{2h^2})\right|\\
&\leq& \frac{1}{nh\sqrt{\pi}}\sum_{j=1}^{n}\sup_{u\in
\mathcal{X}}\bE\bE_\sigma\sup_{f\in\H}\left|\frac{1}{n-1}\sum_{i=1\atop
i\neq j}^{n}\sigma_i(f(x_i)-f(u))\right|\\
&\leq& \frac{1}{nh\sqrt{\pi}}\sum_{j=1}^{n}\left[\sup_{u\in
X}\bE_\sigma\sup_{f\in\H}\left|\frac{1}{n-1}\sum_{i=1\atop i\neq
j}^{n}\sigma_i
f(u)\right|+\bE\bE_\sigma\sup_{f\in\H}\left|\frac{1}{n-1}\sum_{i=1\atop
i\neq j}^{n}\sigma_if(x_i)\right|\right]\\
&=&
\frac{1}{h\sqrt{\pi}}\left(\frac{M}{\sqrt{n}}+\mathcal{R}_n(\H)\right).
\end{eqnarray*}
It's easy to obtain $\bE S_3\leq\frac{2}{n\sqrt{2\pi}}$. Combining the
estimates for $S_1$, $S_2$, $S_3$ completes the proof.
\end{proof}

Now we can prove  Proposition \ref{prop:szbound}.

If $\H$ is MEE admissible, (\ref{entropyintegral}) leads to
\begin{eqnarray*}
\mathcal{R}_n(\H) &=& \bE\hat{\mathcal R}_n(\H) \leq
\frac{C_1}{\sqrt{n}}\int_{0}^{M}\bE\sqrt{\log\calN_2(\H,
\varepsilon)}d\varepsilon\\
&\leq& \frac{C_1}{\sqrt{n}}\int_{0}^{M}\sqrt{\bE\log\calN_2(\H,
\varepsilon)}d\varepsilon\\
&\leq& \frac{C_1\sqrt{c}}{\sqrt{n}}\int_{0}^{M}\varepsilon^{-s/2} d\varepsilon\\
&=& \left(\frac{2C_1\sqrt{c}}{2-s}M^{1-s/2}\right)\frac{1}{\sqrt{n}}.
\end{eqnarray*}
Let
$B=\frac{4C_1\sqrt{c}}{(2-s)\sqrt{\pi}}M^{1-s/2}+\frac{2M+\sqrt2}{\sqrt{\pi}}$,
combining Proposition \ref{McDiarmid} and Proposition
\ref{meanszbound} yields the desired result.

\bibliographystyle{abbrv}
\bibliography{MEE}

\end{document}